\documentclass{article}

\usepackage[preprint]{neurips_2026}

\usepackage[utf8]{inputenc} 
\usepackage[T1]{fontenc}    
\usepackage{hyperref}       
\usepackage{url}            
\usepackage{booktabs}       
\usepackage{amsfonts}       
\usepackage{nicefrac}       
\usepackage{microtype}      
\usepackage{xcolor}         

\usepackage{amsmath}
\usepackage{amssymb}
\usepackage{mathtools}
\usepackage{amsthm}
\usepackage{subfigure}
\usepackage{caption}
\usepackage{subcaption}
\usepackage{algorithmic}
\usepackage{algorithm}

\theoremstyle{plain}
\newtheorem{theorem}{Theorem}[section]
\newtheorem{proposition}[theorem]{Proposition}
\newtheorem{lemma}[theorem]{Lemma}

\theoremstyle{definition}
\newtheorem{definition}[theorem]{Definition}
\newtheorem{assumption}[theorem]{Assumption}
\theoremstyle{remark}
\newtheorem{remark}[theorem]{Remark}
\newtheorem{claim}[theorem]{Claim}

\title{Large Dimensional Kernel Ridge Regression: Extending to Product Kernels}

\author{%
  Yang Zhou \\
  Department of Statistics and Data Science\\
  Tsinghua University\\
  Beijing, 100084, China\\
  \texttt{yzhou24@mails.tsinghua.edu.cn} \\
   \And
   Yicheng Li \thanks{Yang Zhou and Yicheng Li contributed equally to this work.} \\
   Department of Statistics and Data Science\\
  Tsinghua University\\
  Beijing, 100084, China\\
  \texttt{liyc22@mails.tsinghua.edu.cn}
  \And
   Yuqian Cheng \\
   Department of Mathematical Science\\
  Tsinghua University\\
  Beijing, 100084, China\\
  \texttt{cyq21@mails.tsinghua.edu.cn}
   \And
  Qian Lin \thanks{corresponding author}\\
  Department of Statistics and Data Science\\
  Tsinghua University\\
  Beijing, 100084, China\\
  \texttt{qianlin@tsinghua.edu.cn} 
}

\begin{document}

\maketitle

\begin{abstract}
  Recent studies have reported \textit{saturation effects} and \textit{multiple descent behavior} in large dimensional kernel ridge regression (KRR). However, these findings are predominantly derived under restrictive settings, such as inner product kernels on  sphere or strong eigenfunction assumptions like hypercontractivity. Whether such behaviors hold for other kernels remains an open question. In this paper, we establish a broad, new family of large dimensional kernels and derive the corresponding convergence rates of the generalization error. As a result, we recover key phenomena previously associated with inner product kernels on sphere, including: $i)$ the \textit{minimax optimality} when the source condition $s\le 1$; $ii)$ the \textit{saturation effect} when $s>1$; $iii)$ a $\textit{periodic plateau phenomenon}$ in the convergence rate and a $\textit {multiple-descent behavior}$ with respect to the sample size $n$. 
\end{abstract}

\section{Introduction}

The recent success of deep learning has brought the traditional kernel methods (such as kernel ridge regression(KRR)) back to spotlight, particularly through the connection between wide neural networks and neural tangent kernels (NTKs) established by \cite{jacot2018_NeuralTangent}. Under the fixed dimensional setting, the behavior of KRR is well-characterized under standard assumptions (embedding index, source condition, etc.) by \cite{caponnetto2007optimal, fischer2020_SobolevNorm}. Specifically, \cite{fischer2020_SobolevNorm,zhang2023optimality_2} derived the minimax optimal rate $n^{-\frac{s\beta}{s\beta+1}}$ of KRR when the source condition $0<s\le2$; \cite{bauer2007_RegularizationAlgorithms,li2023_SaturationEffect} reported the saturation effect when $s>2$. Also, a line of work (\cite{simon2023eigenlearning}) considered the eigenframework, deriving conservative equations to generalization metrics of KRR. However, modern applications increasingly involve large dimensional data, where the dimension $d$ ranges from thousands to millions. This shift towards large dimensional applications calls for a thorough understanding of how kernel methods perform when both the sample size $n$ and dimension $d$ grow, particularly in the large dimensional regime where $n \asymp d^\gamma$ for some $\gamma > 0$.

In the large dimensional regime, several new phenomena emerge, including the \textit{saturation effects} when $s>1$, the \textit{periodic plateau phenomenon} and the \textit{multiple descent behavior} (see Figure \ref{figure 5 rates w.r.t d},\ref{figure 3 rates w.r.t n} for details). Specifically, \cite{Ghorbani2019LinearizedTN} identified approximation barriers for large-dimensional kernel regression, while \cite{lu2023optimal} and \cite{zhang2024optimal} derived minimax optimal convergence rates for inner product kernels on the sphere and saturation effects in different regime, revealing intriguing phenomena such as periodic plateau and multiple descent behaviors in the excess risk.  Also, a benign overfitting behavior arises in the large dimensional setting, which has been discussed by \cite{misiakiewicz_spectrum_2022, barzilai2023generalization,zhang2025phase}.

Despite these advances, existing studies under large dimensional setting mostly focus on the  inner product kernels defined on sphere, leaving other commonly used kernels (such as Gaussian kernel) largely unexplored. The scarcity of theoretical understanding for kernels beyond inner product kernels on sphere prompts a fundamental question: \textit{how do other general kernels behave in the large dimensional setting?} In this paper, we extend the applicable kernels to a wider class by investigating the generalization error of KRR in large dimensional, general domains, and  establish exact convergence rates without restrictive eigenfunction assumptions. As a result, we demonstrate that phenomena previously observed for spherical kernels  holds for a large group of large dimensional kernels, extending previous work to a more general class of kernels.

\subsection{Contributions and Related Works}\label{our contributions}

In this paper, we perform an investigation on the kernel ridge regression in large dimensional setting (i.e., $n\propto d^{\gamma}$ for some $\gamma>0$) where  the kernel may be defined on (possibly) unbounded domains. 

\textbf{Reproducing previously observed phenomena on large dimensional product kernel ridge regression}
Previous work on large dimensional kernels mostly focuses on inner product kernels on sphere (e.g., \cite{lu2023optimal},\cite{zhang2024optimal}).
As a result, the following phenomena were reported (see Figure \ref{figure 5 rates w.r.t d}, \ref{figure 3 rates w.r.t n} for details): the \textit{minimax optimality} when $s\le 1$; the \textit{saturation effect} when $s>1$; a \textit{periodic plateau phenomenon} and a \textit{multiple descent behavior}. Although some studies (e.g., \cite{misiakiewicz2024non}) try to extend to other kernels defined on domains beyond the sphere, these studies universally impose explicit assumptions ( {\it e.g.  hypercontractivity}) on the properties of the kernel's eigenfunctions (see Remark \ref{remark miska} for details).  While such assumptions are known to hold in specific cases (such as kernels on the hypercube, kernels on the sphere, or settings where eigenfunctions are polynomials under Gaussian measure), their applicability to other general kernels remains unverified. Recently, \cite{pandit2024universality} considered large dimensional KRR on general domains under the setting $n\asymp d^2$, which is among the first to touch on the problems of general kernels in large dimensions.

In order to study the behaviors of general kernels in large dimensional setting, we provide a \textit{new, broad} family of kernels: the \textit{large dimensional product kernels} (see Assumption \ref{product structure assumption} for details). This family of kernels not only encompasses widely used kernels like the Gaussian kernel, but also generalizes beyond the restrictive setting of the sphere, accommodating a broader range of input spaces and a wider application in reality. As a result, we reproduce the phenomena previously observed for inner product kernels on sphere: the \textit{minimax optimality}, \textit{saturation effects}, \textit{periodic plateau phenomenon} and \textit{multiple descent behavior}.



\textbf{Relief on constraints of eigenfunctions} The reliance on specific eigenfunction properties in prior work (e.g., spherical harmonic polynomials, hypercontractivity) limits the generality of those results, preventing their extension to broader kernel classes. On the contrary, we propose an alternative approach that solely depends on eigenvalues rather than eigenfunctions. This enables us to apply our results to a wide class of kernels with explicit eigenvalues, encompassing both the classical inner product kernels on sphere and the newly introduced family of large dimensional product kernels.

\textbf{Explicit examples of product kernels}
In Appendix \ref{sec:example}, we provide several explicit kernels in the family of product kernels, including: Gaussian kernels on general Gaussian measure; Mehler kernels on general Gaussian measure; Laguerre kernels on $(0,\infty)^d$ and kernels with mixed components. The eigenfunctions of the above examples are generally product of Hermite polynomials, exponential functions and Laguerre polynomials. The significant divergence in eigenfunction structures prevents the simple verification of uniform properties associated with spherical harmonics or hypercontractivity in previous literature, emphasizing the importance of removing the dependency on specific eigenfunction properties.

\textbf{Notations} Let $\mathcal{X}\subseteq \mathbb{R}^d$ be the input space, and $\mathcal{Y}\subseteq \mathbb{R}$ be the output space. Let $ \rho = \rho_{d} $ be an unknown probability distribution on $\mathcal{X} \times \mathcal{Y}$ satisfying $ \int_{\mathcal{X} \times \mathcal{Y}} y^{2} \mathrm{d}\rho({x},y) <\infty$ and denote the corresponding marginal distribution on $ \mathcal{X} $ as $\mu = \mu_{d}$. For the sake of conciseness, we may use $L^2$ as the abbreviation of $L^{2}(\mathcal{X},\mu)$, and $\|\cdot\|_{\mathcal{H}}$, $\|\cdot\|_{L^2}$, $\|\cdot\|_{L^{\infty}}$,  $\|\cdot\|_{2}$ denotes the norm of Hilbert space $\mathcal{H}$, the norm of $L^{2}(\mathcal{X},\mu)$, the norm of $L^{\infty}(\mathcal{X},\mu)$, the Frobenius norm, respectively.

Throughout the paper, we shall use the asymptotic notations $o(\cdot),O(\cdot),\omega(\cdot),\Omega(\cdot)$. Also, we shall define the following notations: $a\lesssim b$ if and only if $a=O(b)$; $a\gtrsim b$ if and only if $b=O(a)$; $a\asymp b$ (or $a=\Theta(b)$) if and only if $a\lesssim b$ and $a\gtrsim b$. We may also define the notations $o_\mathbb{P}(\cdot)$, $O_\mathbb{P}(\cdot)$ as follows. $a_n=o_\mathbb{P}(b_n)$ if and only if $a_n/b_n$ converges to $0$ in probability; $a_n=O_\mathbb{P}(b_n)$ if and only if for any $\epsilon>0$, there exist constants $C_\epsilon, N_\epsilon$ such that for all $n\geq N_\epsilon$, $P(|a_n|>C_\epsilon|b_n|)<\epsilon$.

\section{Preliminaries}
\label{preliminaries}
Suppose that $\{ (x_{i}, y_{i}) \}_{i=1}^{n}$ are i.i.d. sampled from the model $y=f_{\rho}^*(x)+\epsilon$, where $f_{\rho}^*(x)$ is the true function, while $\epsilon$ is the noise. Throughout the paper, we shall consider the large dimensional setting, where $n\asymp d^\gamma, \gamma>0$. We consider the following assumptions.
\begin{assumption}\label{assumption kernel}
    Suppose that $ \mathcal{H} = \mathcal{H}_{d} $ is a separable RKHS on $\mathcal{X} \subset \mathbb{R}^{d}$ with respect to a continuous kernel function $ k = k_{d}$ satisfying 
$$        \sup\limits_{{x} \in \mathcal{X}} k_{d}({x},{x}) \le \kappa^{2},$$ 
    where $ \kappa$ is an absolute constant, which does not depend on the dimension $d$. In the remaining paper, we may omit the subscript $d$ without causing any ambiguity.
\end{assumption}

\begin{remark}
    In most kernel related literature, (see e.g., \cite{Caponnetto2006OptimalRF, caponnetto2007optimal, raskutti2014early, beaglehole2023inconsistency,
buchholz2022_KernelInterpolation, lai2023generalization, li2023kernel}), the input space $\mathcal{X}$ is assumed to be bounded. However, such assumptions are not made in this paper. The relieved restriction on the input space $\mathcal{X}$ allows us to apply our results to more general kernels.
\end{remark}

By the celebrated Mercer's theorem (see,e.g., \cite{steinwart2012_MercerTheorem}), when Assumption \ref{assumption kernel} holds, the kernel $k$ satisfies $k(x,x^\prime)=\sum_{i \in N}\lambda_ie_i(x)e_i(x^\prime),$ where $N$ is an at most countable set, $\{\lambda_i, i=0,1,\cdots \}$ are the non-increasing eigenvalues of $k$, while $\{e_i(x),i=0,1,\cdots\}$ are the corresponding eigenfunctions. 

Next, we shall introduce the interpolation space of $\mathcal{H}.$ For any $s\geq0$, the interpolation space $[\mathcal{H}]^s$ can be defined as 
\[
[\mathcal{H}]^s:=\left\{ \sum_{i\in N}\lambda_i^{s/2}a_ie_i|\sum_{i\in N}a_i^2<\infty\right\},
\]
with the inner product deduced from
$
\langle \lambda_i^{s/2}e_i, \lambda_j^{s/2}e_j \rangle_{[\mathcal{H}]^s}=\delta_{ij}.
$
It is easy to show that $[\mathcal{H}]^s$ is also a separable Hilbert space. Moreover, if we assume $s=1$ or $s=0$, the interpolation space norm $\left\|\cdot\right\|_{[\mathcal{H}]^s}$ will be reduced to $\left\|\cdot\right\|_{\mathcal{H}}$ and $\left\|\cdot\right\|_{L^2}$ respectively.

Kernel Ridge Regression (KRR) aims at minimizing the quadratic cost with a penalty term. Specifically, the estimator $\hat{f}_{\lambda}$ of KRR is defined by the following equation:
\begin{equation}\label{eq:krr}
    \hat{f}_\lambda = \underset{f \in \mathcal{H}}{\arg \min } \left(\frac{1}{n} \sum_{i=1}^n\left(y_i-f\left( {x}_{i} \right)\right)^2+\lambda\|f\|_{\mathcal{H}}^2\right),
\end{equation}
where $\lambda > 0$ is the regularization parameter. 

Denote the samples as $\boldsymbol{X}=\left( {x}_{1},\cdots,{x}_{n} \right)$ and $\boldsymbol{y}=\left( y_{1},\cdots,y_{n} \right)^{\top}$. The explicit expression of the KRR estimator $\hat{f}_{\lambda}$ is given by the representer theorem (see, e.g., \citealt{steinwart2008support}):
\begin{equation}\label{krr estimator}
   \hat{f}_{\lambda}({x}) = \mathbb{K}({x}, \boldsymbol{X})(\mathbb{K}(\boldsymbol{X}, \boldsymbol{X})+n \lambda {I})^{-1} \boldsymbol{y},
\end{equation}
where $$  \mathbb{K}(\boldsymbol{X}, \boldsymbol{X})=\left(k\left({x}_i, {x}_j\right)\right)_{n \times n},~~ \mathbb{K}({x}, \boldsymbol{X})=\left(k\left({x}, {x}_1\right), \cdots, k\left({x}, {x}_n\right)\right).$$

Throughout the paper, we are interested in the generalization error (excess risk) of $\hat{f}_{\lambda}$: 
\begin{equation}\label{eq:gerneralization error}
    \mathbb{E}_{{x} \sim \mu} \left[ \left( \hat{f}_{\lambda}({x}) - f_{\rho}^{*}({x}) \right)^{2} \right] = \left\|\hat{f}_\lambda-f_\rho^*\right\|_{L^2}^2,
\end{equation}
and our aim is to derive the exact convergence rate of the generalization error of KRR.

\section{Main Results}
\label{main results}
In this section, we shall provide the exact  convergence rate of the generalization error of $\hat{f}_\lambda$.
Before giving the main result, we shall introduce the following quantities, which will be frequently used in the following paper.
\begin{definition}
Given $\{\lambda_i, i=0,1,\cdots \}$ are the non-increasing eigenvalues of $k$, denoting $f_{\rho}^{*} = \sum\limits_{i=1}^{\infty} f_{i} e_{i}(x) \in L^{2}(\mathcal{X},\mu)$, we shall define the following quantities.
\begin{equation}\label{n1 n2 m1 m2}
\mathcal{N}_{k}(\lambda):=\sum_{i\in N}\left(\frac{\lambda_i}{\lambda+\lambda_i}\right)^k,\ \ \mathcal{R}_2(\lambda):=\sum_{i \in N}\left(\frac{\lambda}{\lambda_i+\lambda}f_i\right)^2,\ \ k=1,2.
\end{equation}
\end{definition}

Also, we need the following assumptions, which are often used in kernel-related literature.
\begin{assumption}\label{assumption noise}
    Suppose that the noise $\epsilon$ satisfies $\operatorname{Var}(\epsilon)=\sigma_\epsilon^2<\infty.$
\end{assumption}

\begin{assumption}\label{assumption f*}
    Suppose that there exist $s$ and an absolute constant $R_1$ such that $f_\rho^*$ satisfies $\|f_\rho^*\|_{[\mathcal{H}]^s}\leq R_1.$ 
\end{assumption}

Now we are prepared to give one of the main results of this paper.

\begin{theorem}\label{main theorem}
    When $s\ge 1$, suppose that Assumption \ref{assumption kernel}, \ref{assumption noise}, \ref{assumption f*}  hold, and there exist absolute constants $c_1,c_2$ such that $c_1<\lambda_0<c_2$ regardless of $d$. Furthermore, assume that the following conditions hold for some $\lambda=\lambda(d,n)\rightarrow0$:
\begin{align}\label{condition}
    \frac{\mathcal{N}_{1}(\lambda)}{n} \ln{n} = o(1); ~~\frac{1}{n\lambda}=o(1) ; ~~\mathcal{N}_2(\lambda)=\Omega(1); ~~n^{-1} \frac{1}{\lambda^2} \ln{n} = o(\mathcal{N}_{2}(\lambda)).
\end{align}

    Then we have the following equation holds:
    \begin{equation}
    \label{main bound}
        \mathbb{E}[\|\hat{f}_\lambda-f_\rho^*\|_{L^2}^2|\boldsymbol{X}]=\Theta_{\mathbb{P}}\left(\frac{\sigma_\epsilon^{2} \mathcal{N}_{2}(\lambda)}{n}+\mathcal{R}_{2}(\lambda) \right).
    \end{equation}
\end{theorem}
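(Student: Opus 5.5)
We follow the standard bias--variance route for KRR (as in Li et al.\ and Zhang et al.), using only eigenvalue information. Write $T$ for the integral operator on $L^2$, $T_X=\frac1n\sum_i k_{x_i}\otimes k_{x_i}$ for its empirical counterpart on $\mathcal H$, and $T_\lambda=T+\lambda$, $T_{X\lambda}=T_X+\lambda$. Since $\hat f_\lambda=T_{X\lambda}^{-1}\frac1n\sum_i y_ik_{x_i}$, the conditional risk splits exactly as
\[
\mathbb{E}[\|\hat f_\lambda-f_\rho^*\|_{L^2}^2\mid \boldsymbol X]=\mathbf{Bias}^2(\lambda)+\mathbf{Var}(\lambda),
\]
where $\mathbf{Var}(\lambda)=\frac{\sigma_\epsilon^2}{n^2}\sum_{i}\|T_{X\lambda}^{-1}k_{x_i}\|_{L^2}^2$ and $\mathbf{Bias}^2(\lambda)=\|T_{X\lambda}^{-1}T_Xf_\rho^*-f_\rho^*\|_{L^2}^2=\lambda^2\|T_{X\lambda}^{-1}f_\rho^*\|_{L^2}^2$. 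The last identity needs $f_\rho^*\in\mathcal H$, which holds because $s\ge1$. We then show separately that $\mathbf{Var}=\Theta_{\mathbb P}(\sigma_\epsilon^2\mathcal N_2(\lambda)/n)$ and $\mathbf{Bias}^2=\Theta_{\mathbb P}(\mathcal R_2(\lambda))$.

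\textbf{Concentration step.} The key tool is the operator concentration $\|T_\lambda^{-1/2}(T-T_X)T_\lambda^{-1/2}\|_{op}\le a_n$ with high probability. It follows from a Bernstein inequality for self-adjoint operators, using $\sup_x\|T_\lambda^{-1/2}k_x\|_{\mathcal H}^2\le\kappa^2/\lambda$ and effective dimension $\mathcal N_1(\lambda)$. This gives $a_n\lesssim\sqrt{\frac{\ln n}{n\lambda}}+\frac{\ln n}{n\lambda}$, which is $o(1)$ under $\frac1{n\lambda}=o(1)$. We will need a sharper $\ln n$ version, and the condition $\mathcal N_1(\lambda)\ln n/n=o(1)$ is what makes the intrinsic-dimension Bernstein bound effective. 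Consequently $\|T_\lambda^{1/2}T_{X\lambda}^{-1}T_\lambda^{1/2}\|_{op}\le 2$, and symmetrically $T_{X\lambda}\asymp T_\lambda$ in the operator order, w.h.p.

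\textbf{Variance.} Write $\mathbf{Var}=\frac{\sigma_\epsilon^2}{n}\int\|T_{X\lambda}^{-1}k_x\|_{L^2,X}^2\,d\mu(x)$, where the inner norm is the empirical norm. Replace $T_{X\lambda}^{-1}$ by $T_\lambda^{-1}$ and the empirical norm by the population one. In the population version, $\int\|T_\lambda^{-1}k_x\|_{L^2}^2d\mu=\sum_i\lambda_i^2/(\lambda_i+\lambda)^2=\mathcal N_2(\lambda)$. The error from swapping norms is a sample average of $\|T^{1/2}T_\lambda^{-1}k_x\|^2$-type terms, each bounded by $\kappa^2/\lambda^2$. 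A Bernstein bound for the deviation of the empirical from the population squared norm, uniform over $x$ via the operator form, gives an error of order $\sqrt{\ln n/n}\cdot\lambda^{-1}\sqrt{\mathcal N_2}$ plus $\ln n/(n\lambda^2)$. Both are $o(\mathcal N_2(\lambda))$ exactly by the fourth condition together with $\mathcal N_2=\Omega(1)$. The error from swapping the inverse operators is controlled by $a_n$ through $T_{X\lambda}^{-1}-T_\lambda^{-1}=T_{X\lambda}^{-1}(T-T_X)T_\lambda^{-1}$. This two-sided matching is where the constants must be tracked to obtain $\Theta$ rather than just $O$.

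\textbf{Bias.} The population analogue is $\lambda^2\|T_\lambda^{-1}f_\rho^*\|_{L^2}^2=\mathcal R_2(\lambda)$. We bound the difference by
\[
\lambda\bigl\|(T_{X\lambda}^{-1}-T_\lambda^{-1})f_\rho^*\bigr\|_{L^2}=\lambda\bigl\|T_{X\lambda}^{-1}(T-T_X)T_\lambda^{-1}f_\rho^*\bigr\|_{L^2}.
\]
Here $g=\lambda T_\lambda^{-1}f_\rho^*$ satisfies $\|g\|_{L^2}^2=\mathcal R_2$, and $(T-T_X)g$ is a centered sample mean. Applying $T^{1/2}T_{X\lambda}^{-1}T_\lambda^{1/2}=O(1)$, it remains to bound $\|T_\lambda^{-1/2}(T-T_X)g\|_{\mathcal H}$ by Bernstein. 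This needs an $L^\infty$ bound on $g$, and since $s\ge1$ we have $\|g\|_{L^\infty}\le\kappa\|g\|_{\mathcal H}\le\kappa R_1$. The result is of order $\sqrt{\mathcal N_1\ln n/n}\,\|g\|_{L^2}+\frac{\ln n}{n\sqrt\lambda}\|g\|_{L^\infty}$. The first piece is $o(\sqrt{\mathcal R_2})$ by the first condition. \textbf{The main obstacle} is the second, $L^\infty$-driven piece, which must be shown to be $o(\sqrt{\mathcal R_2}+\sqrt{\mathcal N_2/n})$. Using $\lambda_0\asymp1$ and $s\ge1$ gives $\mathcal R_2\gtrsim\lambda^2f_0^2$ for the leading coefficient. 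If that is insufficient (e.g.\ $f_0=0$), we would absorb the term into the variance using $\frac{\ln n}{n\lambda}\ll\sqrt{\ln n/(n\lambda^2)}=o(\sqrt{\mathcal N_2})$ from the fourth condition, so that it is dominated by $\sqrt{\mathcal N_2/n}\cdot o(1)$ after squaring. Combining the bias and variance estimates yields \eqref{main bound}.
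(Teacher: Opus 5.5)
Your route is the paper's route: the same bias--variance split, and the variance handled by swapping $T_{\mathbf{X}\lambda}^{-1}\to T_\lambda^{-1}$ and the empirical norm $\to$ the $L^2$ norm (the paper's Approximations A and B). Your variance sketch handles this adequately. The bias is also handled the same way, through $\tilde f_\lambda-f_\lambda=T_{\mathbf{X}\lambda}^{-1}(T_{\mathbf{X}}-T)(f_\rho^*-f_\lambda)$, a vector Bernstein bound, and $\|f_\rho^*-f_\lambda\|_{L^\infty}\lesssim 1$ from $s\ge1$. However, both estimates you give for this bias perturbation term are wrong as justified.

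\textbf{First Bernstein piece.} Take your $g=\lambda T_\lambda^{-1}f_\rho^*=f_\rho^*-f_\lambda$. The Bernstein variance is $\mathbb{E}\bigl[\sum_i\frac{\lambda_i}{\lambda_i+\lambda}e_i(x)^2g(x)^2\bigr]$. Bounding it by $\mathcal{N}_1(\lambda)\|g\|_{L^2}^2$ requires $\sup_x\sum_i\frac{\lambda_i}{\lambda_i+\lambda}e_i(x)^2\lesssim\mathcal{N}_1(\lambda)$. That is an eigenfunction property: it holds for spherical harmonics by the addition formula but is not available in general. It is exactly the kind of assumption the theorem is designed to avoid. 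Use instead the eigenvalue-only bound $\|T_\lambda^{-1/2}k_x\|_{\mathcal{H}}^2\le\kappa^2/\lambda$. This gives the piece as $O\bigl(\sqrt{\mathcal{R}_2(\lambda)/(n\lambda)}\bigr)=o(\mathcal{R}_2(\lambda)^{1/2})$, using $1/(n\lambda)=o(1)$ rather than the first condition.

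\textbf{The $L^\infty$ piece.} Your absorption argument does not close. From $\frac{\ln n}{n\lambda}=o(\sqrt{\mathcal{N}_2})$ you only get $\frac{\ln n}{n\sqrt\lambda}=o(\sqrt{\lambda\mathcal{N}_2})$. Since $n\lambda\to\infty$, $\sqrt{\lambda\mathcal{N}_2}\gg\sqrt{\mathcal{N}_2/n}$, so nothing is absorbed. The lower bound $\mathcal{R}_2\gtrsim\lambda^2f_0^2$ does not help in general either.

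The paper's resolution is simple. Only a $\Theta_{\mathbb{P}}$ statement is needed, so apply Bernstein at a fixed $\delta$; the prefactor is then $\ln(2/\delta)=O(1)$. The term becomes $O\bigl(\frac{1}{n\sqrt\lambda}\bigr)=O\bigl(n^{-1/2}(n\lambda)^{-1/2}\bigr)=o(n^{-1/2})=o\bigl(\sqrt{\mathcal{N}_2(\lambda)/n}\bigr)$, using $1/(n\lambda)=o(1)$ and $\mathcal{N}_2(\lambda)=\Omega(1)$. If you keep a $\ln n$ prefactor it still works, but via $\frac{(\ln n)^2}{n\lambda\mathcal{N}_2}=\bigl(\frac{\ln n}{n\lambda^2\mathcal{N}_2}\bigr)^{1/2}\bigl(\frac{(\ln n)^3}{n\mathcal{N}_2}\bigr)^{1/2}\to0$ (fourth and third conditions), not via your chain. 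So this is not a real obstacle; it is simply where the second and third conditions enter.

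\textbf{Minor point.} Your $a_n=o(1)$ needs $\ln n/(n\lambda)\to0$, not merely $1/(n\lambda)\to0$. This follows from the fourth condition, since $\mathcal{N}_2\le\mathcal{N}_1\le\kappa^2/\lambda$.
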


    \textbf{Elaboration of \eqref{main bound}} Theorem \ref{main theorem} provides the precise order of the generalization of KRR when $s\ge 1$. In \eqref{main bound},  $\frac{\sigma_\epsilon^{2} \mathcal{N}_{2}(\lambda)}{n}$ represents the variance term of the generalization error, while $ \mathcal{R}_{2}(\lambda)$ represents the bias term. 
    The tradeoff between the variance term and bias term urges us to choose appropriate $\lambda$. In the following paper, we shall derive the appropriate $\lambda$ when considering large dimensional product kernels (see Theorem \ref{thm:upper bound gaussian},\ref{theorem s<1 gaussian} for details). 


\begin{theorem}\label{main theorem s<1}
    When $0<s<1$, suppose that Assumption \ref{assumption kernel}, \ref{assumption noise}, \ref{assumption f*}  holds. Assume that the following conditions hold for some $\lambda=\lambda(d,n)\rightarrow0$:
\begin{align}\label{condition s<1}
    \frac{\mathcal{N}_{1}(\lambda)}{n} \ln{n} = o(1); ~~\frac{1}{n\lambda}=o(1) ; ~~\mathcal{N}_2(\lambda)=\Omega(1); ~~n^{-1} \frac{1}{\lambda^2} \ln{n} = o(\mathcal{N}_{2}(\lambda)).
\end{align}
Furthermore, define $f_\lambda=\sum_{i \in N}\frac{\lambda_i}{\lambda_i+\lambda}f_ie_i$, if there exists $\epsilon>0$ such that 
\begin{align}\label{add condition s<1}
    \frac{\sqrt{\frac{1}{\lambda}}(n^{\frac{1-s}{2}+\epsilon}+\|f_{\lambda}\|_{L^{\infty}})}{n}=o(\frac{1}{\sqrt{n}}\mathcal{N}_2(\lambda)^{\frac{1}{2}}+\mathcal{R}_2(\lambda)^{\frac{1}{2}}).
\end{align}
    Then we have the following equation holds:
    \begin{equation}
    \label{main bound s<1}
        \mathbb{E}[\|\hat{f}_\lambda-f_\rho^*\|_{L^2}^2|\boldsymbol{X}]=\Theta_{\mathbb{P}}\left(\frac{\sigma_\epsilon^{2} \mathcal{N}_{2}(\lambda)}{n}+\mathcal{R}_{2}(\lambda) \right).
    \end{equation}
\end{theorem}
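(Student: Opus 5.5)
We reduce Theorem \ref{main theorem s<1} to the same bias--variance decomposition used for Theorem \ref{main theorem}. Writing $T_X=\frac1n\sum_i k(\cdot,x_i)\otimes k(\cdot,x_i)$ and $g_Z=\frac1n\sum_i y_ik(\cdot,x_i)$, we have $\hat f_\lambda=(T_X+\lambda)^{-1}g_Z$. Conditioning on $\boldsymbol X$ gives
\[
\mathbb{E}[\|\hat f_\lambda-f_\rho^*\|_{L^2}^2|\boldsymbol X]=\underbrace{\tfrac{\sigma_\epsilon^2}{n^2}\textstyle\sum_i\|(T_X+\lambda)^{-1}k(\cdot,x_i)\|_{L^2}^2}_{\text{variance}}+\underbrace{\|(T_X+\lambda)^{-1}\tilde g_Z-f_\rho^*\|_{L^2}^2}_{\text{bias}},
\]
where $\tilde g_Z=\frac1n\sum_i f_\rho^*(x_i)k(\cdot,x_i)$. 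The variance term involves no $f_\rho^*$, so its analysis from the proof of Theorem \ref{main theorem} carries over verbatim. That analysis uses the concentration $\|T_\lambda^{-1/2}(T-T_X)T_\lambda^{-1/2}\|=o_{\mathbb P}(1)$, which follows from $\mathcal N_1(\lambda)\ln n/n=o(1)$ and $1/(n\lambda)=o(1)$ via a Bernstein inequality for operators, with $\|T_\lambda^{-1/2}k(x,\cdot)\|_{\mathcal H}^2\le\kappa^2/\lambda$. Together with $n^{-1}\lambda^{-2}\ln n=o(\mathcal N_2)$, this yields variance $=\Theta_{\mathbb P}(\sigma_\epsilon^2\mathcal N_2(\lambda)/n)$. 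So only the bias needs new work.

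For the bias we split
\[
(T_X+\lambda)^{-1}\tilde g_Z-f_\rho^*=(f_\lambda-f_\rho^*)+(T_X+\lambda)^{-1}\big[(\tilde g_Z-T_Xf_\lambda)-(g-Tf_\lambda)\big],
\]
using $g:=Tf_\rho^*=(T+\lambda)f_\lambda$. Here $\|f_\lambda-f_\rho^*\|_{L^2}^2=\mathcal R_2(\lambda)$ exactly. It therefore suffices to show that the second term is $o_{\mathbb P}(\frac{1}{\sqrt n}\mathcal N_2^{1/2}+\mathcal R_2^{1/2})$ in $L^2$; the triangle inequality then gives the two-sided $\Theta_{\mathbb P}$ bound. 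Using $\|T^{1/2}h\|_{\mathcal H}=\|h\|_{L^2}$ and the concentration result, we bound the second term by $\|T_\lambda^{1/2}(T_X+\lambda)^{-1}T_\lambda^{1/2}\|\cdot\|T_\lambda^{-1/2}\xi\|_{\mathcal H}$. The first factor is $O_{\mathbb P}(1)$. The second is built from $\xi=\frac1n\sum_i\xi_i-\mathbb E\xi$, where $\xi_i=(f_\rho^*(x_i)-f_\lambda(x_i))k(\cdot,x_i)$.

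When $s\ge1$ (Theorem \ref{main theorem}), $f_\rho^*-f_\lambda$ is uniformly bounded, so Bernstein applies directly. For $s<1$ the key obstacle is that $f_\rho^*$ may be unbounded. We handle this by truncation. Since $\|f_\rho^*\|_{[\mathcal H]^s}\le R_1$ and $[\mathcal H]^s\hookrightarrow L^q$ for suitable $q$, we truncate at level $t=n^{\frac{1-s}{2}+\epsilon}$ (the usual choice in Fischer--Steinwart type arguments, where the tail probability $\mu(|f_\rho^*|>t)$ is controlled by a moment bound). Then with probability $\to1$ all $|f_\rho^*(x_i)|\le t$, so the sample version equals the truncated one. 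The bias from removing the tail in $\mathbb E\xi$ is small by Cauchy--Schwarz, being of lower order than $\mathcal R_2^{1/2}$. For the truncated variables, $\|T_\lambda^{-1/2}\xi_i\|_{\mathcal H}\le \kappa\lambda^{-1/2}(t+\|f_\lambda\|_{L^\infty})$. Their second moment is at most $\frac{\kappa^2}{\lambda}\|f_\rho^*-f_\lambda\|_{L^2}^2=\frac{\kappa^2}{\lambda}\mathcal R_2$, or more sharply $\mathcal N_1(\lambda)\cdot\sup$-type bounds. Vector Bernstein then gives
\[
\|T_\lambda^{-1/2}\xi\|_{\mathcal H}=O_{\mathbb P}\Big(\sqrt{\tfrac{\mathcal R_2}{n\lambda}}\ln n+\tfrac{\lambda^{-1/2}(t+\|f_\lambda\|_{L^\infty})}{n}\ln n\Big).
\]
The first term is $o(\mathcal R_2^{1/2})$ since $1/(n\lambda)=o(1)$, up to the log. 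To absorb that log we use the slack $\epsilon$ together with the $\ln n$ factors in \eqref{condition s<1}. The second term is $o(\frac{1}{\sqrt n}\mathcal N_2^{1/2}+\mathcal R_2^{1/2})$ precisely by \eqref{add condition s<1}, with the $\ln n$ also absorbed by $\epsilon$. The delicate point is choosing $t$ and the moment exponent so that the probability that some $|f_\rho^*(x_i)|>t$ vanishes while the truncation bias stays negligible. I expect this embedding/tail-bound bookkeeping to be the main technical step.
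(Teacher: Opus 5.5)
Your proposal matches the paper's proof essentially step for step. The common ingredients are: the same bias--variance split with the variance part reused unchanged; the same decomposition of the bias into $f_\lambda-f_\rho^*$ plus a $T_\lambda^{-1/2}$-weighted empirical term; truncation at $t=n^{\frac{1-s}{2}+\epsilon}$ using $[\mathcal H]^s\hookrightarrow L^q$ with $q<\frac{2}{1-s}$ chosen so that $t^q\gg n$; Markov's inequality to show that no sample exceeds $t$ with probability tending to one; Cauchy--Schwarz for the removed part of $\mathbb E\xi$; and vector Bernstein with $L\asymp\lambda^{-1/2}(t+\|f_\lambda\|_{L^\infty})$ and $\sigma\asymp\lambda^{-1/2}\mathcal R_2(\lambda)^{1/2}$.

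One small correction concerns the $\ln n$ factors you carry. They are unnecessary, and they are not actually absorbable as you claim: $\epsilon$ only gives slack for the $t$ contribution, and the $\|f_\lambda\|_{L^\infty}$ contribution in \eqref{add condition s<1} has no room for an extra $\ln n$. Instead, apply Bernstein at a fixed confidence level $\delta$, as the paper does. This gives constants of order $\ln(2/\delta)$ and already suffices for the $o_{\mathbb P}$ conclusion.
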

\begin{remark}
Notice that the most significant distinction between Theorem \ref{main theorem s<1} and Theorem \ref{main theorem} lies in the inclusion of condition \ref{add condition s<1}. This requirement arises because $f_\rho^*$ no longer belongs to $\mathcal{H}$ when $s<1$, thereby complicating the bounding procedure of the bias term.
\end{remark}

\textbf{An approach to bound $\|f_{\lambda}\|_{L^{\infty}}$} Unlike Theorem \ref{main theorem}, we notice that $\|f_{\lambda}\|_{L^{\infty}}$ in Theorem \ref{main theorem s<1} depends on the eigenfunctions $\{e_i\}_{i \in N}$. Thankfully, we are able to provide an upper bound for $\|f_{\lambda}\|_{L^{\infty}}$. Notice that $f_\lambda \in \mathcal{H}$, we can bound $\|f_{\lambda}\|_{L^{\infty}}$ by $\|f_{\lambda}\|_{\mathcal{H}}$, while the latter one can be expressed as $(\sum_{i=0}^{\infty}\frac{\lambda_i}{(\lambda_i+\lambda)^2}f_i^2)^{\frac{1}{2}}$, which is independent of eigenfunctions. (Notice that $\|f_\rho^*\|_{[\mathcal{H}]^s}\leq R_1$, $(\sum_{i=0}^{\infty}\frac{\lambda_i}{(\lambda_i+\lambda)^2}f_i^2)^{\frac{1}{2}}$ can be further bounded by functions of $\{\lambda_i\}_{i \in N}$ and $\lambda$, for a detailed upper bound of $\|f_{\lambda}\|_{L^{\infty}}$, see Lemma \ref{lemma calculation flambda}.) Hence, we are able to claim that the conditions in our Theorem \ref{main theorem s<1} can be easily verified without properties of eigenfunctions.

\begin{remark}\label{remark:difference of main theorem}
    We notice that Theorem 1 in \cite{zhang2024optimal} provides a result similar to our results Theorem \ref{main theorem},\ref{main theorem s<1}. The most important difference is that in our assumptions and conditions, we deliberately avoid reliance on eigenfunctions. On the contrary, Theorem 1 in \cite{zhang2024optimal} highly relies on the properties of eigenfunctions, such as the quantity $\mathcal{M}_{1}(\lambda) = \operatorname*{ess~sup}_{{x} \in \mathcal{X}} \left|\sum\limits_{i=1}^{\infty} \left( \frac{\lambda}{\lambda_{i} + \lambda} f_{i} e_{i}({x}) \right) \right|$ and the Assumption 3 in their paper. Although \cite{zhang2024optimal} verified their assumptions and conditions for spherical harmonics as eigenfunctions, such constraints may not hold for other general kernels. 
\end{remark}

\begin{remark}\label{remark miska}
    Recently, \cite{misiakiewicz2024non} proposed an approximation of the generalization error via a deterministic equivalent $\mathrm{R}_n\left(\boldsymbol{\beta}_*, \lambda\right)=\frac{\lambda_*^2\left\langle\boldsymbol{\beta}_*,\left(\boldsymbol{\Sigma}+\lambda_*\right)^{-2} \boldsymbol{\beta}_*\right\rangle+\sigma_{\varepsilon}^2}{1-\frac{1}{n} \operatorname{Tr}\left(\boldsymbol{\Sigma}^2\left(\boldsymbol{\Sigma}+\lambda_*\right)^{-2}\right)}$, which appears to be valid in the large dimensional setting. Here $\boldsymbol{\Sigma}$ denotes the covariance operator, $\beta^*$ denotes the corresponding coefficients of $f^*$ under the basis $\{e_i\}$'s, $\lambda_*$ satisfies $n-\frac{\lambda}{\lambda_*}=\operatorname{Tr}\left(\boldsymbol{\Sigma}\left(\boldsymbol{\Sigma}+\lambda_*\right)^{-1}\right)$. While this approach offers a valuable theoretical tool for analyzing generalization performance, its applicability relies critically on certain structural assumptions—notably, a hypercontractivity condition (outlined in their Assumptions 3 and 4). This condition requires uniform bounds on higher-order moments of the eigenfunctions ${e_i}$ relative to their second moments, which can be challenging to verify for many commonly used kernels. As a result, even for the inner product kernels on sphere, assumptions in \cite{misiakiewicz2024non} can only be verified in the case $s=0$.
\end{remark}

\section{Applications to Large Dimensional Product KRR}
\label{applications}
In this section, we consider a family of product kernels, whose eigenvalues exhibit staircase descent properties (illustrated in Proposition \ref{prop:staircase}), while eigenfunctions do not generally possess any distinctive properties. As a result, such kernels possess exactly same convergence rates, and phenomena such as saturation effects, periodic plateau phenomenon and multiple descent behavior also exist.  

We assume that the kernel function satisfies the following assumption: 
\begin{assumption}[Large Dimensional Product Kernel]\label{product structure assumption} 
    Assume that the input space $(\mathcal{X},\mu)$ is a product measure space 
$$    (\mathcal{X},\mu) = \prod_{i=1}^{d} (\mathcal{X}_i, \mu_i).$$
For each $i$, $\mathcal{X}_i$ is a subdomain of $\mathbb{R}$ and $\mu_i$ is a probability measure on $\mathcal{X}_i$. We further assume the kernel $k_d: \mathcal{X} \times \mathcal{X} \to \mathbb{R}$ has a product structure:
\begin{equation}\label{product structure formula}
    k_d(x, x') = \prod_{i=1}^{d} \hat{k}_{r_i, i}(x_i, x'_i),
\end{equation}
where, for each index $i$, $\hat{k}_{r_i, i}$ is a kernel defined on $(\mathcal{X}_i, \mu_i)$ with an explicit Mercer's decomposition. Its eigenvalues $\mu^{r_i,i}_{j} , j = 0, 1, 2, \dots$, exhibit exponential decay with parameter $r_i$,
    \begin{equation}\label{eq:eigendecay_1d}
        c_i \cdot (r_i)^{-j} \leq \mu^{r_i,i}_{j} \leq C_i \cdot (r_i)^{-j}.
    \end{equation}
Moreover, we assume that the parameter $r_i$ satisfies $\frac{c}{d}\leq r_i\leq\frac{C}{d}$ for some absolute constants $c,C>0$, and  constants $c_i$, $C_i$ in \eqref{eq:eigendecay_1d} satisfy 
    \begin{equation}\label{constant bound}
        0<c'\leq\prod_{i=1}^d c_i\leq\prod_{i=1}^d C_i\leq C'<\infty
    \end{equation} 
    for some absolute constants $c',C'>0$. 
\end{assumption}

\begin{remark} 
        The parameter $r_i$ in \eqref{eq:eigendecay_1d} is chosen such that $r_i = \Theta(d^{-1})$. This scaling is essential to ensure a non-degenerate asymptotic behavior: it prevents the kernel $k_d$ from either collapsing to trivial cases or diverging as dimension $d \to \infty$. Condition \eqref{constant bound} is imposed to control the overall spectral scaling: it prevents the eigenvalues of $k_d$ from either exploding to infinity or uniformly converging to zero as dimension $d$ increases.
\end{remark}

Assumption \ref{product structure assumption} generally holds for a broad class of kernels, including 
\begin{itemize}
    \item Gaussian kernels on general Gaussian measure;

    \item Mehler kernels on general Gaussian measure; 

    \item Laguerre kernels on $(0,\infty)^d$;

    \item kernels with mixed components.
\end{itemize} We may defer the detailed discussion of the above kernels to Appendix \ref{sec:example}.

The following proposition describes the spectrum of product kernel $k_d$.

\begin{proposition}[staircase descent eigenvalues] \label{prop:staircase}
Under Assumption~\ref{product structure assumption}, 
the Mercer's decomposition of $k_d$ can be expressed as
\begin{equation}\label{Mercer decomposition of product}
k_d(x,x')=\sum_{i=0}^\infty\sum_{j=1}^{N(d,i)}\mu_{i,j} e_{i,j}(x)e_{i,j}(x').
\end{equation}
Moreover, for any fixed $i$, when $d$ is large enough, we have 
\begin{equation} 
        \mathfrak{C}_1 d^{-i}\leq\mu_{i,j}\leq\mathfrak{C}_2 d^{-i},\quad\mathfrak{C}_1 d^{i}\leq N(d,i)\leq\mathfrak{C}_2 d^{i}
\end{equation} 
for some constants $\mathfrak{C}_1,\mathfrak{C}_2>0$ independent of $j$. 
For simplicity, we further rewrite $f_\rho^*=\sum_{i=0}^\infty\sum_{j=1}^{N(d,i)}f_{i,j} e_{i,j}(x).$
\end{proposition}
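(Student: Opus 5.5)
Since $k_d$ is a product of one-dimensional Mercer kernels on a product probability space, I would first establish the Mercer decomposition directly. For each coordinate, let $\{\phi^{(i)}_j\}_{j\ge0}$ be the orthonormal eigenfunctions of $\hat k_{r_i,i}$ in $L^2(\mathcal{X}_i,\mu_i)$, with eigenvalues $\mu^{r_i,i}_j$. For a multi-index $\alpha=(\alpha_1,\dots,\alpha_d)\in\mathbb{N}^d$, define
\[
e_\alpha(x)=\prod_{i=1}^d\phi^{(i)}_{\alpha_i}(x_i),\qquad \mu_\alpha=\prod_{i=1}^d\mu^{r_i,i}_{\alpha_i}.
\]
By Fubini, the $e_\alpha$ are orthonormal in $L^2(\mathcal{X},\mu)$, and multiplying the $d$ one-dimensional expansions gives $k_d(x,x')=\sum_\alpha\mu_\alpha e_\alpha(x)e_\alpha(x')$. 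Convergence is absolute, since every term is nonnegative on the diagonal. I would then group the multi-indices by total degree $|\alpha|=\sum_i\alpha_i=i$, index the $N(d,i)=\binom{d+i-1}{i}$ members of group $i$ by $j$, and obtain \eqref{Mercer decomposition of product}. Strictly, "Mercer decomposition" asks for eigenvalues in non-increasing order, but the grouping is only a relabeling, so this is harmless. The counting bound is immediate for fixed $i$: $N(d,i)=\binom{d+i-1}{i}\sim d^i/i!$, so $\tfrac{1}{2\,i!}d^i\le N(d,i)\le d^i$ once $d$ is large.

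The eigenvalue bounds come from \eqref{eq:eigendecay_1d}. If $|\alpha|=i$, then
\[
\mu_\alpha\le\Big(\prod_l C_l\Big)\prod_l r_l^{\alpha_l}\le C'(C/d)^{i},\qquad \mu_\alpha\ge c'(c/d)^i .
\]
Here the decay factor must be read as $r_l^{\,j}$, consistent with $r_l=\Theta(d^{-1})$ being the small parameter; taken literally, $r^{-j}$ with $r<1$ would blow up, so I would note this convention. The resulting constants $C'C^i$ and $c'c^i$ depend on $i$ but not on $d$ or $j$, which is all that is required for fixed $i$. Setting $\mathfrak{C}_2=\max(C'C^i,1)$ and $\mathfrak{C}_1=\min(c'c^i,\tfrac{1}{2\,i!})$ finishes the proof, and with these constants the two-sided bounds hold simultaneously for eigenvalues and multiplicities.

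\textbf{Main obstacle.} I expect the delicate step to be controlling the product of the per-coordinate constants uniformly in $d$. Naively this product could explode as $c_l^{d}$ or $C_l^{d}$, and \eqref{constant bound} is exactly the assumption that prevents this. Beyond that, I would check that the product expansion is a genuine Mercer decomposition. The tensor-product functions $e_\alpha$ span $L^2$ only if each one-dimensional family is complete. If some family is not complete, $e_\alpha$ still gives the decomposition of $k_d$ restricted to the closure of its range, which is all that Mercer's theorem requires. Finally, the fact that the diagonal $\sum_\alpha\mu_\alpha e_\alpha(x)^2=\prod_l\hat k_{r_l,l}(x_l,x_l)$ converges is inherited from the one-dimensional expansions.
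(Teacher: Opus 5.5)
Your proposal is correct and follows essentially the same route as the paper. Both arguments use the tensor-product eigenfunctions $e_\alpha$ with $\mu_\alpha=\prod_l\mu^{r_l,l}_{\alpha_l}$, group them by $|\alpha|=i$ with $N(d,i)=\binom{d+i-1}{i}$, and combine \eqref{constant bound} with $r_l=\Theta(d^{-1})$ (read as decay $r_l^{\,j}$, as the paper's proof also implicitly does) to get $\mu_\alpha=\Theta(d^{-i})$; the differences are cosmetic: you bound $N(d,i)$ directly via $\binom{d+i-1}{i}\sim d^i/i!$ where the paper uses Stirling's formula, and you are more careful about orthonormality, completeness and eigenvalue ordering.
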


\begin{remark}
    We observe that inner product kernels on the sphere, although not strictly within the product kernel family, also possess such staircase descent eigenvalue property. Consequently, our results can also be applied to inner product kernels on sphere, thereby surpassing the previous work such as \cite{zhang2024optimal}.
\end{remark}

Proposition \ref{prop:staircase} provides the explicit asymptotic properties of the eigenvalues, and hence we are able to derive the  convergence rates of large dimensional product KRR. 

In order to calculate the quantities in Theorem \ref{main theorem}, \ref{main theorem s<1}, we need the following assumption.
\begin{assumption}\label{assump:lower bound}
    Denote $ q $ as the smallest integer such that $ q > \gamma$ and $\mu_{q} \neq 0$. Suppose that there exists an absolute constant $c_{0} > 0$ such that for any $ d $ and $ i \le q$ , we have
$$       \sum_{j=1}^{N(d,i)}\mu_{i,j}^{-s}f_{i,j}^2 \ge c_{0}.$$
\end{assumption}

\begin{remark}
    Assumption \ref{assump:lower bound} actually implies that $ f_{\rho}^{*} \notin [\mathcal{H}]^{t}$ for any $t > s$. Similar assumptions have been adopted when the lower bound of generalization error needs to be derived in the fixed-dimensional setting, such as equation (8) in \cite{Cui2021GeneralizationER} and Assumption 3 in \cite{li2023asymptotic}.
\end{remark}

Next we shall provide an insightful but informal proposition on the asymptotic properties of the quantities of eigenvalues of large dimensional product kernels. The formal form is deferred to Appendix \ref{section eigenvlue facts}.

\begin{proposition}[informal]\label{prop:gaussian eigenvalues}
    Recall the definitions of $\mathcal{N}_{1}(\lambda)$, $\mathcal{N}_{2}(\lambda)$, $\mathcal{R}_2(\lambda)$ in  \eqref{n1 n2 m1 m2}. Define $\tilde{s}=\min\{s,2\}.$ If we choose $\lambda = d^{-l}$ for some $l > 0$, denote $ p \le l \le p+1$ for some $ p \in \{0,1,2\cdots\}$, when $d$ is large enough, we have: 
    \begin{gather*}
    \mathcal{N}_{1}(\lambda) = O\left(\lambda^{-1}\right), \quad 
    \mathcal{N}_{2}(\lambda) = \Theta\left(d^{p} + \lambda^{-2} d^{-(p+1)}\right), \\
    \mathcal{R}_{2}(\lambda) = \Theta(\lambda^2d^{(2-\tilde{s})p}+d^{-(p+1)\tilde{s}}),\\
    \left\| f_{\lambda} \right\|_{L^{\infty}} = O\left( d^{\frac{(1-s)p}{2}} + \lambda^{-1} d^{-\frac{(1+s)(p+1)}{2}}\right)(\text{when }  s<1).
\end{gather*}
\end{proposition}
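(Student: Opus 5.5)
The plan is to work level by level, using the staircase structure from Proposition \ref{prop:staircase}. At level $i$ there are $N(d,i)\asymp d^i$ eigenvalues, each $\mu_{i,j}\asymp d^{-i}$. Take $\lambda=d^{-l}$ with $p\le l\le p+1$. Then:
\begin{itemize}
\item for $i\le p$ we have $\mu_{i,j}\gtrsim\lambda$;
\item for $i\ge p+1$ we have $\mu_{i,j}\lesssim\lambda$.
\end{itemize}
So in every sum we replace $\lambda_i/(\lambda_i+\lambda)$ by $\Theta(1)$ on the low levels and by $\Theta(\mu_{i,j}/\lambda)$ on the high levels. Proposition \ref{prop:staircase} is stated for fixed $i$, while here we sum over all levels. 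The high-level tails must therefore be controlled uniformly in $i$, and for this I would go back to Assumption \ref{product structure assumption} directly.

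\textbf{The $\mathcal{N}$ terms.} For $\mathcal{N}_1$ we use $\lambda_i/(\lambda_i+\lambda)\le\lambda_i/\lambda$. This gives $\mathcal{N}_1\le\lambda^{-1}\sum\lambda_i=\lambda^{-1}\int k(x,x)\,d\mu\le\kappa^2\lambda^{-1}$ by Assumption \ref{assumption kernel}. For $\mathcal{N}_2$ we split at level $p$.
\begin{itemize}
\item Levels $i\le p$ contribute $\asymp\sum_{i\le p}N(d,i)\asymp d^p$.
\item Levels $i\ge p+1$ contribute $\lambda^{-2}\sum_{i\ge p+1}\sum_j\mu_{i,j}^2$.
\end{itemize}
For the upper bound on the tail I would use the product structure. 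The generating function is $\sum_{i,j}\mu_{i,j}^2 z^{i}=\prod_{k}\sum_{m}(\mu^{r_k,k}_m)^2 z^m$. Combining \eqref{eq:eigendecay_1d} with $r_k\asymp 1/d$ (I read the decay as $\mu^{r_k,k}_m\asymp (r_k)^m$, i.e.\ $d^{-m}$), the level-$i$ mass is bounded by $\prod C_k^2$ times $\binom{d+i-1}{i}(C/d)^{2i}\lesssim (C^2/d)^{i}/i!$. The tail from $p+1$ is then dominated by its first term $\asymp d^{-(p+1)}$. The matching lower bound comes from level $p+1$ alone. Hence $\mathcal{N}_2\asymp d^p+\lambda^{-2}d^{-(p+1)}$.

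\textbf{The bias $\mathcal{R}_2$.} Write $f_{i,j}^2=\mu_{i,j}^{s}a_{i,j}^2$ with $\sum a^2\le R_1^2$.
\begin{itemize}
\item On low levels, $\frac{\lambda^2}{(\mu+\lambda)^2}\mu^s\asymp\lambda^2\mu^{s-2}\lesssim\lambda^2d^{i(2-s)}$. If $s\le2$ this is maximized at $i=p$, giving $\lambda^2d^{(2-s)p}$. If $s>2$ it is maximized at $i=0$, giving $\lambda^2$, which matches $\tilde s=2$ at $p=0$. For general $p$ with $s>2$, the maximum of $\lambda^2\mu^{s-2}$ over levels $\le p$ is $\lambda^2$, and $\lambda^2\le\lambda^2 d^{0}$. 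Since the claimed expression for $\tilde s=2$ is $\lambda^2+d^{-2(p+1)}$, this is consistent.
\item On high levels, $\frac{\lambda^2}{(\mu+\lambda)^2}\le1$ and $\mu^{s}\lesssim d^{-(p+1)s}$. When $s>2$ we instead use $\frac{\lambda^2\mu^s}{(\mu+\lambda)^2}\le\mu^{s}$ together with the low-level analysis, and the sharper high-level term is $d^{-(p+1)s}\le d^{-2(p+1)}$.
\end{itemize}
This gives the upper bound. The lower bound uses Assumption \ref{assump:lower bound}: put the mass $c_0$ at level $p$ (respectively $p+1$, or level $0$ when $s>2$) to get each term from below. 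This requires $p+1\le q$, which is the regime of interest.

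\textbf{The $L^\infty$ bound and the main obstacle.} For $\|f_\lambda\|_{L^\infty}$ with $s<1$ I would use $\|f_\lambda\|_{L^\infty}\le\kappa\|f_\lambda\|_{\mathcal H}$ and $\|f_\lambda\|_{\mathcal H}^2=\sum\frac{\mu}{(\mu+\lambda)^2}\mu^{s}a^2$. The coefficient $\frac{\mu^{1+s}}{(\mu+\lambda)^2}$ behaves as follows.
\begin{itemize}
\item On low levels it is $\asymp\mu^{s-1}\le d^{p(1-s)}$, the worst case being $i=p$.
\item On high levels it is $\asymp\lambda^{-2}\mu^{1+s}\le\lambda^{-2}d^{-(p+1)(1+s)}$.
\end{itemize}
Taking square roots gives the stated bound. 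The main obstacle is the uniform-in-level control of the high-level tails used in the $\mathcal{N}_2$ step, since Proposition \ref{prop:staircase} only gives fixed-$i$ constants. I would handle it by the generating-function and multinomial counting argument above, using $\prod C_k\le C'$ and $r_k\le C/d$ to bound the level-$i$ mass by $(C''/d)^i/i!$-type terms summable in $i$.
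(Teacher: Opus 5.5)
Your proposal is correct and follows essentially the same route as the paper's proof (Lemmas \ref{lemma calculation n1 n2}, \ref{lemma calculation m2}, \ref{lemma calculation flambda}). Both split at level $p$, compare $\mu_{i,j}$ with $\lambda$, bound $\mathcal{N}_1$ by $\kappa^2/\lambda$, weight $\mathcal{R}_2$ by $\mu^{-s}f^2$ with Assumption \ref{assump:lower bound} at levels $p,p+1$ for the lower bound (your remark that this needs $p+1\le q$ is a point the paper leaves implicit), and use $\|f_\lambda\|_{L^\infty}\le\kappa\|f_\lambda\|_{\mathcal H}$. The one difference is the high-level tail of $\mathcal{N}_2$: the paper avoids counting by writing $\sum_{k>p}\sum_j\mu_{k,j}^2\le\sup_{k>p,j}\mu_{k,j}\cdot\sum_{k,j}\mu_{k,j}\lesssim d^{-(p+1)}\kappa^2$, using the same uniform bound $\mu_\alpha\le C'(C/d)^{|\alpha|}$ you rely on, so your generating-function argument is valid but more than is needed.
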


Now we are prepared to provide the rates of the generalization error under the large dimensional product kernel case, which are direct applications of Theorem \ref{main theorem}, \ref{main theorem s<1}.

\begin{theorem}[$s\ge1$]
    \label{thm:upper bound gaussian}
    Let $c_{1} d^{\gamma} \le n \le c_{2} d^{\gamma} $ for some fixed $ \gamma >0$ and absolute constants $ c_{1}, c_{2}$. Define $ \tilde{s} = \min\{s,2\} $. Under Assumption \ref{product structure assumption}, when $s\ge 1$,  we have:
    \begin{itemize}
        \item[(i)] When $\gamma \in \left(p+p\tilde{s},~ p+p\tilde{s}+1 \right]$ for some $ p \in \{0,1,2\cdots\}$, by choosing $ \lambda = d^{-\frac{\gamma+p-p\tilde{s}}{2}} \cdot \mathbf{1}_{p>0} + d^{-\frac{\gamma}{2}} \ln{d} \cdot \mathbf{1}_{p=0} $, we have
        \begin{align}
             \mathbb{E}\left[\left\|\hat{f}_\lambda-f_\rho^*\right\|_{L^2}^2 \;\Big|\; \boldsymbol{X} \right]=\begin{cases}
            \Theta_{\mathbb{P}}\left( d^{-\gamma} \ln^{2}{d} \right) = \Theta_{\mathbb{P}}\left( n^{-1} \ln^{2}{n} \right),& p=0, \\[3pt]
            \Theta_{\mathbb{P}}\left( d^{-\gamma + p} \right) = \Theta_{\mathbb{P}}\left( n^{-1 + \frac{p}{\gamma}} \right), & p>0;
            \end{cases}
        \end{align}

        \item[(ii)] When $\gamma \in \left(p+p\tilde{s}+1,~ p+p\tilde{s}+2\tilde{s}-1 \right]$ for some $ p \in \{0,1,2\cdots\}$, by choosing $ \lambda = d^{-\frac{\gamma+3p-p\tilde{s}+1}{4}}$, we have
        \begin{equation}
            \mathbb{E}\left[\left\|\hat{f}_\lambda-f_\rho^*\right\|_{L^2}^2 \;\Big|\; \boldsymbol{X} \right] = \Theta_{\mathbb{P}}\left( d^{-\frac{\gamma-p+p\tilde{s}+1}{2}} \right) = \Theta_{\mathbb{P}}\left( n^{- \frac{\gamma-p+p\tilde{s}+1}{2 \gamma}} \right);
        \end{equation}

        \item[(iii)] When $\gamma \in \left(p+p\tilde{s}+2\tilde{s}-1,~ (p+1)+(p+1)\tilde{s} \right]$ for some $ p \in \{0,1,2\cdots\}$, by choosing $ \lambda = d^{-\frac{\gamma+(p+1)(1-\tilde{s})}{2}}$, we have
        \begin{equation}
             \mathbb{E}\left[\left\|\hat{f}_\lambda-f_\rho^*\right\|_{L^2}^2 \;\Big|\; \boldsymbol{X} \right] = \Theta_{\mathbb{P}}\left( d^{-(p+1)\tilde{s}} \right) = \Theta_{\mathbb{P}}\left( n^{- \frac{(p+1)\tilde{s}}{\gamma}} \right).
        \end{equation}
    \end{itemize}
    Also, there exists no $\lambda$ whose convergence rate of the generalization error of KRR is faster than the above choice.
\end{theorem}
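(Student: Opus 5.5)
The plan is to read off the rates from Theorem \ref{main theorem}, which applies since $s\ge1$, using the eigenvalue asymptotics of Proposition \ref{prop:gaussian eigenvalues} in its formal form from Appendix \ref{section eigenvlue facts}. For $\lambda=d^{-l}$ with $p\le l\le p+1$ we have $\mathcal{N}_2(\lambda)\asymp d^{p}+d^{2l-p-1}$ and $\mathcal{R}_2(\lambda)\asymp d^{-2l+(2-\tilde s)p}+d^{-(p+1)\tilde s}$. Hence the quantity $\sigma_\epsilon^2\mathcal{N}_2/n+\mathcal{R}_2$ is a sum of four explicit powers of $d$. Since $n\asymp d^\gamma$, this sum is $d^{p-\gamma}+d^{2l-p-1-\gamma}+d^{-2l+(2-\tilde s)p}+d^{-(p+1)\tilde s}$.

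\textbf{Verifying the conditions \eqref{condition}.} Before using the rate, I will check the four conditions for each prescribed $\lambda$. First, $\lambda_0\asymp1$ holds because $\mu_{0,1}$ lies between $\mathfrak{C}_1$ and $\mathfrak{C}_2$. Next, $\mathcal{N}_1(\lambda)\ln n/n\lesssim \ln n/(n\lambda)$, so both of the first two conditions reduce to $n\lambda/\ln n\to\infty$, i.e.\ $l<\gamma$. Third, $\mathcal{N}_2\ge$ const is automatic, since the constant eigenvalue contributes $\Theta(1)$. The last condition, $\ln n/(n\lambda^2)=o(\mathcal{N}_2)$, compares $d^{2l-\gamma}\ln d$ with $d^{p}+d^{2l-p-1}$. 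It holds when $2l-\gamma<p$, or when $2l-\gamma\le 2l-p-1$ with strict room for the log, which means $\gamma>p+1$ or a logarithmic margin.

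In case (i) with $p>0$, $\gamma\le p+p\tilde s+1$ gives $2l=\gamma+p-p\tilde s<\gamma$. For $p=0$ the factor $\ln d$ in $\lambda$ is exactly what absorbs the $\ln n$. Cases (ii) and (iii) have $\gamma>p+1$, so the last condition follows from the $d^{2l-p-1}$ term. I must also confirm $p\le l\le p+1$ in each case, which is simple arithmetic on the interval endpoints. Boundary values of $\gamma$ may require using the adjacent $p$, or the same log adjustment.

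\textbf{Computing the rates.} Next I plug in each $\lambda$ and identify the dominant term.

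In case (i), $l=(\gamma+p-p\tilde s)/2$ balances $d^{2l-p-1-\gamma}$ against the bias $d^{-2l+(2-\tilde s)p}$, and both are dominated by $d^{p-\gamma}$. The term $d^{-(p+1)\tilde s}$ is $\le d^{p-\gamma}$ because $\gamma\le p+p\tilde s+1\le p+(p+1)\tilde s$. This yields $d^{-\gamma+p}$, with $\ln^2 d$ appearing when $p=0$ from the extra log in $\lambda$.

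In case (ii), $l=(\gamma+3p-p\tilde s+1)/4$ equates $d^{2l-p-1-\gamma}$ with $d^{-2l+(2-\tilde s)p}$, giving $d^{-(\gamma-p+p\tilde s+1)/2}$. The interval endpoints are exactly where this dominates $d^{p-\gamma}$ (the lower endpoint) and $d^{-(p+1)\tilde s}$ (the upper endpoint).

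In case (iii), $l=(\gamma+(p+1)(1-\tilde s))/2$ makes the variance $d^{2l-p-1-\gamma}=d^{-(p+1)\tilde s}$ equal the irreducible bias floor, while the remaining terms are smaller.

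\textbf{Optimality over $\lambda$.} For the final claim that no $\lambda$ does better, I will minimize the four-term expression over all $l>0$. This means ranging over every $p'$, not only the $p$ in the statement, and also covering $\lambda$ outside the regime where \eqref{condition} holds. For $l$ satisfying the conditions, Theorem \ref{main theorem} gives a $\Theta_{\mathbb{P}}$ characterization, and a piecewise-linear minimization in $l$ shows the prescribed choices are optimal up to constants and logs.

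I expect the main obstacle to be the $\lambda$ that violate \eqref{condition}, for instance $l\ge\gamma$, or $l$ so large that $1/(n\lambda^2)$ dominates. For these I would invoke a monotonicity or lower-bound argument. The bias term $\mathcal{R}_2(\lambda)$ is increasing in $\lambda$, and Assumption \ref{assump:lower bound} gives its lower bound, so large $\lambda$ is handled directly. For very small $\lambda$, I would argue that the variance is at least its value at a larger admissible $\lambda$, since the conditional variance is monotone in $\lambda$; alternatively I would cite a known lower bound for interpolation-like regimes. The $p=0$ logarithmic factor, together with the endpoint cases, is where I expect the delicate bookkeeping.
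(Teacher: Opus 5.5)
Your proposal is correct and follows essentially the paper's route: apply Theorem \ref{main theorem} with the eigenvalue-order lemmas, check \eqref{condition} case by case, then prove optimality by pivoting on the monotonicity of $\mathcal{R}_2$ for larger $\lambda$ and of the conditional variance (through $(\mathbf{K}+\lambda)^{-2}$) for smaller $\lambda$. Two small points: in case (i) the chosen $l$ balances $d^{p-\gamma}$, not $d^{2l-p-1-\gamma}$, against the bias (your final rate is still right). Your ``up to logs'' caveat is also genuinely needed at $p=0$: there $\operatorname{Var}(\lambda_{\mathrm{balance}})\asymp d^{-\gamma}$ is smaller than the total $d^{-\gamma}\ln^2 d$, so the monotonicity argument gives optimality only up to that factor.
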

\begin{remark}
    In the case $0<\gamma\le1$, we choose $\lambda=d^{-\gamma/2}\ln{d}$ rather than $d^{-\gamma/2}$ in order to meet the conditions in Theorem \ref{main theorem}. Correspondingly, the convergence rate is slightly smaller than $d^{-\gamma+p}$, which is the case when $2p<\gamma\le 2p+1$, $p>0$. However, we shall show in Theorem \ref{theorem lower bound} that both cases are in accordance with the minimax rate of KRR in large dimensional setting. In the following context we may ignore $\ln^2 n$ without causing any ambiguity.
\end{remark}

\begin{theorem}[$s<1$]\label{theorem s<1 gaussian}Let $c_{1} d^{\gamma} \le n \le c_{2} d^{\gamma} $ for some fixed $ \gamma >0$ and absolute constants $ c_{1}, c_{2}$. Under Assumption \ref{product structure assumption}, when $s< 1$,  we have:
    \begin{itemize}
        \item If $ \frac{1}{2} < s < 1$:
        \begin{itemize}
        \item[(i)] When $\gamma \in \left(p+ps,~ p+ps+s \right]$ for some $p \in \mathbb{N}$, by choosing $ \lambda = d^{-\frac{\gamma+p-ps}{2}} \cdot \mathbf{1}_{p>0} + d^{-\frac{\gamma}{2}} \ln{d} \cdot \mathbf{1}_{p=0} $, we have
        \begin{align}
             \mathbb{E}\left[\left\|\hat{f}_\lambda-f_\rho^*\right\|_{L^2}^2 \;\Big|\; \boldsymbol{X} \right]=\begin{cases}
            \Theta_{\mathbb{P}}\left( d^{-\gamma} \ln^{2}{d} \right) = \Theta_{\mathbb{P}}\left( n^{-1} \ln^{2}{n} \right),& p=0, \\[3pt]
            \Theta_{\mathbb{P}}\left( d^{-\gamma + p} \right) = \Theta_{\mathbb{P}}\left( n^{-1 + \frac{p}{\gamma}} \right), & p>0;
            \end{cases}
        \end{align}

        \item[(ii)] When $\gamma \in \left(p+ps+s,~ (p+1)+(p+1)s \right]$ for some $p \in \mathbb{N}$, by choosing $ \lambda = d^{-\frac{2p+s}{2}}$, we have
        \begin{equation}
            \mathbb{E}\left[\left\|\hat{f}_\lambda-f_\rho^*\right\|_{L^2}^2 \;\Big|\; \boldsymbol{X} \right] = \Theta_{\mathbb{P}}\left( d^{-(p+1)s} \right) = \Theta_{\mathbb{P}}\left( n^{- \frac{(p+1)s}{\gamma}} \right);
        \end{equation}

        \end{itemize}

    \item If $ 0 < s \le \frac{1}{2}$:  we have the same convergence rates as the case $ s \in (\frac{1}{2},1)$ when
         $\gamma > \frac{3s}{2(s+1)}$.
        
    \end{itemize}

\end{theorem}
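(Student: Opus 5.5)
The plan is to apply Theorem \ref{main theorem s<1} directly, with all spectral quantities supplied by Proposition \ref{prop:gaussian eigenvalues} (formal version in Appendix \ref{section eigenvlue facts}). The matching lower bounds come from Assumption \ref{assump:lower bound}. Throughout, $n\asymp d^\gamma$, $\lambda=d^{-l}$ with $p\le l\le p+1$, and $\tilde s=s$ since $s<1$. The proposition then gives
\[
\mathcal N_2(\lambda)\asymp d^{p}+d^{2l-p-1},\qquad \mathcal R_2(\lambda)\asymp d^{-2l+(2-s)p}+d^{-(p+1)s}.
\]
Hence the right side of \eqref{main bound s<1} is of order
\[
d^{p-\gamma}+d^{2l-p-1-\gamma}+d^{-2l+(2-s)p}+d^{-(p+1)s}.
\]

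\textbf{Case (i): $\gamma\in(p+ps,p+ps+s]$.} Choose $l=(\gamma+p-ps)/2$.
\begin{itemize}
\item Then $-2l+(2-s)p=p-\gamma$, so the bias term $\lambda^2d^{(2-s)p}$ balances the variance term $d^{p}/n$.
\item One checks $l\in[p,p+1]$. Since $\gamma\le p+ps+s$, we get $d^{-(p+1)s}\le d^{p-\gamma}$.
\item $2l-p-1-\gamma=-ps-1<p-\gamma$ because $\gamma<p+ps+1$, so this term is also dominated.
\item The rate is therefore $d^{-\gamma+p}$. For $p=0$ we use the extra $\ln d$, which makes $1/(n\lambda)=o(1)$ and gives $n^{-1}\lambda^{-2}\ln n=o(\mathcal N_2)$; this costs the $\ln^2 d$ factor.
\end{itemize}

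\textbf{Case (ii): $\gamma\in(p+ps+s,p+1+(p+1)s]$.} Choose $l=p+s/2$.
\begin{itemize}
\item Then $d^{-2l+(2-s)p}=d^{-(p+1)s}$, and both are at least $d^{p-\gamma}$ because $\gamma>p+ps+s$.
\item $2l-p-1-\gamma=p+s-1-\gamma<-(p+1)s$ holds trivially.
\item The rate is therefore $d^{-(p+1)s}$.
\end{itemize}

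\textbf{Verifying the hypotheses \eqref{condition s<1}.}
\begin{itemize}
\item $\mathcal N_1/n\cdot\ln n\lesssim d^{l-\gamma}\ln d=o(1)$, since $l<\gamma$ in both cases.
\item $1/(n\lambda)=o(1)$ for the same reason.
\item $\mathcal N_2\gtrsim d^p\ge1$.
\item $n^{-1}\lambda^{-2}\ln n=d^{2l-\gamma}\ln d=o(d^p)$ requires $2l<\gamma+p$. In case (i) this is equality, which is exactly why the $\ln d$ is needed only when $p=0$; for $p>0$ we instead compare with $d^{2l-p-1}$ or use the finer proposition. In case (ii) it reads $2p+s<\gamma+p$, which holds.
\end{itemize}

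\textbf{Main obstacle: the extra condition \eqref{add condition s<1}.} The left side is $d^{l/2-\gamma}\bigl(d^{\gamma(1-s)/2+\gamma\epsilon}+\|f_\lambda\|_{L^\infty}\bigr)$. Using Proposition \ref{prop:gaussian eigenvalues},
\[
\|f_\lambda\|_{L^\infty}\lesssim d^{(1-s)p/2}+d^{\,l-(1+s)(p+1)/2}.
\]
The target on the right is the square root of the rate: $d^{(p-\gamma)/2}$ in case (i) and $d^{-(p+1)s/2}$ in case (ii). The dominant left term is $d^{l/2-\gamma+\gamma(1-s)/2+\gamma\epsilon}$, so the requirement becomes, for small $\epsilon$,
\[
l-\gamma(1+s)<-\gamma+p \quad\text{in case (i)}, \qquad l-\gamma(1+s)<-(p+1)s \quad\text{in case (ii)}.
\]
Substituting the chosen $l$, both inequalities are linear in $\gamma$.
\begin{itemize}
\item For $s>1/2$ they hold on the whole stated ranges. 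For example, in case (ii) with $p=0$ the condition is $s/2<\gamma s$, i.e.\ $\gamma>1/2$, and $\gamma>s>1/2$ holds there.
\item For $s\le1/2$ the condition fails for small $\gamma$. Solving the $p=0$ case-(i) inequality $\gamma/2-\gamma(1+s)<-\gamma+\ldots$ together with the exact threshold from the $d^{l/2}$ factor gives precisely the restriction $\gamma>3s/(2(s+1))$. Hence the theorem asserts the rates only there.
\end{itemize}
I expect this bookkeeping of exponents, including the $\|f_\lambda\|_{L^\infty}$ term and the borderline $\epsilon$, to be the delicate step. Once the conditions are verified, Theorem \ref{main theorem s<1} yields the $\Theta_{\mathbb P}$ statements.
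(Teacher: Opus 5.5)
Your route is the paper's: apply Theorem \ref{main theorem s<1} with the orders from Lemmas \ref{lemma calculation n1 n2}--\ref{lemma calculation flambda}. You take the balancing $l$ in case (i), use $l=p+s/2$ in case (ii) as the paper does, and then check the four conditions and \eqref{add condition s<1}. Your observation that the $n^{\frac{1-s}{2}+\epsilon}$ term dominates both $\|f_\lambda\|_{L^\infty}$ contributions (because $\gamma>p$) is correct and shortens the paper's check.

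The exponent bookkeeping is wrong in two places, one of them exactly the step you flagged. First, in case (i) you say $2l=\gamma+p$. In fact $2l=\gamma+p-ps$, so for $p>0$ the condition $n^{-1}\lambda^{-2}\ln n=o(\mathcal N_2(\lambda))$ reads $d^{p-ps}\ln d=o(d^p)$ and holds directly. Your proposed fallback of comparing with $d^{2l-p-1}$ would need $\gamma>p+1$, which is not available on $(p+ps,p+ps+s]$.

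Second, and more importantly, your derivation of the $s\le\frac12$ threshold does not work as written.
\begin{itemize}
\item Case (i). Your requirement $l<p+\gamma s$ with $l=(\gamma+p-ps)/2$ is equivalent to $(1-2s)\gamma<p(1+s)$. It holds for all $s>\frac12$, and it holds on the whole interval when $p\ge1$, since $1-2s<2p(1+s)$. But for $p=0$ and $s\le\frac12$ it fails for every $\gamma\in(0,s]$, so no threshold comes out of case (i).
\item Case (ii). The threshold comes from here. With $l=p+s/2$, your inequality $l-\gamma(1+s)<-(p+1)s$ is equivalent to $\gamma>\frac{2p+3s+2ps}{2(1+s)}$. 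For $p=0$ this is $\gamma>\frac{3s}{2(1+s)}$, not $s/2<\gamma s$ (i.e.\ $\gamma>\frac12$) as you wrote.
\end{itemize}
Completing the check:
\begin{itemize}
\item For $s>\frac12$, the case (ii), $p=0$ condition is implied by $\gamma>s$, since $\frac{3s}{2(1+s)}<s$.
\item For $p\ge1$, it holds on the whole case (ii) interval, because $2(1+s)(p+ps+s)-(2p+3s+2ps)=2ps(1+s)+s(2s-1)>0$.
\item For $s\le\frac12$ one has $\frac{3s}{2(1+s)}\ge s$. So the restriction $\gamma>\frac{3s}{2(1+s)}$ both discards the failing case (i), $p=0$ range and is exactly what case (ii), $p=0$ needs.
\end{itemize}
With these corrections your argument yields the theorem as stated.
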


\begin{remark}
    Note that in Theorem \ref{theorem s<1 gaussian}, we do not claim that the $\lambda$ we choose is optimal. However, we shall show in Theorem \ref{theorem lower bound} that the minimax lower bound is in accordance with the rate given in Theorem \ref{theorem s<1 gaussian}. Hence, the rates in Theorem \ref{theorem s<1 gaussian} are the fastest convergence rates KRR can achieve.
\end{remark}

\begin{remark}
    In Theorem \ref{theorem s<1 gaussian}, we only prove the case $\gamma > \frac{3s}{2(s+1)}$ when $s\le \frac{1}{2}$. However, notice that $\frac{3s}{2(s+1)}\le \frac{1}{2}$ when $s\le \frac{1}{2}$, we actually provide the exact convergence rate for all $n\gtrsim d^{\frac{1}{2}}$. 
\end{remark}

Next, we provide the minimax lower bound of large dimensional product kernels.

\begin{theorem}[minimax lower bound]\label{theorem lower bound}
    Let $c_{1} d^{\gamma} \le n \le c_{2} d^{\gamma} $ for some fixed $ \gamma >0$ and absolute constants $ c_{1}, c_{2}$. L et $k=k_d$ be a product kernel such that Assumption \ref{product structure assumption} holds. Let $\mathcal{P}$ consist of all distributions $\rho$ on $\mathcal{X} \times \mathcal{Y}$ such that Assumption \ref{assumption noise}, \ref{assumption f*}, \ref{assump:lower bound} hold. Then we have:

        \begin{itemize}
            \item[(i)] When $\gamma \in \left(p+ps, p+ps+s \right]$ for some $p \in \mathbb{N}$, for any $\epsilon > 0$, there exist constants $\mathfrak{C}_1$ and $\mathfrak{C}$ only depending on $s, \epsilon, \gamma, \sigma_\epsilon, \kappa, c_{1}$ and $ c_{2} $ such that for any $d \geq \mathfrak{C}$, we have:
            \begin{equation}\label{minimax lower eq 1}
            \min _{\hat{f}} \max _{\rho \in \mathcal{P}} \mathbb{E}_{(\boldsymbol{X}, \boldsymbol{y}) \sim \rho^{\otimes n}}
            \left\|\hat{f} - f_{\rho}^{*}\right\|_{L^2}^2
            \geq \mathfrak{C}_1 d^{-\gamma + p - \epsilon};
            \end{equation}

            \item[(ii)] When $\gamma \in \left(p+ps+s, (p+1)+(p+1)s \right]$ for some $p \in \mathbb{N}$, there exist constants $\mathfrak{C}_1$ and $\mathfrak{C}$ only depending on $s, \gamma, \sigma_\epsilon, \kappa, c_{1}$ and $ c_{2} $ such that for any $d \geq \mathfrak{C}$, we have:
            \begin{equation}\label{minimax lower eq 2}
            \min _{\hat{f}} \max _{\rho \in \mathcal{P}} \mathbb{E}_{(\boldsymbol{X}, \boldsymbol{y}) \sim \rho^{\otimes n}}
            \left\|\hat{f} - f_{\rho}^{*}\right\|_{L^2}^2
            \geq \mathfrak{C}_1 d^{-(p+1)s}.
            \end{equation}
        \end{itemize}

\end{theorem}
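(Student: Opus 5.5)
We prove the minimax lower bound (Theorem \ref{theorem lower bound}) with Fano's method on a packing of functions built only from the eigenpairs in Proposition \ref{prop:staircase}. No property of the eigenfunctions beyond $L^2(\mu)$-orthonormality is needed. Take Gaussian noise $\epsilon\sim N(0,\sigma_\epsilon^2)$, so that $\mathrm{KL}(\rho_f^{\otimes n}\|\rho_g^{\otimes n})=\frac{n}{2\sigma_\epsilon^2}\|f-g\|_{L^2}^2$. The candidate regression functions are
\[
f_\omega=f_0+\delta\sum_{j=1}^{m}\omega_j e_{\ell,j},\qquad \omega\in\{0,1\}^{m}.
\]
Here $\ell$ is a chosen level (see below) and $m\asymp N(d,\ell)\asymp d^{\ell}$. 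The fixed background $f_0=\sqrt{c_0}\,\mu_{i,1}^{s/2}e_{i,1}$ is summed over a few levels $i\le q$ not equal to $\ell$, and with $\ell$ itself included as well if necessary. This choice makes Assumption \ref{assump:lower bound} hold for every $\omega$. Orthonormality gives $\|f_\omega\|_{[\mathcal{H}]^s}^2=\|f_0\|_{[\mathcal{H}]^s}^2+\delta^2\mu_{\ell}^{-s}|\omega|$. The Varshamov--Gilbert bound supplies $M\ge 2^{m/8}$ codewords with pairwise Hamming distance at least $m/8$. These satisfy $\|f_\omega-f_{\omega'}\|_{L^2}^2\ge \delta^2 m/8$ and $\mathrm{KL}\le \frac{n}{2\sigma_\epsilon^2}\delta^2 m$. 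Fano then yields a lower bound of order $\delta^2 m$ provided
\[
n\delta^2 m\lesssim \sigma_\epsilon^2\, m \quad\text{and}\quad \delta^2 m\,\mu_\ell^{-s}\lesssim R_1^2 .
\]
The first condition reads $\delta^2\lesssim 1/n$. The second reads $\delta^2\lesssim d^{-\ell s}/m\asymp d^{-\ell(1+s)}$.

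Case (ii), $\gamma\in(p+ps+s,(p+1)+(p+1)s]$, uses level $\ell=p+1$, so $m\asymp d^{p+1}$. Here $d^{-(p+1)(1+s)}\lesssim d^{-\gamma}\asymp 1/n$, so the source constraint is the binding one. We take $\delta^2= c\, d^{-(p+1)(1+s)}$, which also meets the KL constraint. The resulting risk lower bound is $\delta^2 m\asymp d^{-(p+1)s}$, which is \eqref{minimax lower eq 2}.

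Case (i), $\gamma\in(p+ps,p+ps+s]$, is where the $\epsilon$ loss enters. We want variance-type rate $d^{p}/n$, i.e. information of order $1/n$ on each of $\asymp d^{p}$ coordinates. Level $p$ gives $m\asymp d^{p}$ with $\delta^2\asymp 1/n$. The source constraint becomes $d^{p}\cdot d^{-\gamma}\cdot d^{ps}\lesssim 1$, i.e. $\gamma\ge p+ps$, which holds. Fano then gives $\delta^2 m\asymp d^{p-\gamma}$, and this is of the claimed order. The constraint is tight at the endpoint, and the degenerate case $p=0$ has $m\asymp 1$, where Fano needs $\log M\gtrsim 1$. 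In either situation we lose a factor $d^{-\epsilon}$: we shrink $\delta^2$ by $d^{-\epsilon}$, or use about $d^{p-\epsilon}$ coordinates drawn partly from level $p+1$, or for $p=0$ use a two-point Le Cam argument. This explains the $d^{-\gamma+p-\epsilon}$ form of \eqref{minimax lower eq 1}. All constants depend only on $s,\gamma,\sigma_\epsilon,c_1,c_2$ and on the $\mathfrak{C}_1,\mathfrak{C}_2$ of Proposition \ref{prop:staircase}, valid for $d\ge\mathfrak{C}$.

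The main obstacle is ensuring that every $f_\omega$ lies in $\mathcal{P}$ at once. Assumption \ref{assump:lower bound} must hold at every level $i\le q$ while the total $[\mathcal{H}]^s$ norm stays below $R_1$. The perturbation can spend only a constant fraction of the norm budget, and the background $f_0$ must carry mass $c_0$ at each of the finitely many levels $\le q$. This forces $c_0\,(q+1)<R_1^2$, an implicit compatibility requirement on the absolute constants that we will assume. A second point to check is that the Gaussian-noise family is admissible: Assumption \ref{assumption noise} only asks for finite variance, so it is.
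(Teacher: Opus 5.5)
Your argument is correct, but it reaches the bound by a different route from the paper. The paper builds no explicit packing. It feeds the Yang--Barron type bound of Lemma \ref{thm_lower_ultimate_tech} with covering entropies of the whole norm ball, computed from the eigenvalues via Lemma \ref{lemma_entropy_of_RKHS} and the staircase counts (Claims \ref{claim_1}, \ref{claim_2}). It then tunes $\tilde\epsilon_1,\tilde\epsilon_2$ until the entropy ratio is below some $\mathfrak{c}<1$; this is generic machinery carried over from the spherical case. Your Varshamov--Gilbert/Fano construction on one eigen-level (Le Cam when $p=0$) needs only $L^2(\mu)$-orthonormality and the $j$-uniform bounds of Proposition \ref{prop:staircase}. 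It is also self-contained.

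Your construction also handles membership in $\mathcal{P}$, which the paper does not. The paper simply asserts $\mathcal{D}\subset\mathcal{P}$, although e.g.\ $f=0\in\mathcal{B}$ violates Assumption \ref{assump:lower bound}. Your background $f_0$ genuinely places every hypothesis in $\mathcal{P}$. The price is the condition $(q+1)c_0<R_1^2$, which is nearly forced anyway, since $\mathcal{P}=\emptyset$ once $(q+1)c_0>R_1^2$.

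You undersell your own construction in case (i). For fixed $\gamma>p+ps$, the source constraint at $\delta^2=c\,\sigma_\epsilon^2/n$ is $\delta^2 m\mu_p^{-s}\asymp d^{p+ps-\gamma}\to 0$, so there is slack rather than tightness. Fano then gives $\gtrsim d^{p-\gamma}$ for $p\ge 1$, and the two-point bound gives $\gtrsim 1/n\asymp d^{-\gamma}$ for $p=0$. Hence no factor $d^{-\epsilon}$ is lost, and you obtain the sharper bound $\mathfrak{C}_1 d^{-\gamma+p}$, matching the KRR upper rate for $p>0$. The $\epsilon$ in the statement is an artifact of the paper's entropy-ratio comparison, where both entropies are of order $N(d,p)\ln d$.

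One bookkeeping point: for $\ell\ge 1$, let the perturbation indices avoid $e_{\ell,1}$ so that your norm identity is exact. For $\ell=0$, the overlap with $f_0$ changes the norm only by $O(\delta)$, which is absorbed by the slack $R_1^2-(q+1)c_0$.
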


Figure \ref{figure 5 rates w.r.t d} shows the theoretical rates of generalization error of KRR and the corresponding minimax rate for different $s$. We also provide an experimental result in Appendix \ref{sec:experiment}. From Figure \ref{figure 5 rates w.r.t d} we shall draw the following conclusions:

\begin{figure}[h]\
\centering
\subfigure[]{\includegraphics[width=0.32\columnwidth]{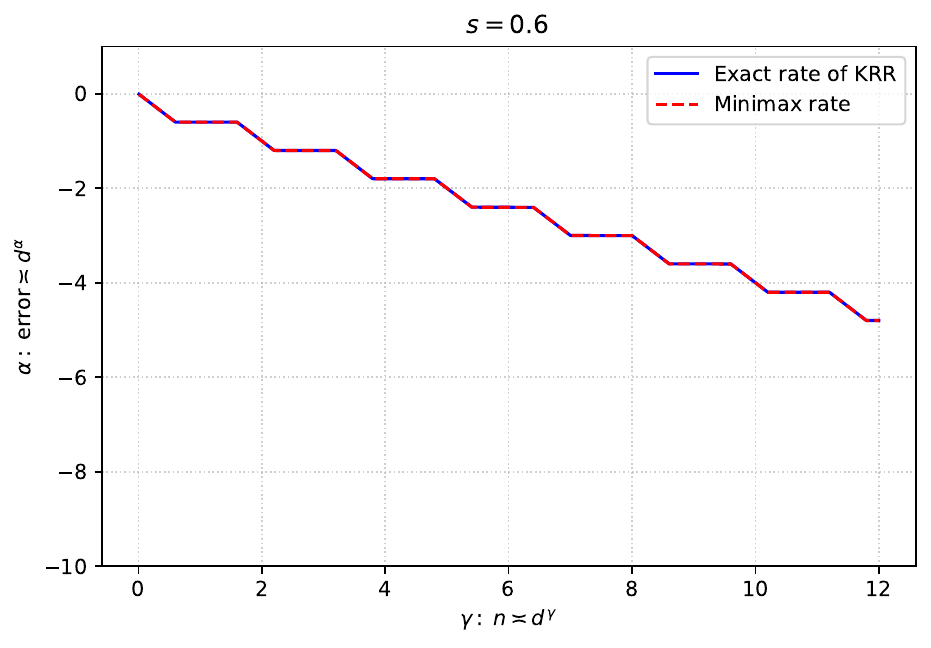}}
\subfigure[]{\includegraphics[width=0.32\columnwidth]{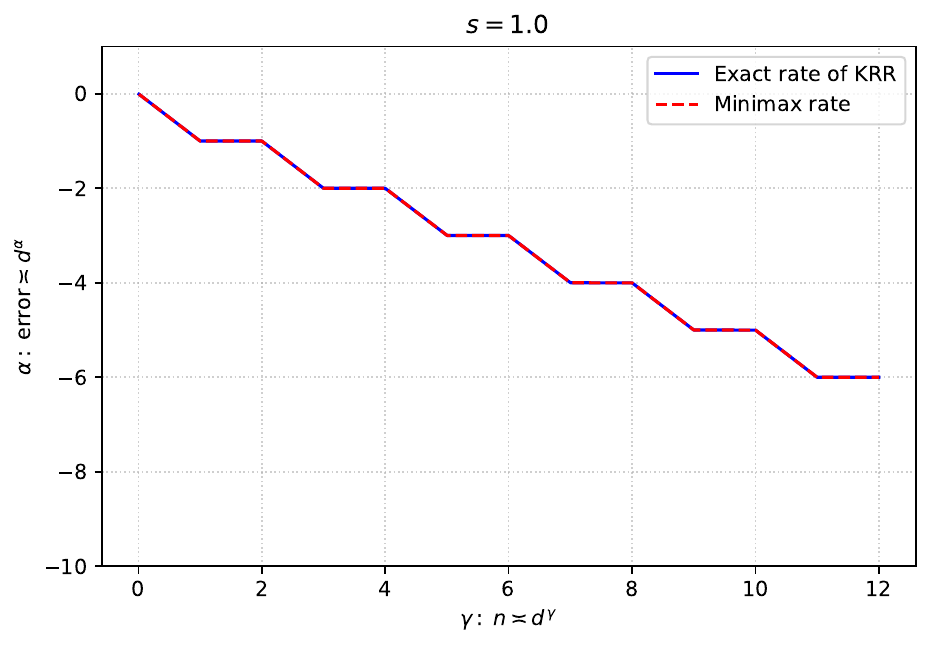}}
\subfigure[]{\includegraphics[width=0.32\columnwidth]{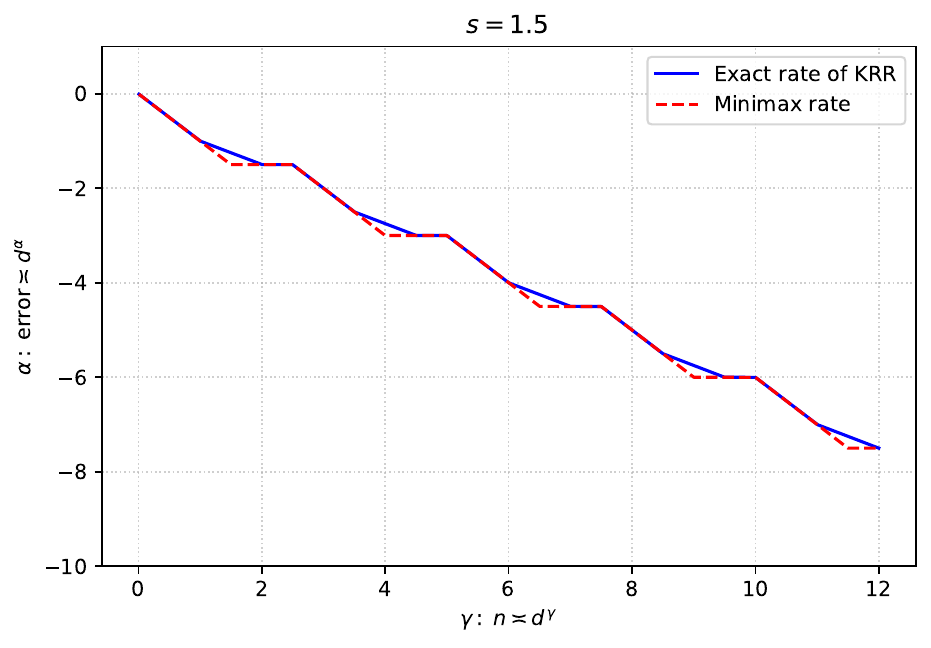}}
\subfigure[]{\includegraphics[width=0.32\columnwidth]{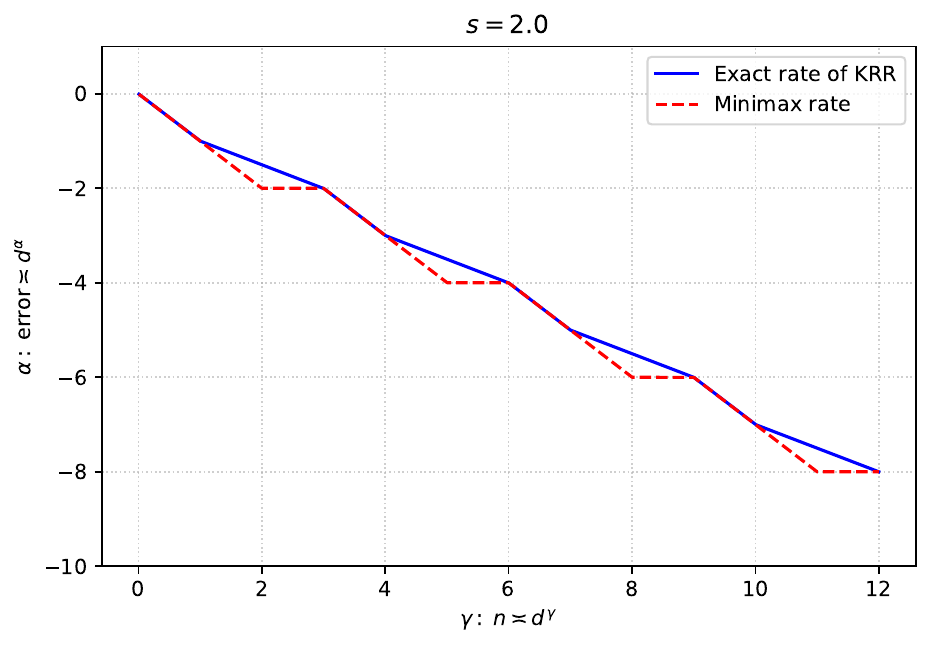}}
\subfigure[]
{\includegraphics[width=0.32\columnwidth]{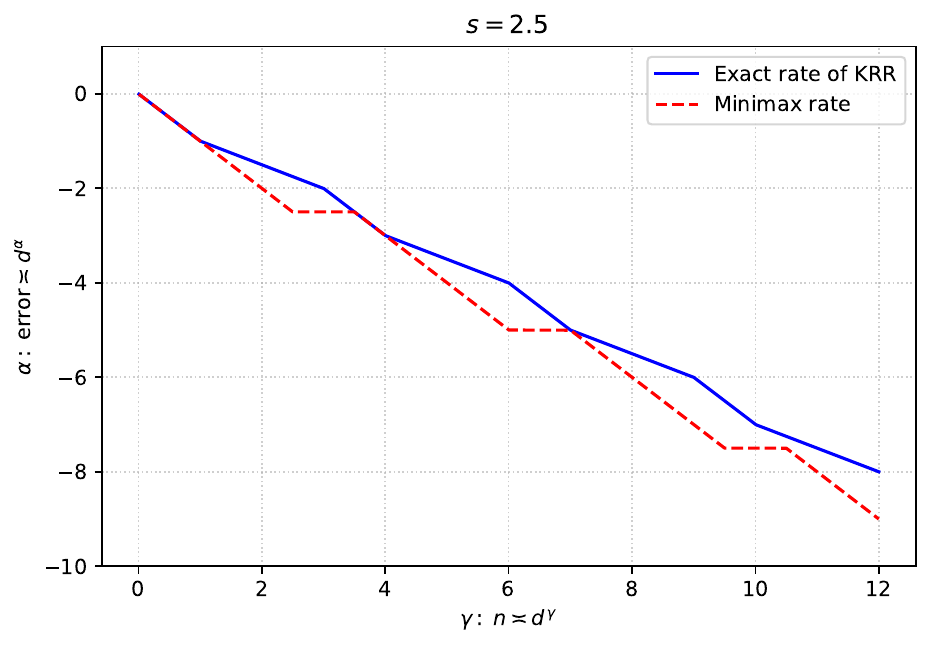}}

\caption{
The exact rates of generalization error of KRR (blue) and the corresponding minimax rate (red) with respect to $d$, $s=0.6,1,1.5,2,2.5$. The difference between two rates suggests the \textit{saturation effect}, the unchanging rates during certain intervals suggest the \textit{periodic plateau phenomenon}.
}
\label{figure 5 rates w.r.t d}
\end{figure}

\textbf{Minimax optimality and saturation effect} From Figure \ref{figure 5 rates w.r.t d}, we conclude:
\begin{itemize}
    \item When $s\leq 1$, the convergence rate of generalization error of KRR reaches the minimax lower bound for most $(s,\gamma)$. (Notice that we only derive the convergence rate of the generalization error of KRR for $\gamma>\frac{3s}{2s+2}$ when $s\le\frac{1}{2}$.) Hence, when $s\leq1$, with a little abusement, we shall claim that large dimensional product kernel ridge regression is \textbf{ minimax optimal}.

    \item When $1<s\le2$, the convergence rate of generalization error of KRR is in accordance with the minimax lower bound for certain ranges of $\gamma$ ($\gamma\in \left(p+p{s},~ p+p{s}+1 \right]\cup\left(p+p{s}+2{s}-1,~ (p+1)+(p+1){s} \right]$), while fails to reach the minimax rate for other $\gamma$. Such gap between the exact rate of KRR and the minimax rate is called the saturation effect, which was reported by \cite{li2023_SaturationEffect} when $s\ge 2$ in the fixed dimensional setting, \cite{zhang2024optimal} for inner product kernels on sphere when $s>1$ in the large dimensional setting. \textbf{We extend the saturation effect from inner product kernels on sphere to other large dimensional kernels such as product kernels,} and hence indicating that saturation effect exists for a large group of kernel for KRR in the large dimensional setting.

    \item When $s>2$, we find that as $\gamma$ increases, the gap between the exact rate of KRR and the minimax rate tends to increase. This is due to the reason that the exact rate of KRR no longer changes when $s\ge2$, while the minimax rate remain to decrease as $s$ increases. As a result, when $\gamma$ is sufficiently large, the convergence rate of generalization error of KRR shall never reach the minimax lower bound, hence we claim that the saturation effect occurs for most of the $\gamma$.

\end{itemize}

\textbf{Periodic plateau phenomenon}
For the minimax rate, we observe a periodic plateau phenomenon for all large dimensional product kernels satisfying Assumption \ref{product structure assumption}: the rate of the excess risk remains unchanged over certain intervals of $\gamma$, and grows faster in other intervals of $\gamma$. Similar periodic plateau phenomenon was reported by \cite{Ghorbani2019LinearizedTN,lu2023optimal,zhang2024optimal}, when considering large dimensional spectral algorithms on the sphere. 

For the exact convergence rate of generalization error of KRR, the periodic plateau phenomenon can also be observed when $0<s<2$. When $s\ge 2$, the plateau period degenerates.

Such periodic plateau phenomenon implies that in order to increase the convergence rate of the excess risk, it might be necessary to increase $\gamma$ (or equivalently, the sample size $n$) beyond a specific threshold. 

We now present Figure \ref{figure 3 rates w.r.t n}, which illustrates the convergence rate of the large dimensional product kernel as a function of sample size $n$ (as opposed to dimension $d$). This figure enables us to draw the following conclusion.

\begin{figure}[h]
\centering
\subfigure[]{\includegraphics[width=0.32\columnwidth]{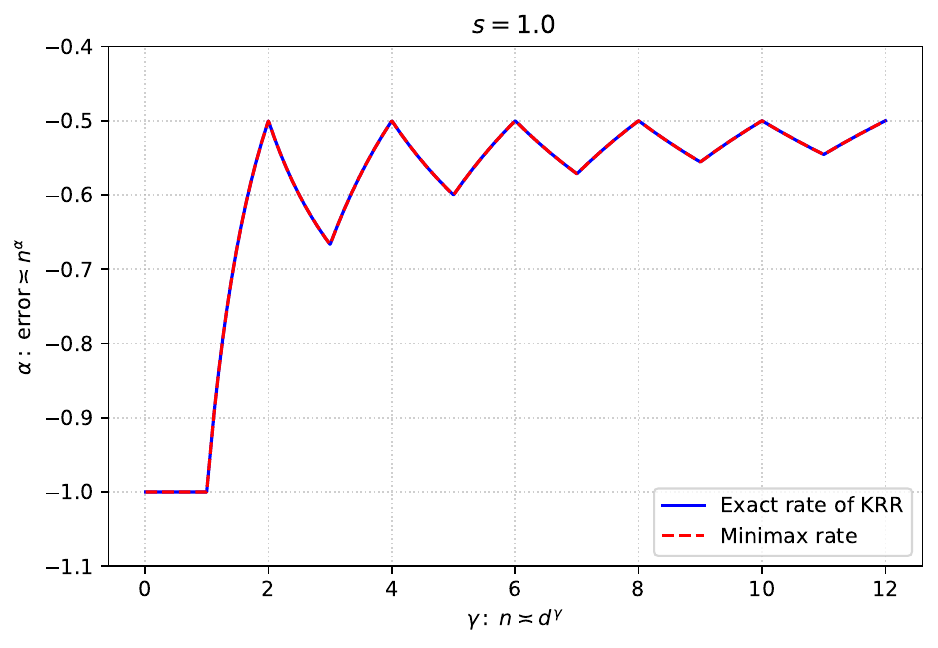}}
\subfigure[]{\includegraphics[width=0.32\columnwidth]{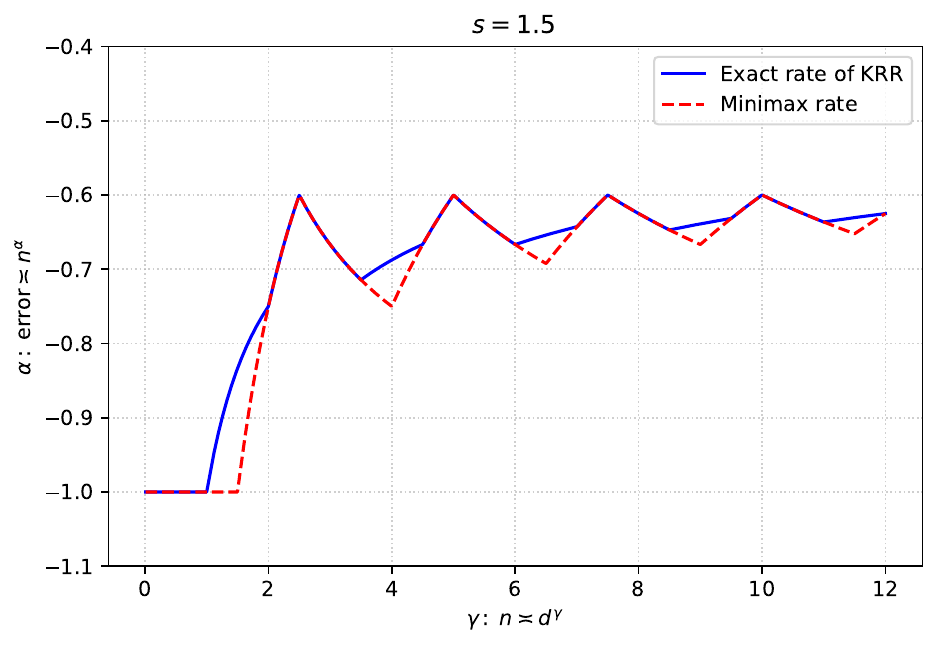}}
\subfigure[]{\includegraphics[width=0.32\columnwidth]{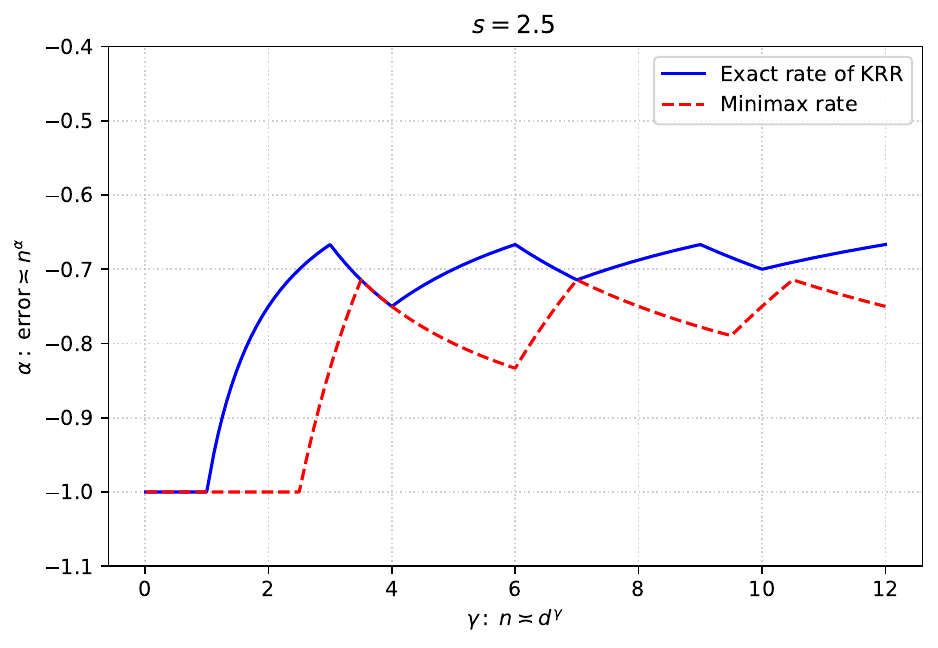}}

\caption{
The exact rates of generalization error of KRR (blue) and the corresponding minimax rate (red) with respect to $n$, $s=1,1.5,2.5$. The vibration of rates suggest the multiple descent behavior.
}
\label{figure 3 rates w.r.t n}
\end{figure}
\textbf{Multiple descent behavior} In Figure  \ref{figure 3 rates w.r.t n}, we find that both the exact convergence rates and minimax rates experience vibration when $\gamma$ varies. For the minimax rate, the curve achieves its local maxima (the slowest rate) at $\gamma = p+ps, p \in \mathbb{N}^{+} $, and achieve its local minima (the fastest rate) at $ \gamma = p+ps+s, p \in \mathbb{N}^{+}$. For the exact convergence rates of KRR, when $0 < s \leq 1$, the curve coincides with that of the minimax lower bound. When $ 1 < s < 2$, the curve achieves its local maxima at $\gamma = p+p\tilde{s}, p \in \mathbb{N}^{+} $, and  achieve its local minima at $ \gamma = p+p\tilde{s}+1, p \in \mathbb{N}^{+} $. When $s \ge 2$, the curve does not change with $s$, which achieves its local maxima at $\gamma = 3p, p \in \mathbb{N}^{+} $, and achieve its local minima at $ \gamma = 3p+1, p \in \mathbb{N}^{+}$. We call this vibration of the convergence rate the multiple descent behavior, which is also reported in \cite{zhang2024optimal,lu2023optimal} for inner product kernels on sphere. Generally speaking, we conclude that when $\gamma$ increases, the convergence rate with respect to $n$ is slower, implying that increasing $n$ is less worthwhile when the sample size is already large enough.

\section{Conclusion}
In this paper, we consider KRR under the large dimensional setting. Unlike most large dimensional kernel related literature, we relieve the constraint on eigenfunctions, and provide an exact rate of the generalization error. We consider the family of large dimensional product kernel as an example, which is not applicable in the previous work, either due to the eigenfunction  constraint \citep{zhang2024optimal} or a hypercontractivity condition \citep{misiakiewicz2024non}. As a result, we prove the minimax optimality of large dimensional product KRR when $s\le 1$, and report a saturation effect when $s>1$. Furthermore, we identify phenomena such as {\it periodic plateau phenomenon} and {\it multiple descent behavior}, similar to those found in previous studies by \cite{lu2023optimal,zhang2024optimal}, which considered the inner product kernel on the sphere. Such similarity suggests that these phenomena may be general to large dimensional kernel methods, and can provide insights into the trade-off between sample size $n$ and convergence rates in practical applications.

Future work may extend this analysis to other kernel related algorithms under the large dimensional setting with product kernels. We believe such researches would significantly advance the understanding of kernel methods in the large dimensional setting, generalizing the restrictive case of inner product kernels on the sphere to a broader class of kernel functions. Recently, \cite{li2025diagonal} considers a diagonal adaptive kernel model under the fixed dimension setting. They claim that by introducing over-parametrization to the learning of kernels, the model can significantly improve its generalization properties compared with the fixed-kernel method. It would be of great interest to determine  whether such phenomenon exists in the large dimensional setting for KRR. Understanding the underlying mechanism of how over-parameterization affects the learning of adaptive kernels in large dimensional setting could shed light on the potential advantages of adaptive kernels in large dimensional environments.

\bibliographystyle{chicago}
\bibliography{colt2026reference.bib}

@article{fischer2020_SobolevNorm,
  title = {Sobolev Norm Learning Rates for Regularized Least-Squares Algorithms},
  author = {Fischer, Simon-Raphael and Steinwart, Ingo},
  year = {2020},
  journal = {Journal of Machine Learning Research},
  volume = {21},
  pages = {205:1-205:38},
  arxivid = {1702.07254}
}

@article{bauer2007_RegularizationAlgorithms,
  title = {On Regularization Algorithms in Learning Theory},
  author = {Bauer, F. and Pereverzyev, S. and Rosasco, L.},
  year = {2007},
  journal = {Journal of complexity},
  volume = {23},
  number = {1},
  pages = {52--72},
  publisher = {{Elsevier}}
}

@article{lin2020_OptimalConvergence,
  title = {Optimal Convergence for Distributed Learning with Stochastic Gradient Methods and Spectral Algorithms.},
  author = {Lin, Junhong and Cevher, Volkan},
  year = {2020},
  journal = {Journal of Machine Learning Research},
  volume = {21},
  pages = {147--1}
}

@article{steinwart2012_MercerTheorem,
  title = {Mercer's Theorem on General Domains: {{On}} the Interaction between Measures, Kernels, and {{RKHSs}}},
  shorttitle = {Mercer's Theorem on General Domains},
  author = {Steinwart, Ingo and Scovel, C.},
  year = {2012},
  journal = {Constructive Approximation},
  volume = {35},
  number = {3},
  pages = {363--417},
  publisher = {{Springer}}
}

@article{Cui2021GeneralizationER,
  title={Generalization error rates in kernel regression: The crossover from the noiseless to noisy regime},
  author={Cui, Hugo and Loureiro, Bruno and Krzakala, Florent and Zdeborov{\'a}, Lenka},
  journal={Advances in Neural Information Processing Systems},
  volume={34},
  pages={10131--10143},
  year={2021}
}

@inproceedings{li2023_SaturationEffect,
  title = {On the Saturation Effect of Kernel Ridge Regression},
  booktitle = {International {{Conference}} on {{Learning Representations}}},
  author = {Li, Yicheng and Zhang, Haobo and Lin, Qian},
  year = {2023},
  month = feb,
  langid = {english}
}

@inproceedings{jacot2018_NeuralTangent,
  title = {Neural Tangent Kernel: {{Convergence}} and Generalization in Neural Networks},
  booktitle = {Advances in Neural Information Processing Systems},
  author = {Jacot, Arthur and Gabriel, Franck and Hongler, Clement},
  editor = {Bengio, S. and Wallach, H. and Larochelle, H. and Grauman, K. and {Cesa-Bianchi}, N. and Garnett, R.},
  year = {2018},
  volume = {31},
  publisher = {{Curran Associates, Inc.}}
}

@article{beaglehole2023inconsistency,
  title={On the inconsistency of kernel ridgeless regression in fixed dimensions},
  author={Beaglehole, Daniel and Belkin, Mikhail and Pandit, Parthe},
  journal={SIAM Journal on Mathematics of Data Science},
  volume={5},
  number={4},
  pages={854--872},
  year={2023},
  publisher={SIAM}
}

@article{Ghorbani2019LinearizedTN,
author = {Behrooz Ghorbani and Song Mei and Theodor Misiakiewicz and Andrea Montanari},
title = {{Linearized two-layers neural networks in high dimension}},
volume = {49},
journal = {The Annals of Statistics},
number = {2},
publisher = {Institute of Mathematical Statistics},
pages = {1029 -- 1054},
year = {2021}
}

@article{Yang_Density_1999,
author = {Yuhong Yang and Andrew Barron},
title = {{Information-theoretic determination of minimax rates of convergence}},
volume = {27},
journal = {The Annals of Statistics},
number = {5},
publisher = {Institute of Mathematical Statistics},
pages = {1564 -- 1599},
year = {1999}
}

@inproceedings{zhang2023optimality_2,
  title={On the optimality of misspecified kernel ridge regression},
  author={Zhang, Haobo and Li, Yicheng and Lu, Weihao and Lin, Qian},
  booktitle={International Conference on Machine Learning},
  pages={41331--41353},
  year={2023},
  organization={PMLR}
}

@article{caponnetto2010cross,
  title={Cross-validation based adaptation for regularization operators in learning theory},
  author={Caponnetto, Andrea and Yao, Yuan},
  journal={Analysis and Applications},
  volume={8},
  number={02},
  pages={161--183},
  year={2010},
  publisher={World Scientific}
}

@article{misiakiewicz_spectrum_2022,
  title={Spectrum of inner-product kernel matrices in the polynomial regime and multiple descent phenomenon in kernel ridge regression},
  author={Misiakiewicz, Theodor},
  journal={arXiv preprint arXiv:2204.10425},
  year={2022}
}

@inproceedings{li2023asymptotic,
  title={On the Asymptotic Learning Curves of Kernel Ridge Regression under Power-law Decay},
  author={Li, Yicheng and Zhang, Haobo and Lin, Qian},
  booktitle={Thirty-seventh Conference on Neural Information Processing Systems},
  year={2023}
}

@inproceedings{buchholz2022_KernelInterpolation,
  title = {Kernel Interpolation in {{Sobolev}} Spaces Is Not Consistent in Low Dimensions},
  booktitle = {Proceedings of Thirty Fifth Conference on Learning Theory},
  author = {Buchholz, Simon},
  year = {2022},
  month = jul,
  series = {Proceedings of Machine Learning Research},
  volume = {178},
  pages = {3410--3440},
  publisher = {{PMLR}},
  issn = {2640-3498},
  langid = {english}
}

@book{10.7551/mitpress/3206.001.0001,
    author = {Rasmussen, Carl Edward and Williams, Christopher K. I.},
    title = {Gaussian Processes for Machine Learning},
    publisher = {The MIT Press},
    year = {2005},
    month = {11},
    isbn = {9780262256834},
    eprint = {https://direct.mit.edu/book-pdf/2514321/book\_9780262256834.pdf},
}

@article{lu2023optimal,
  title={Optimal Rate of Kernel Regression in Large Dimensions},
  author={Lu, Weihao and Zhang, Haobo and Li, Yicheng and Xu, Manyun and Lin, Qian},
  journal={arXiv preprint arXiv:2309.04268},
  year={2023}
}

@book{steinwart2008support,
  title={Support vector machines},
  author={Steinwart, Ingo and Christmann, Andreas},
  year={2008},
  publisher={Springer Science \& Business Media}
}

@article{zhang2024optimal,
  title={Optimal rates of kernel ridge regression under source condition in large dimensions},
  author={Zhang, Haobo and Li, Yicheng and Lu, Weihao and Lin, Qian},
  journal={Journal of Machine Learning Research},
  volume={26},
  number={219},
  pages={1--63},
  year={2025}
}

@article{caponnetto2007optimal,
  title={Optimal rates for the regularized least-squares algorithm},
  author={Caponnetto, Andrea and De Vito, Ernesto},
  journal={Foundations of Computational Mathematics},
  volume={7},
  number={3},
  pages={331--368},
  year={2007},
  publisher={Springer}
}

@techreport{Caponnetto2006OptimalRF,
  author = {Caponnetto, Andrea},
  title = {Optimal rates for regularization operators in learning theory},
  institution = {Massachusetts Institute of Technology},
  number = {CBCL Paper \#264/AI Technical Report \#062},
  year = {2006},
  month = {September},
}

@article{raskutti2014early,
  author  = {Garvesh Raskutti and Martin J. Wainwright and Bin Yu},
  title   = {Early Stopping and Non-parametric Regression: An Optimal Data-dependent Stopping Rule},
  journal = {Journal of Machine Learning Research},
  year    = {2014},
  volume  = {15},
  number  = {11},
  pages   = {335--366}
}

@article{li2023kernel,
  title={Kernel interpolation generalizes poorly},
  author={Li, Yicheng and Zhang, Haobo and Lin, Qian},
  journal={arXiv preprint arXiv:2303.15809},
  year={2023}
}

@article{lai2023generalization,
  title={Generalization ability of wide neural networks on $\mathbb{R}$},
  author={Lai, Jianfa and Xu, Manyun and Chen, Rui and Lin, Qian},
  journal={arXiv preprint arXiv:2302.05933},
  year={2023}
}

@article{barzilai2023generalization,
  title={Generalization in Kernel Regression Under Realistic Assumptions},
  author={Barzilai, Daniel and Shamir, Ohad},
  journal={arXiv preprint arXiv:2312.15995},
  year={2023}
}

@article{misiakiewicz2024non,
  title={A non-asymptotic theory of kernel ridge regression: deterministic equivalents, test error, and gcv estimator},
  author={Misiakiewicz, Theodor and Saeed, Basil},
  journal={arXiv preprint arXiv:2403.08938},
  year={2024}
}

@article{simon2023eigenlearning,
  title={The eigenlearning framework: A conservation law perspective on kernel ridge regression and wide neural networks},
  author={Simon, James B and Dickens, Madeline and Karkada, Dhruva and DeWeese, Michael R},
  journal={Transactions on Machine Learning Research},
  year={2023}
}

@article{li2025diagonal,
  title={Diagonal Over-parameterization in Reproducing Kernel Hilbert Spaces as an Adaptive Feature Model: Generalization and Adaptivity},
  author={Li, Yicheng and Lin, Qian},
  journal={arXiv preprint arXiv:2501.08679},
  year={2025}
}

@article{pandit2024universality,
  title={Universality of kernel random matrices and kernel regression in the quadratic regime},
  author={Pandit, Parthe and Wang, Zhichao and Zhu, Yizhe},
  journal={arXiv preprint arXiv:2408.01062},
  year={2024}
}

@article{zhang2025phase,
  title={The phase diagram of kernel interpolation in large dimensions},
  author={Zhang, Haobo and Lu, Weihao and Lin, Qian},
  journal={Biometrika},
  volume={112},
  number={1},
  pages={asae057},
  year={2025},
  publisher={Oxford University Press}
}

@article{laguerre_formula,
  title={Bilinear generating functions for Laguerre and Lauricella polynomials in several variables},
  author={Carlitz, L},
  journal={Rendiconti del Seminario Matematico della Universita di Padova},
  volume={43},
  pages={269--276},
  year={1970}
}

@article{mehler_formula,
  title={Notes on generating functions of polynomials:(2) Hermite polynomials},
  author={Watson, George Neville},
  journal={Journal of the London Mathematical Society},
  volume={1},
  number={3},
  pages={194--199},
  year={1933},
  publisher={Oxford University Press}
}
\appendix
\section{Examples of Large Dimensional Product Kernels}\label{sec:example}
\subsection{Large Dimensional Gaussian Kernels}
For every index $i$, consider the $1$-dimensional Gaussian kernel $\hat{k}_i(x_i,x_i')=\exp\left(\frac{(x_i-x_i')^2}{2\ell_i^2d}\right)$ on Gaussian measure $N(0,\sigma_i^2)$. Assume that for all $i$, there exist absolute constants $0<c\leq\ell_i,\sigma_i\leq C$. The respective large dimensional product kernel as 
$$
k_d(x, x') = \prod_{i=1}^{d} \hat{k}_{ i}(x_i, x'_i).
$$
The following proposition provides the explicit form of the Mercer's decomposition of Gaussian kernel, which can be found in \cite{10.7551/mitpress/3206.001.0001}, Section 4.3.
\begin{proposition}\label{prop mercer decomposition}
    Denote $a_i^{-1}=4\sigma_i^2,b_i^{-1}=2\ell_i^2 d$, $c_i=\sqrt{a_i^2+2a_ib_i},A_i=a_i+b_i+c_i,B_i=b_i/A_i.$ Then, the Mercer's decomposition of large dimensional Gaussian kernel is given as follows.
    \begin{displaymath}
        \hat{k}_i(x_i,x_i') = \sum_{k=0}^{\infty} \mu_{ik} Y_{ik}({x_i})Y_{ik}({x_i^{\prime}}),
    \end{displaymath}
    where the eigenvalues $\{\mu_{ik},k=0,1,\cdots\} $ are $        \mu_{ik}=(2a_i/A_i)^\frac{1}{2}\cdot B_i^k,$ 
    and the eigenfunction $ Y_{ik}$ is 
    \begin{displaymath}
        Y_{ik}=c_{0k}\exp{\left(-(c_i-a_i)^2x_i^2\right)}H_{k}(\sqrt{2c}x_i),
    \end{displaymath}
    Here $H_{k}(x)=(-1)^k\exp(x^2)\frac{d^k}{dx^k}\exp(-x^2)$ denotes the $k^{\text{th}}$ Hermite polynomial, and $\{c_{0k}\}_{k\geq0}$ are constants satisfying $\|c_{0k}\exp{\left(-(c_i-a_i)^2x_i^2\right)}H_{k}(\sqrt{2c_i}x_i)\|_{L^2}=1$. 
\end{proposition}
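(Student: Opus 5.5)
The proof is a direct verification via Mehler's formula, together with the completeness of Hermite polynomials. I read the kernel as $\hat{k}_i(x_i,x_i')=\exp(-b_i(x_i-x_i')^2)$ with $b_i=1/(2\ell_i^2d)$. The sign in the displayed formula must be a typo, since with a plus sign the kernel is unbounded and violates Assumption \ref{assumption kernel}. Drop the index $i$ throughout. The measure $N(0,\sigma^2)$ has density $p(x)=\sqrt{2a/\pi}\,e^{-2ax^2}$, where $a=1/(4\sigma^2)$. Also read $\sqrt{2c}$ as $\sqrt{2c_i}$ in $Y_{ik}$ (presumably a typo) and $(c_i-a_i)^2$ as $(c_i-a_i)$; the latter is the reading the orthonormality computation and the constant-matching below require.

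\textbf{Step 1 (orthonormality and completeness).} Set $\phi_k(x)=e^{-(c-a)x^2}H_k(\sqrt{2c}\,x)$. Then
\[
\int\phi_j\phi_k\,p\,dx=\sqrt{2a/\pi}\int e^{-2cx^2}H_j(\sqrt{2c}x)H_k(\sqrt{2c}x)\,dx .
\]
Substituting $u=\sqrt{2c}\,x$ turns this into the classical Hermite orthogonality relation with weight $e^{-u^2}$. The integral is therefore $\delta_{jk}\sqrt{2a/\pi}\,(2c)^{-1/2}\sqrt{\pi}\,2^kk!$, which fixes the normalising constants $c_{0k}$. The map $f\mapsto f(x)e^{(c-a)x^2}$ is a unitary isomorphism from $L^2(p)$ onto $L^2(e^{-2cx^2}dx)$ up to a constant. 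Because Hermite polynomials are complete in the latter space, $\{Y_k\}$ is an orthonormal basis of $L^2(p)$.

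\textbf{Step 2 (expansion via Mehler).} Apply Mehler's formula \citep{mehler_formula}:
\[
\sum_{k\ge0}\frac{t^k}{2^kk!}H_k(u)H_k(v)=(1-t^2)^{-1/2}\exp\Big(\frac{2uvt-(u^2+v^2)t^2}{1-t^2}\Big),
\]
with $t=B=b/A$, $u=\sqrt{2c}\,x$ and $v=\sqrt{2c}\,x'$. Multiply both sides by $e^{-(c-a)(x^2+x'^2)}$. It then suffices to check two exponent identities:
\[
\frac{2cB}{1-B^2}=b,\qquad (c-a)+\frac{2cB^2}{1-B^2}=b .
\]
Given these, the right-hand side equals $(1-B^2)^{-1/2}e^{-b(x-x')^2}$. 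Both identities follow from $c^2=a^2+2ab$ and $A=a+b+c$. For instance, $A^2-b^2=(a+c)(a+2b+c)$, and using $c^2-a^2=2ab$ one gets $A^2-b^2=2cA$, so $1-B^2=2c/A$; each identity then reduces to an algebraic identity in $a,b,c$. Collecting the constants $c_{0k}^2=\sqrt{2c/(2a)}\,/(2^kk!)$ and $(1-B^2)^{1/2}=\sqrt{2c/A}$ gives $\hat{k}(x,x')=\sum_k\mu_kY_k(x)Y_k(x')$ with $\mu_k=\sqrt{2a/A}\,B^k$.

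\textbf{Step 3 (conclusion).} Since $0<B<1$ and $\{Y_k\}$ is a complete orthonormal system, this series is the Mercer decomposition: the $Y_k$ are eigenfunctions with non-increasing eigenvalues $\mu_k$. Convergence holds absolutely and uniformly on compacts, by the Mehler series bounds. As a side remark, $b=\Theta(1/d)$ gives $B=\Theta(1/d)$ and $\mu_0=\sqrt{2a/A}=1-O(1/d)$. The product of $d$ such constants stays bounded, which is consistent with Assumption \ref{product structure assumption}.

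The main obstacle is the algebra in Step 2: keeping the constants in Mehler's formula and in the $L^2(p)$-normalisation consistent so that exactly $\sqrt{2a/A}$ emerges. Proving the identity $1-B^2=2c/A$ first should make the rest mechanical.
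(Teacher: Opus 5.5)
Your proof is correct, and it does more than the paper, which gives no derivation at all. The paper simply cites Rasmussen and Williams (Section 4.3). It then only checks $\sqrt{2a_i/A_i}=1-\Theta(d^{-1})$ and $B_i=\Theta(d^{-1})$ to verify Assumption~\ref{product structure assumption}.

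Your self-contained route has three steps:
\begin{itemize}
\item the isometry $f\mapsto f\,e^{(c-a)x^2}$ onto $L^2(e^{-2cx^2}dx)$, giving orthonormality and completeness;
\item Mehler's formula at $t=B$;
\item reduction of both exponent identities to $1-B^2=2c/A$.
\end{itemize}
This buys two things the citation does not. First, it makes the constants explicit, namely $c_{0k}^2=\sqrt{c/a}/(2^kk!)$, so that $\mu_k=\sqrt{2a/A}\,B^k$ is derived rather than quoted. Second, it shows that the statement as printed only holds under the three corrections you identify:
\begin{itemize}
\item the kernel must be $\exp(-b_i(x_i-x_i')^2)$, since the displayed exponent lacks a minus sign;
\item the Gaussian factor must be $e^{-(c_i-a_i)x_i^2}$, not $e^{-(c_i-a_i)^2x_i^2}$;
\item $\sqrt{2c}$ must be $\sqrt{2c_i}$.
\end{itemize}
On the second correction: orthogonality of the $H_k(\sqrt{2c_i}x)$ requires total weight $e^{-2c_ix^2}$. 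With the square, the functions are in general not even mutually orthogonal in $L^2(p)$. Your readings agree with the cited source.

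The one step worth tightening is Step 3. Uniform convergence on compacts does not by itself justify integrating the series termwise over all of $\mathbb{R}$ against $Y_j(x')p(x')\,dx'$. It is enough to fix $x$ and use $\mu_k\le\mu_0$ together with the pointwise Mehler identity on the diagonal, which give $\sum_k\mu_k^2Y_k(x)^2\le\mu_0\hat{k}(x,x)<\infty$. Hence $\sum_k\mu_kY_k(x)Y_k(\cdot)$ converges in $L^2(p)$, and necessarily to its pointwise limit $\hat{k}(x,\cdot)$. Pairing with $Y_j$ then gives $\int\hat{k}(x,x')Y_j(x')p(x')\,dx'=\mu_jY_j(x)$ for every $x$, which is what identifies the series as the Mercer decomposition.
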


Notice that 
\begin{equation}
        \begin{aligned}
            &\sqrt{2a_i/A_i}
            =\sqrt{\frac{2a_i}{a_i+b_i+\sqrt{a_i^2+2a_ib_i}}}\\
            &=\sqrt{\frac{2a_i}{2a_i+2b_i+\sqrt{a_i^2+2a_ib_i}-a_i-b_i}}\\
            &=\sqrt{\frac{2a_i}{2a_i+2b_i+\Theta(d^{-2})}}\\
            &=\sqrt{\frac{1}{1+\frac{1}{2\ell_i^2a_id}+\Theta(d^{-2})}}\\
            &=\sqrt{1-\frac{1}{2\ell_i^2a_id}+\Theta(d^{-2})},\\
        \end{aligned}
    \end{equation}
while 
\begin{equation}
        \begin{aligned}
            & B_i
            =\frac{b_i}{A_i}\\
            &=\frac{b_i}{a_i+b_i+\sqrt{a_i^2+2a_ib_i}}\\
            &=\frac{b_i}{2a_i+2b_i+\Theta(d^{-2})}\\
            &=\Theta (d^{-1}).
        \end{aligned}
    \end{equation}
Hence, the conditions in Assumption \ref{product structure assumption} clearly holds.

\subsection{The Laguerre Kernels}
Consider $x_1,\dots,x_d$ are independently distributed, with $x_i\sim\Gamma(\alpha_i+1,1)$ for some $\alpha_i>-1$, $i=1,\dots,d$. The Laguerre kernel is defined as  
\begin{equation}\label{laguerre kernel}
    L_{\mathbf{\theta},d}(x,y)=\prod_{i=1}^d l_{\theta_i/d,i}(x_i,y_i),
\end{equation} 
where $c<\theta_i<C$ is a bounded preselected parameter ($c$ and $C$ are absolute constants), and 
\begin{equation} 
    \begin{aligned} 
        l_{r,i}(x_i,y_i)=&\frac{\Gamma(\alpha_i+1)}{1-r}e^{-\frac{r}{1-r}(x_i+y_i)}(rx_iy_i)^{-\alpha_i/2}
        \cdot I_{\alpha_i}(\frac{2\sqrt{rx_iy_i}}{1-r}),
    \end{aligned}
\end{equation} 
is the $1$-dimensional Laguerre kernel with parameter $0<r<1$, and $I_{\alpha_i}$ is the modified Bessel function of the first kind with parameter $\alpha_i$. 

By the Laguerre formula (see (1.3) and (1.6) of \citet{laguerre_formula}), the $1$-dimensional kernel $l_{r,i}(x_i,y_i)$ has the following decomposition: 
\begin{equation} 
    l_{r,i}(x_i,y_i)=\sum_{k=1}^\infty r^k\frac{p_k^{\alpha_i}(x_i)p_k^{\alpha_i}(y_i)}{\binom{k+\alpha_i}{k}},
\end{equation} 
where $p_k^{\alpha_i}$ denotes the $k$-th Laguerre polynomial (see (1.1) and (1.2) of \citet{laguerre_formula}). As $\{p_k^{\alpha_i}(x_i)/\sqrt{\binom{k+\alpha_i}{k}}\}_{k=1}^\infty$ forms an orthonormal basis of $L^2((0,\infty),\Gamma(\alpha_i+1,1))$, the $k$-th eigenvalue of $l_{r,i}$ is $r^k$. Therefore, the multi-dimensional Laguerre kernel $L_{\theta,d}$ satisfies Assumption \ref{product structure assumption}.

\subsection{The Mehler Kernels}
Consider $x_1,\dots,x_d$ are independently distributed, with $x_i\sim N(0,\sigma_i^2)$ .
The Mehler kernel is defined by 
\begin{equation}\label{mehler kernel definition} 
    M_{\theta,d}(x,y)=\prod_{i=1}^d m_{\theta_i/d,i}(x_i,y_i),
\end{equation} 
where $c<\theta_i<C$ is a bounded preselected parameter ($c$ and $C$ are absolute constants), and 
\begin{equation} 
    m_{r,i}(x_i,y_i)=\frac{1}{(1-r^2)^{\frac{d}{2}}}\exp\left(\frac{2rx_iy_i-r^2(x_i^2+y_i^2)}{2\sigma_i^2(1-r^2)}\right),
\end{equation} 
is the $1$-dimensional Mehler kernel. By Mehler's formula \citep{mehler_formula}, the Mercer decomposition of $m_{r,i}$ is 
\begin{equation} 
    m_{r,i}(x_i,y_i)=\sum_{k=0}^\infty r^k\frac{H_k'(x_i/\sigma_i)H_k'(y_i/\sigma_i)}{n!}
\end{equation} 
where $H_k'(z)=(-1)^n e^{z^2/2}\frac{d^k}{dx^k}e^{-z^2/2}$ is the probabilists' Hermite polynomial, $k\in\mathbb{N}$. Since $\{H_k(x_i/\sigma_i)/\sqrt{n!}\}_{k=1}^\infty$ forms an orthonormal basis of $L^2(\mathbb{R},N(0,\sigma_i^2))$, the $k$-th eigenvalue of $m_{r,i}$ is $r^k$. Therefore, the multi-dimensional Mehler kernel $M_{\theta,d}$ satisfies Assumption \ref{product structure assumption}. 

\subsection{Other Kernels with Mixed Components}
One may also consider kernels with mixed components, such as 
$$
k_{\text{mixed}}(x,y)=\prod_{i=1}^{d_1} \hat{k}_{ i}(x_i, y_i)\prod_{i=d_1+1}^{d_1+d_2} l_{\theta_i/d,i}(x_i,y_i)\prod_{i=d_1+d_2+1}^{d_1+d_2+d_3} m_{\theta_i/d,i}(x_i,y_i)
$$
on the respective distribution.

\section{Proofs}
\subsection{Proof of Theorem \ref{main theorem}}
\label{proof of main theorem}

The proof of Theorem \ref{main theorem} relies on the classical bias-variance decomposition framework. We begin by establishing key notations and definitions.

Denote $T_\lambda = T + \lambda$ and $T_{\mathbf{X}\lambda} = T_{\mathbf{X}} + \lambda$, where $\lambda > 0$ is the regularization parameter. Define $S_k$ as the embedding operator $S_k:\mathcal{H}\rightarrow L^2(\mathcal{X},\mu)$, and its adjoint operator as $S_{k}^{*}: L^{2}(\mathcal{X},\mu) \to \mathcal{H}$, satisfying 
\begin{displaymath}
\left(S_{k}^{*} f\right)({x})=\int_\mathcal{X} k\left({x}, {x}^{\prime}\right) f\left({x}^{\prime}\right) \mathrm{d} \mu\left({x}^{\prime}\right),
\end{displaymath}
and denote $T=S_k^* S_k$.

For operator norms, we rewrite $L^2(\mathcal{X}, \mu)$ as $L^2$ and $L^\infty(\mathcal{X}, \mu)$ as $L^\infty$ for brevity, and use the following conventions:
\begin{itemize}
\item[(i)] $\|\cdot\|_{\mathcal{B}(B_1, B_2)}$ denotes the operator norm between Banach spaces $B_1$ and $B_2$, and is defined as $\|A\|_{\mathcal{B}(B_1, B_2)} = \sup_{\|f\|_{B_1}=1} \|Af\|_{B_2}.$ When the context is clear, we write $\|\cdot\|$ for the operator norm.
\item[(ii)] $\operatorname{tr} A$ and $\|A\|_1$ denote the trace and trace norm of operator $A$, while $\|A\|_2$ represents the Hilbert-Schmidt norm.
\end{itemize}

Next, we shall introduce the following operators. Given the sample set $\mathbf{Z} = \{(x_i, y_i)\}_{i=1}^n$, we define the sampling operator $K_{x}: \mathbb{R} \rightarrow \mathcal{H},$ $y \mapsto yk(x, \cdot)$. Its adjoint operator $K_{x}^*: \mathcal{H} \rightarrow \mathbb{R}$ satisfies $K_{x}^*f = f(x)$. Also, we define the operator $T_{x} = K_{x}K_{x}^*$, and the empirical covariance operator is defined as:
\[
T_{\mathbf{X}} := \frac{1}{n}\sum_{i=1}^n K_{x_i}K_{x_i}^*
\]
Note that $\|T_{\mathbf{X}}\| \leq \|T_{\mathbf{X}}\|_1 \leq \kappa^2$, and $T_{\mathbf{X}}$ is a compact trace-class operator.

Further, if we define the sample basis function as:
\[
g_{\mathbf{Z}} := \frac{1}{n}\sum_{i=1}^n K_{x_i}y_i \in \mathcal{H},
\]
following \cite{caponnetto2007optimal}, the KRR estimator has the operator representation:
\[
\hat{f}_\lambda = (T_{\mathbf{X}} + \lambda)^{-1}g_{\mathbf{Z}}.
\]
To analyze the bias term, we introduce conditional expectations:
\[
\tilde{g}_{\mathbf{Z}} := \mathbb{E}(g_{\mathbf{Z}} | \mathbf{X}) = \frac{1}{n}\sum_{i=1}^n K_{x_i}f_\rho^*(x_i),
\]
\[
\tilde{f}_\lambda := \mathbb{E}(\hat{f}_\lambda | \mathbf{X}) = (T_{\mathbf{X}} + \lambda)^{-1}\tilde{g}_{\mathbf{Z}}.
\]
Also, the expectation of ${g}_{\mathbf{Z}}$ can be defined as:
\[
g := \mathbb{E}g_{\mathbf{Z}} = \int_{\mathcal{X}} k({x}, \cdot)f_\rho^*({x})d\mu({x}) = S_k^*f_\rho^*,
\]
and
\[
f_\lambda = (T + \lambda)^{-1}g = (T + \lambda)^{-1}S_k^*f_\rho^*.
\]

Now we are ready to give the bias-variance decomposition. The estimation error decomposes as:
\begin{align*}
\hat{f}_\lambda - f_\rho^* &= \frac{1}{n}(T_{\mathbf{X}} + \lambda)^{-1}\sum_{i=1}^n K_{x_i}y_i - f_\rho^* \\
&= (T_{\mathbf{X}} + \lambda)^{-1}\left[\frac{1}{n}\sum_{i=1}^n K_{x_i}(f_\rho^*(x_i) + \epsilon_i)\right] - f_\rho^* \\
&= (T_{\mathbf{X}} + \lambda)^{-1}\tilde{g}_{\mathbf{Z}} + \frac{1}{n}\sum_{i=1}^n (T_{\mathbf{X}} + \lambda)^{-1}K_{x_i}\epsilon_i - f_\rho^* \\
&= (\tilde{f}_\lambda - f_\rho^*) + \frac{1}{n}\sum_{i=1}^n (T_{\mathbf{X}} + \lambda)^{-1}K_{x_i}\epsilon_i
\end{align*}

Taking conditional expectation with respect to the noise $\epsilon$ given $\mathbf{X}$, and noting that $\{\epsilon_i|x_i\}$ are independent with $\mathbb{E}[\epsilon_i] = 0$ and $\operatorname{Var}(\epsilon_i) = \sigma_\epsilon^2$, we obtain:

\[
\mathbb{E}\left[\|\hat{f}_\lambda - f_\rho^*\|_{L^2}^2 \big| \mathbf{X}\right] = \operatorname{Bias}^2(\lambda) + \operatorname{Var}(\lambda)
\]

where
\begin{equation}
\label{decomposition}
\operatorname{Bias}^2(\lambda) = \|\tilde{f}_\lambda - f_\rho^*\|_{L^2}^2, 
\operatorname{Var}(\lambda) = \frac{\sigma_\epsilon^2}{n^2}\sum_{i=1}^n \|(T_{\mathbf{X}} + \lambda)^{-1}k(x_i, \cdot)\|_{L^2}^2
\end{equation}

The subsequent analysis will establish bounds for both $\operatorname{Bias}^2(\lambda)$ and $\operatorname{Var}(\lambda)$.

\subsubsection{Variance Term}

In this subsection, we aim at deriving Theorem \ref{theorem variance approximation}, which establishes the bound of the variance term. We shall proceed through several preparatory steps.
First, we define the sample subspace 
\[
\mathcal{H}_n=\operatorname{span}\left\{k\left( {x}_1, \cdot\right), \ldots, k\left( {x}_n, \cdot\right)\right\} \subset \mathcal{H}.
\]
Also, recall the notation for kernel matrices:
\[
\mathbb{K}( \mathbf{X},  \mathbf{X})=\left(k\left( {x}_i,  {x}_j\right)\right)_{n \times n}, \quad \mathbb{K}( \mathbf{X}, \cdot)=\left\{k\left( {x}_1, \cdot\right), \ldots, k\left( {x}_n, \cdot\right)\right\},
\]
and define the normalized sample kernel matrix as
\[
 {K}=\frac{1}{n} \mathbb{K}( \mathbf{X},  \mathbf{X}).
\]
Notice that $\operatorname{Ran}\left(T_{ \mathbf{X}}\right)=\mathcal{H}_n$, and $ {K}$ represents $T_{ \mathbf{X}}$ under the basis $\left\{k\left( {x}_1, \cdot\right), \ldots, k\left( {x}_n, \cdot\right)\right\}$. Hence, for any continuous function $\varphi$, the following equation holds:
\begin{equation}\label{eq:TXMatrixForm}
\varphi\left(T_{ \mathbf{X}}\right) \mathbb{K}( \mathbf{X}, \cdot)=\varphi( {K})\mathbb{K}(\mathbf{X}, \cdot).
\end{equation}
Consider the inner product elementwise between \ref{eq:TXMatrixForm} and $f$, we get
\begin{equation}
\label{eq:TXActionF}
\left(\varphi\left(T_{ \mathbf{X}}\right) f\right)[ \mathbf{X}]=\varphi( {K}) f[ \mathbf{X}].
\end{equation}
Also, we shall define the following inner product:
\[
\langle f, g\rangle_{L^2, n}=\frac{1}{n} \sum_{i=1}^n f\left( {x}_i\right) g\left( {x}_i\right)=\frac{1}{n} f[ \mathbf{X}]^{\top} g[ \mathbf{X}].
\]
Throughout the proof, we shall keep the abbreviation $k_x(\cdot)=k(x,\cdot)$ for brevity. The following lemma provides a transformation of the variation term.

\begin{lemma}\label{lemma var transform}
The variance term satisfies:
\[
\operatorname{Var}(\lambda)=\frac{\sigma_\epsilon^2}{n} \int_\mathcal{X}\left\|\left(T_{ \mathbf{X}}+\lambda\right)^{-1} k_x(\cdot)\right\|_{L^2, n}^2 {~d} \mu( {x}).
\]
\end{lemma}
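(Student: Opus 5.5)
The plan is to start from the expression for $\operatorname{Var}(\lambda)$ in \eqref{decomposition} and rewrite each summand as an integral over $\mathcal{X}$, then swap the finite sum with the integral. Two facts do most of the work: the self-adjointness of $(T_{\mathbf{X}}+\lambda)^{-1}$ on $\mathcal{H}$, and the reproducing property $f(x)=\langle f,k_x\rangle_{\mathcal{H}}$.

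First, for each $i$ I write
\[
\|(T_{\mathbf{X}}+\lambda)^{-1}k_{x_i}\|_{L^2}^2=\int_{\mathcal{X}}\big((T_{\mathbf{X}}+\lambda)^{-1}k_{x_i}\big)(x)^2\,d\mu(x).
\]
Then I apply the reproducing property and self-adjointness to the pointwise value:
\[
\big((T_{\mathbf{X}}+\lambda)^{-1}k_{x_i}\big)(x)=\langle (T_{\mathbf{X}}+\lambda)^{-1}k_{x_i},k_x\rangle_{\mathcal{H}}=\langle k_{x_i},(T_{\mathbf{X}}+\lambda)^{-1}k_x\rangle_{\mathcal{H}}=\big((T_{\mathbf{X}}+\lambda)^{-1}k_x\big)(x_i).
\]
The operator is self-adjoint because $T_{\mathbf{X}}=\frac1n\sum_i K_{x_i}K_{x_i}^*$ is positive and self-adjoint on $\mathcal{H}$ and $\lambda>0$. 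This identity swaps the roles of the sample point $x_i$ and the integration variable $x$.

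Next I sum over $i$ and use Tonelli, which applies since all integrands are nonnegative and the sum is finite:
\[
\frac{\sigma_\epsilon^2}{n^2}\sum_{i=1}^n\int_{\mathcal{X}}\big((T_{\mathbf{X}}+\lambda)^{-1}k_x\big)(x_i)^2\,d\mu(x)=\frac{\sigma_\epsilon^2}{n}\int_{\mathcal{X}}\frac1n\sum_{i=1}^n\big((T_{\mathbf{X}}+\lambda)^{-1}k_x\big)(x_i)^2\,d\mu(x).
\]
By the definition of $\langle\cdot,\cdot\rangle_{L^2,n}$, the inner average is exactly $\|(T_{\mathbf{X}}+\lambda)^{-1}k_x\|_{L^2,n}^2$, which gives the claimed formula.

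The only step needing care is measurability of $x\mapsto((T_{\mathbf{X}}+\lambda)^{-1}k_x)(x_i)$, together with the fact that the $L^2$ norm is well defined when $\mathcal{X}$ is unbounded. Both follow from Assumption \ref{assumption kernel}. The map is continuous in $x$ since $k$ is continuous and the operator is bounded. It is also bounded by $\lambda^{-1}\kappa^2$, so every integral is finite. Beyond this, the argument is routine.
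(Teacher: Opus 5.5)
Your proof is correct. Its skeleton matches the paper's: write each $L^2$ norm as an integral, exchange the roles of $x_i$ and $x$, pull the finite sum inside, and recognize the empirical norm. The difference is in how the exchange of $x_i$ and $x$ is justified.

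The paper goes through the kernel matrix. From $\varphi(T_{\mathbf{X}})\mathbb{K}(\mathbf{X},\cdot)=\varphi(K)\mathbb{K}(\mathbf{X},\cdot)$ it rewrites the variance as $\frac{\sigma_\epsilon^2}{n^2}\int_{\mathcal{X}}\mathbb{K}(x,\mathbf{X})(K+\lambda)^{-2}\mathbb{K}(\mathbf{X},x)\,d\mu(x)$. It then uses $((T_{\mathbf{X}}+\lambda)^{-1}k_x)[\mathbf{X}]=(K+\lambda)^{-1}\mathbb{K}(\mathbf{X},x)$ to identify the integrand with $n\|(T_{\mathbf{X}}+\lambda)^{-1}k_x\|_{L^2,n}^2$. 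So in the paper the swap is carried implicitly by the symmetry of the matrix $(K+\lambda)^{-1}$.

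You instead get the swap $((T_{\mathbf{X}}+\lambda)^{-1}k_{x_i})(x)=((T_{\mathbf{X}}+\lambda)^{-1}k_x)(x_i)$ directly, from self-adjointness of $(T_{\mathbf{X}}+\lambda)^{-1}$ on $\mathcal{H}$ and the reproducing property. This is shorter. It needs no transfer of functional calculus from $T_{\mathbf{X}}$ to $K$, and it avoids the paper's loose description of $\{k(x_i,\cdot)\}$ as a ``basis'', which may be linearly dependent. You also check measurability and finiteness of the integrands, which the paper leaves implicit.

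What the paper's route buys is the explicit matrix quadratic form, which it reuses in the proof of Theorem \ref{thm:upper bound gaussian} to argue that $\operatorname{Var}(\lambda)$ is non-increasing in $\lambda$. That monotonicity is equally easy from your formula. By Proposition \ref{proposition observation}, $\|(T_{\mathbf{X}}+\lambda)^{-1}k_x\|_{L^2,n}^2=\langle T_{\mathbf{X}}(T_{\mathbf{X}}+\lambda)^{-2}k_x,k_x\rangle_{\mathcal{H}}$, and $t\mapsto t(t+\lambda)^{-2}$ is decreasing in $\lambda$ for every $t\ge 0$.
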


\begin{proof}
\begin{align*}
\operatorname{Var}(\lambda) &= \frac{\sigma_\epsilon^2}{n^2} \sum_{i=1}^n\left\|\left(T_{ \mathbf{X}}+\lambda\right)^{-1} k\left( {x}_i, \cdot\right)\right\|_{L^2}^2 \\
&= \frac{\sigma_\epsilon^2}{n^2}\left\|(T_{ \mathbf{X}}+\lambda)^{-1} \mathbb{K}( \mathbf{X}, \cdot)\right\|_{L^2\left(\mathbb{R}^n\right)}^2 \\
&= \frac{\sigma_\epsilon^2}{n^2}\left\|( {K}+\lambda)^{-1} \mathbb{K}( \mathbf{X}, \cdot)\right\|_{L^2\left(\mathbb{R}^n\right)}^2 \quad \text{(by \eqref{eq:TXMatrixForm})} \\
&= \frac{\sigma_\epsilon^2}{n^2} \int_{\mathcal{X}} \mathbb{K}( {x},  \mathbf{X})( {K}+\lambda)^{-2} \mathbb{K}( \mathbf{X},  {x}) {d} \mu( {x}).
\end{align*}

Using \eqref{eq:TXActionF}, noticing that $k_{ {x}}[ \mathbf{X}]=\mathbb{K}( \mathbf{X},  {x})$, we get:
\[
\left(\left(T_{ \mathbf{X}}+\lambda\right)^{-1} k_x\right)[ \mathbf{X}]=( {K}+\lambda)^{-1} \mathbb{K}( \mathbf{X},  {x}),
\]
and hence 
\begin{align*}
\frac{1}{n} \mathbb{K}( {x},  \mathbf{X})( {K}+\lambda)^{-2} \mathbb{K}( \mathbf{X},  {x}) &= \frac{1}{n}\left\|( {K}+\lambda)^{-1} \mathbb{K}( \mathbf{X},  {x})\right\|_{\mathbb{R}^n}^2 \\
&= \left\|\left(T_{ \mathbf{X}}+\lambda\right)^{-1} k_{ {x}}\right\|_{L^2, n}^2.
\end{align*}

Thus we conclude:
\[
\operatorname{Var}(\lambda)=\frac{\sigma_\epsilon^2}{n} \int_{\mathcal{X}}\left\|\left(T_{\mathbf{X}}+\lambda \right)^{-1} k_x(\cdot)\right\|_{L^2, n}^2 {~d} \mu( {x}).
\]
\end{proof}

In the remaining part of this subsection, we shall derive the following two approximations.
\begin{align*}
&\int_\mathcal{X}\left\|(T_{ \mathbf{X}}+\lambda)^{-1} k_x\right\|_{L^2, n}^2 {~d} \mu( {x})= \int_\mathcal{X}\left\|T_{ \mathbf{X} \lambda}^{-1} k_x\right\|_{L^2, n}^2 {~d} \mu( {x})\\
&\stackrel{A}{\approx} \int_\mathcal{X}\left\|T_\lambda^{-1} k_{ {x}}\right\|_{L^2, n}^2 {~d} \mu( {x}) \stackrel{B}{\approx} \int_\mathcal{X}\left\|T_\lambda^{-1} k_{ {x}}\right\|_{L^2}^2 {~d} \mu( {x}).
\end{align*}

\subsubsection{Approximation B}
\begin{lemma}[Approximation B]\label{lemma approximation B}
 Suppose that Assumption \ref{assumption kernel}, \ref{assumption noise}, \ref{assumption f*} hold. Furthermore, suppose that 
\begin{displaymath}
    \frac{\sup_{i\in N}\frac{\lambda_i}{(\lambda_i+\lambda)^2}}{n}=o(\mathcal{N}_2(\lambda)),
\end{displaymath}
in which $\lambda=\lambda(d, n) \rightarrow 0$. Then for any fixed $\delta \in(0,1)$, when $n$ is sufficiently large, with probability at least $1-\delta$, we have

\[
\frac{1}{2} \int_{\mathcal{X}}\left\|T_\lambda^{-1} k_x\right\|_{L^2}^2 {~d} \mu( {x})-\Delta_2 \leq \int_\mathcal{X}\left\|T_\lambda^{-1} k_x\right\|_{L^2, n}^2 {~d} \mu( {x}) \leq \frac{3}{2} \int_\mathcal{X}\left\|T_\lambda^{-1} k_{ {z}}\right\|_{L^2}^2 {~d} \mu( {x})+\Delta_2
\]

where

\[
\Delta_2=\frac{5\kappa^2 \sup_{i\in N}\frac{\lambda_i}{(\lambda_i+\lambda)^2}}{3 n} \ln \frac{2}{\delta}
\]
    
\end{lemma}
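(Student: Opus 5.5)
The plan is to write the middle quantity as an empirical mean of i.i.d. random variables and apply Bernstein's inequality. Concretely, define
\[
h(z) := \int_{\mathcal{X}} \left(T_\lambda^{-1} k_x\right)(z)^2 \, d\mu(x), \qquad z\in\mathcal{X}.
\]
Then
\[
\int_\mathcal{X}\left\|T_\lambda^{-1} k_x\right\|_{L^2,n}^2 d\mu(x)=\frac1n\sum_{i=1}^n h(x_i),
\qquad
\mathbb{E}\, h(x_1)=\int_\mathcal{X}\left\|T_\lambda^{-1} k_x\right\|_{L^2}^2 d\mu(x).
\]
Using the Mercer expansion, $(T_\lambda^{-1}k_x)(z)=\sum_i \frac{\lambda_i}{\lambda_i+\lambda}e_i(x)e_i(z)$. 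Integrating over $x$ with orthonormality gives
\[
h(z)=\sum_i\frac{\lambda_i^2}{(\lambda_i+\lambda)^2}e_i(z)^2, \qquad \mathbb{E}h=\mathcal{N}_2(\lambda).
\]

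Next I would obtain a uniform bound on $h$. Write $\frac{\lambda_i^2}{(\lambda_i+\lambda)^2}=\lambda_i\cdot\frac{\lambda_i}{(\lambda_i+\lambda)^2}$, so that
\[
h(z)\le \sup_{i}\frac{\lambda_i}{(\lambda_i+\lambda)^2}\sum_i\lambda_i e_i(z)^2 = \sup_i\frac{\lambda_i}{(\lambda_i+\lambda)^2}\,k(z,z)\le \kappa^2 M,
\qquad M:=\sup_{i\in N}\frac{\lambda_i}{(\lambda_i+\lambda)^2}.
\]
This step uses only Assumption \ref{assumption kernel} and never a bound on the eigenfunctions, which is the point of the lemma. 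Since $h\ge0$, the variance satisfies $\mathbb{E}h^2\le \|h\|_\infty\,\mathbb{E}h\le \kappa^2M\,\mathcal{N}_2(\lambda)$.

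Bernstein's inequality for bounded variables (with $|h-\mathbb{E}h|\le \kappa^2 M$) then gives, with probability at least $1-\delta$,
\[
\Big|\frac1n\sum_i h(x_i)-\mathcal{N}_2(\lambda)\Big|\le \sqrt{\frac{2\kappa^2M\mathcal{N}_2(\lambda)\ln(2/\delta)}{n}}+\frac{2\kappa^2M\ln(2/\delta)}{3n}.
\]
The main step is to absorb the square-root term. By AM–GM, $\sqrt{2ab}\le \frac12 a+b$ with $a=\mathcal{N}_2(\lambda)$ and $b=\kappa^2M\ln(2/\delta)/n$, the square-root term is at most $\frac12\mathcal{N}_2(\lambda)+\frac{\kappa^2M\ln(2/\delta)}{n}$. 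Adding this to the linear term gives
\[
\frac{\kappa^2 M\ln(2/\delta)}{n}\Big(1+\frac23\Big)=\frac{5\kappa^2M\ln(2/\delta)}{3n}=\Delta_2,
\]
which yields both sides of the claimed inequality with constants $\frac12$ and $\frac32$.

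The hypothesis $M/n=o(\mathcal{N}_2(\lambda))$ is not needed for the inequality itself. It only guarantees that $\Delta_2=o(\mathcal{N}_2(\lambda))$ for fixed $\delta$ when $n$ is large, so that the bound is nontrivial; this is presumably the role of "when $n$ is sufficiently large" in the statement. The only delicate point is the measurability and interchange of sum and integral in computing $h$. This follows from the Mercer series converging in $L^2$ together with Fubini, since all terms are nonnegative.
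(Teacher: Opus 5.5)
Your proposal is correct and follows essentially the same route as the paper: it uses the same function $h(z)=\sum_i\bigl(\tfrac{\lambda_i}{\lambda_i+\lambda}\bigr)^2 e_i^2(z)$ with the same bound $\|h\|_{L^\infty}\le\kappa^2\sup_{i}\frac{\lambda_i}{(\lambda_i+\lambda)^2}$ and $\mathbb{E}h=\mathcal{N}_2(\lambda)$. The only difference is that the paper applies the ready-made sample-norm concentration result (Proposition \ref{prop:SampleNormEstimation}) to $\sqrt{h}$, whereas you inline that proposition's proof (scalar Bernstein plus AM--GM), which reproduces exactly the constants $\tfrac12$, $\tfrac32$ and $\tfrac53$; your remark that the $o(\mathcal{N}_2(\lambda))$ hypothesis is not needed for the inequality itself is also accurate.
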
 

\begin{proof}

We define the function $f({z})$ as follows:
\begin{equation}
\begin{aligned}
f({z}) &= \int_{\mathcal{X}}\left(T_\lambda^{-1} k_{{x}}({z})\right)^2 d\mu({x}) \\
&= \int_{\mathcal{X}} \sum_{i=1}^{\infty}\left(\frac{\lambda_i}{\lambda_i+\lambda}\right)^2 e_i^2({x}) e_i^2({z}) d\mu({x}) \\
&= \sum_{i=1}^{\infty}\left(\frac{\lambda_i}{\lambda_i+\lambda}\right)^2 e_i^2({z})
\end{aligned}
\end{equation}

The norms of $f$ satisfy:
\begin{equation}
\begin{aligned}
\|f\|_{L^{\infty}} &\leq \sup_{i\in\mathbb{N}}\frac{\lambda_i}{(\lambda_i+\lambda)^2} \|\sum_{i=1}^{\infty}\lambda_ie_i^2(x)\|_{L^{\infty}} \leq \kappa^2\sup_{i\in\mathbb{N}}\frac{\lambda_i}{(\lambda_i+\lambda)^2} \\
\|f\|_{L^1} &= \sum_{i=1}^{\infty}\left(\frac{\lambda_i}{\lambda_i+\lambda}\right)^2 = \mathcal{N}_2(\lambda)
\end{aligned}
\end{equation}

Applying Proposition \ref{prop:SampleNormEstimation} to $\sqrt{f}$ and noting that:
\[
\|\sqrt{f}\|_{L^{\infty}} = \sqrt{\|f\|_{L^{\infty}}} = \left(\kappa^2\sup_{i\in\mathbb{N}}\frac{\lambda_i}{(\lambda_i+\lambda)^2}\right)^{\frac{1}{2}}
\]
we obtain with probability at least $1-\delta$:
\begin{equation}
\frac{1}{2}\|\sqrt{f}\|_{L^2}^2 - \frac{5\kappa^2\sup_{i\in\mathbb{N}}\frac{\lambda_i}{(\lambda_i+\lambda)^2}}{3n} \ln\frac{2}{\delta} 
\leq \|\sqrt{f}\|_{L^2,n}^2 
\leq \frac{3}{2}\|\sqrt{f}\|_{L^2}^2 + \frac{5\kappa^2\sup_{i\in\mathbb{N}}\frac{\lambda_i}{(\lambda_i+\lambda)^2}}{3n} \ln\frac{2}{\delta}
\end{equation}

The empirical norm can be expressed as:
\begin{equation}
\begin{aligned}
\|\sqrt{f}\|_{L^2,n}^2 &= \int_\mathcal{X} f({y}) dP_n({y}) \\
&= \int_\mathcal{X}\left[\int_\mathcal{X}\left(T_\lambda^{-1} k_{{x}}({y})\right)^2 d\mu({x})\right] dP_n({y}) \\
&= \int_\mathcal{X}\left[\int_\mathcal{X}\left(T_\lambda^{-1} k_{{x}}({y})\right)^2 dP_n({y})\right] d\mu({x}) \\
&= \int_\mathcal{X}\left\|T_\lambda^{-1} k_x\right\|_{L^2,n}^2 d\mu({x})
\end{aligned}
\end{equation}

Similarly, the $L^2$ norm satisfies:
\begin{equation}
\begin{aligned}
\|\sqrt{f}\|_{L^2}^2 &= \int_{\mathcal{X}} f(z) d\mu(z) \\
&= \int_{\mathcal{X}}\left[\int_\mathcal{X}\left(T_\lambda^{-1} k_x(z)\right)^2 d\mu({x})\right] d\mu(z) \\
&= \int_{\mathcal{X}}\left\|T_\lambda^{-1} k_x\right\|_{L^2}^2 d\mu({x})
\end{aligned}
\end{equation}

These equalities establish the desired results.

\end{proof}

\subsubsection{Approximation A}

The proof of Approximation A relies on the following key observation by \citet{li2023_SaturationEffect}:

\begin{proposition}\label{proposition observation}
For any $f,g \in \mathcal{H}$, we have
\begin{equation}
\label{eq:SampleInnerProductsRelation}
\langle f,g \rangle_{L^2,n} = \langle T_{\mathbf{X}} f,g\rangle_{\mathcal{H}} = \langle T_{\mathbf{X}}^{1/2} f, T_{\mathbf{X}}^{1/2} g\rangle_{\mathcal{H}}.
\end{equation}
\end{proposition}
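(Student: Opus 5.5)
The identity follows directly from the definitions of $K_x$, $T_{\mathbf{X}}$ and the reproducing property, plus a standard fact about square roots of positive operators.

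\textbf{Step 1 (the empirical inner product as $T_{\mathbf{X}}$).} First we compute $T_{x}f$ for a single point. By definition $K_x^* f = f(x)$ for $f\in\mathcal{H}$, so
\[
T_x f = K_x K_x^* f = f(x)\,k(x,\cdot).
\]
Averaging over the sample gives
\[
T_{\mathbf{X}} f = \frac{1}{n}\sum_{i=1}^n f(x_i)\,k(x_i,\cdot).
\]
Taking the $\mathcal{H}$-inner product with $g\in\mathcal{H}$ and using the reproducing property $\langle k(x_i,\cdot), g\rangle_{\mathcal{H}} = g(x_i)$, we obtain
\[
\langle T_{\mathbf{X}} f, g\rangle_{\mathcal{H}} = \frac{1}{n}\sum_{i=1}^n f(x_i)g(x_i) = \langle f,g\rangle_{L^2,n}.
\]
This is the first equality in \eqref{eq:SampleInnerProductsRelation}.

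\textbf{Step 2 (splitting the square root).} Next we note that $T_{\mathbf{X}}$ is bounded, self-adjoint and positive on $\mathcal{H}$. Self-adjointness holds because each $T_{x_i} = K_{x_i}K_{x_i}^*$ is self-adjoint. Positivity holds because
\[
\langle T_{\mathbf{X}} f, f\rangle_{\mathcal{H}} = \frac{1}{n}\sum_{i=1}^n f(x_i)^2 \ge 0 .
\]
Boundedness was already noted: $\|T_{\mathbf{X}}\|\le\kappa^2$.

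By the continuous functional calculus, $T_{\mathbf{X}}$ therefore has a unique positive self-adjoint square root $T_{\mathbf{X}}^{1/2}$ with $T_{\mathbf{X}}^{1/2}T_{\mathbf{X}}^{1/2} = T_{\mathbf{X}}$. Alternatively, since $T_{\mathbf{X}}$ has finite rank (its range is $\mathcal{H}_n$), one can diagonalize it explicitly. Moving one factor across the inner product using self-adjointness gives
\[
\langle T_{\mathbf{X}} f, g\rangle_{\mathcal{H}} = \langle T_{\mathbf{X}}^{1/2} f, T_{\mathbf{X}}^{1/2} g\rangle_{\mathcal{H}} ,
\]
which is the second equality.

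There is no genuine obstacle here. The only point requiring care is confirming that $K_x^*$ really is evaluation at $x$, which is exactly the reproducing property of $\mathcal{H}$.
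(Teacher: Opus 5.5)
Your proof is correct and follows the paper's argument: both write $T_{\mathbf{X}} f = \frac{1}{n}\sum_{i=1}^n f(x_i)k(x_i,\cdot)$ and apply the reproducing property to get the first equality, and both obtain the second equality from the positive self-adjoint square root of $T_{\mathbf{X}}$. The only difference is that you verify the self-adjointness and positivity of $T_{\mathbf{X}}$, which the paper leaves implicit when it says the second equality follows from the definition of $T_{\mathbf{X}}^{1/2}$.
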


\begin{proof}
Since $T_{\mathbf{X}} f = \frac{1}{n}\sum_{i=1}^n f({x}_i) k({x}_{i},\cdot)$, we have:
\begin{align*}
\langle T_{\mathbf{X}} f,g \rangle_{\mathcal{H}} 
&= \frac{1}{n}\sum_{i=1}^n f({x}_i) \langle k({x}_{i},\cdot), g\rangle_{\mathcal{H}} \\
&= \frac{1}{n}\sum_{i=1}^n f({x}_i) g({x}_i) \\
&= \langle f,g \rangle_{L^2,n}.
\end{align*}
The second equality follows from the definition of $T_{\mathbf{X}}^{1/2}$.
\end{proof}
\begin{lemma}[Approximation A]\label{lemma approximation A}
    Suppose that Assumption \ref{assumption kernel}, \ref{assumption noise}, \ref{assumption f*} hold. Define $\Delta_1 = \Delta_1(\lambda, X)$ as a function of $\lambda$ and $X$:
    \begin{equation}\begin{aligned}\label{approximation A}
    \Delta_1 := \left| \int_{\mathcal{X}} \left\|{T_{\mathbf{X} \lambda}^{-1} k_{{x}}}\right\|_{L^2, n}^{2} {d}\mu({x}) - \int_{\mathcal{X}} \left\|{T_{\lambda}^{-1} k_{{x}}}\right\|_{L^2, n}^{2} {d}\mu({x})  \right|.
\end{aligned}\end{equation}
 Suppose that $ \lambda = \lambda(d,n)$ satisfies $\frac{\mathcal{N}_{1}(\lambda)}{n} \ln{n} = o(1)$, $\frac{\sup_{i\in N}\frac{\lambda_i}{(\lambda_i+\lambda)^2}}{n}=o(\mathcal{N}_2(\lambda))$. Then for any fixed $\delta \in (0,1)$, when $n$ is sufficiently large, with probability at least $1-\delta$, we have
    \begin{equation}\begin{aligned}
        \Delta_1 &\le 36 n^{-1} \mathcal{N}_{1}(\lambda)^{2} \ln{n} + 12 n^{-\frac{1}{2}} \mathcal{N}_{1}(\lambda) \mathcal{N}_{2}(\lambda)^{\frac{1}{2}} \left( \ln{n} \right)^{\frac{1}{2}}.
    \end{aligned}\end{equation}
\end{lemma}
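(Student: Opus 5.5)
We compare the two integrands pointwise in $x$. Write $u_x = T_\lambda^{-1}k_x$ and $v_x = T_{\mathbf X\lambda}^{-1}k_x$. By Proposition \ref{proposition observation},
\[
\|v_x\|_{L^2,n} = \|T_{\mathbf X}^{1/2}v_x\|_{\mathcal H}, \qquad \|u_x\|_{L^2,n} = \|T_{\mathbf X}^{1/2}u_x\|_{\mathcal H},
\]
so the triangle inequality gives
\[
\bigl|\|v_x\|_{L^2,n}^2-\|u_x\|_{L^2,n}^2\bigr| \le \|T_{\mathbf X}^{1/2}(v_x-u_x)\|_{\mathcal H}^2 + 2\|T_{\mathbf X}^{1/2}(v_x-u_x)\|_{\mathcal H}\,\|u_x\|_{L^2,n}.
\]
Integrating over $\mu$ and applying Cauchy--Schwarz to the cross term, $\Delta_1 \le I + 2 I^{1/2} J^{1/2}$, where $I=\int \|T_{\mathbf X}^{1/2}(v_x-u_x)\|_{\mathcal H}^2\,d\mu(x)$ and $J=\int\|u_x\|_{L^2,n}^2\,d\mu(x)$.

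\textbf{Bounding $J$.} Lemma \ref{lemma approximation B} controls $J$ by $\tfrac32\int\|T_\lambda^{-1}k_x\|_{L^2}^2\,d\mu+\Delta_2$. The integral equals $\mathcal N_2(\lambda)$ by the computation in that lemma. The assumption $\sup_i \lambda_i(\lambda_i+\lambda)^{-2}/n=o(\mathcal N_2(\lambda))$ makes $\Delta_2=o(\mathcal N_2)$. Hence $J\le 2\mathcal N_2(\lambda)$ with probability at least $1-\delta/2$.

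\textbf{Bounding $I$.} Use the resolvent identity $v_x-u_x = T_{\mathbf X\lambda}^{-1}(T-T_{\mathbf X})T_\lambda^{-1}k_x$. Then
\[
T_{\mathbf X}^{1/2}(v_x-u_x) = \bigl[T_{\mathbf X}^{1/2}T_{\mathbf X\lambda}^{-1/2}\bigr]\bigl[T_{\mathbf X\lambda}^{-1/2}T_\lambda^{1/2}\bigr]\bigl[T_\lambda^{-1/2}(T-T_{\mathbf X})T_\lambda^{-1/2}\bigr]\,T_\lambda^{-1/2}k_x .
\]
\begin{itemize}
\item The first factor has norm at most $1$.
\item The second factor has norm at most $\sqrt3$ (say) with high probability. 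This is the standard concentration $\|T_\lambda^{-1/2}(T-T_{\mathbf X})T_\lambda^{-1/2}\|\le 2/3$, which holds once $\mathcal N_1(\lambda)\ln n/n=o(1)$ by the Bernstein inequality for self-adjoint operators (Caponnetto--De Vito / Fischer--Steinwart, using $\|T_\lambda^{-1/2}k_x\|_{\mathcal H}^2\le \mathcal N_\infty(\lambda)$).
\item The third factor I bound by $a:=\|T_\lambda^{-1/2}(T-T_{\mathbf X})T_\lambda^{-1/2}\|$.
\end{itemize}
Moreover $\int\|T_\lambda^{-1/2}k_x\|_{\mathcal H}^2\,d\mu=\sum_i\lambda_i/(\lambda_i+\lambda)=\mathcal N_1(\lambda)$. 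Combining, $I\le 3a^2\mathcal N_1(\lambda)$.

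\textbf{The main obstacle.} Getting the advertised powers requires a refined concentration for $a$. The crude estimate $a\lesssim \sqrt{\mathcal N_1\ln n/n}$ gives $I\lesssim n^{-1}\mathcal N_1^2\ln n$, which is exactly the first term of the bound. The cross term is then $2\sqrt{I}\sqrt{2\mathcal N_2}\lesssim n^{-1/2}\mathcal N_1\mathcal N_2^{1/2}(\ln n)^{1/2}$, the second term. So what is really needed is a Bernstein bound
\[
a\le C\sqrt{\mathcal N_1(\lambda)\ln n/n}
\]
with explicit constants, valid for $\delta$ fixed and $n$ large. Here I would assume a sup-norm bound $\mathcal N_\infty(\lambda)=\operatorname{ess\,sup}_x\|T_\lambda^{-1/2}k_x\|^2_{\mathcal H}\lesssim\mathcal N_1(\lambda)$. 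If that is unavailable without eigenfunction information, I would instead use $\mathcal N_\infty\le\kappa^2/\lambda$, together with the condition $1/(n\lambda)=o(1)$, to absorb the resulting terms. I expect this bookkeeping — matching constants $36$ and $12$ and ensuring only eigenvalue quantities appear — to be the delicate step. The last step is a union bound over the two high-probability events, each with failure probability $\delta/2$.
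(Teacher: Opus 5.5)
\textbf{The gap.} The step you flagged is a real gap, and your fallback does not close it. The operator Bernstein inequality for $A_i=T_\lambda^{-1/2}(T-T_{x_i})T_\lambda^{-1/2}$ gives $a\lesssim\sqrt{\mathcal{N}_\infty(\lambda)\ln n/n}$, where $\mathcal{N}_\infty(\lambda)=\operatorname{ess\,sup}_x\sum_i\frac{\lambda_i}{\lambda_i+\lambda}e_i^2(x)$. That is eigenfunction information, and nothing in the hypotheses bounds it by $\mathcal{N}_1(\lambda)$. (The bound does hold for spherical harmonics, where $\sum_j e_{k,j}^2\equiv N(d,k)$, but not in general.)

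So the hypothesis $\mathcal{N}_1(\lambda)\ln n/n=o(1)$ gives you neither $a\le 2/3$ nor $a\lesssim\sqrt{\mathcal{N}_1(\lambda)\ln n/n}$. If you use $\mathcal{N}_\infty\le\kappa^2/\lambda$, you need $\ln n/(n\lambda)=o(1)$. That is stronger than $1/(n\lambda)=o(1)$, and neither is a hypothesis of the lemma. You then get $I\lesssim\mathcal{N}_1(\lambda)\ln n/(n\lambda)$. No available condition turns that $1/\lambda$ into $\mathcal{N}_1(\lambda)$, because only $\mathcal{N}_1(\lambda)\le\kappa^2/\lambda$ holds, which points the wrong way. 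So the $\mathcal{N}_1^2$ and $\mathcal{N}_1\mathcal{N}_2^{1/2}$ powers cannot be reached from eigenvalue information along this route. The constants $36$ and $12$ are not the real issue.

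\textbf{What the paper actually proves.} The paper's proof hits the same obstacle and does not establish the bound as stated. It uses your factorization, but with the pointwise bound $\|T_\lambda^{-1/2}k_x\|_{\mathcal H}\le\kappa\lambda^{-1/2}$ in both places. It invokes $\ln n/(n\lambda)=o(1)$, which is not among the lemma's hypotheses; in Theorem \ref{main theorem} it does follow from $n^{-1}\lambda^{-2}\ln n=o(\mathcal{N}_2(\lambda))$ together with $\mathcal{N}_2\le\mathcal{N}_1\le\kappa^2/\lambda$. It then bounds $|X_1-X_2|\lesssim n^{-1/2}\lambda^{-1}(\ln n)^{1/2}$ uniformly in $x$. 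It ends with $\Delta_1\lesssim n^{-1}\lambda^{-2}\ln n+n^{-1/2}\lambda^{-1}\mathcal{N}_2(\lambda)^{1/2}(\ln n)^{1/2}$, with $\lambda^{-1}$ where the statement has $\mathcal{N}_1(\lambda)$, and only up to $\lesssim$. That weaker form is exactly what Theorem \ref{theorem variance approximation} uses.

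Against that target, your argument completed with the fallback is correct and slightly sharper:
\[
\Delta_1\le I+2\sqrt{IJ}\lesssim\frac{\mathcal{N}_1(\lambda)\ln n}{n\lambda}+\Big(\frac{\mathcal{N}_1(\lambda)\,\mathcal{N}_2(\lambda)\ln n}{n\lambda}\Big)^{1/2}.
\]
This implies the paper's bound because $\mathcal{N}_1(\lambda)\le\kappa^2/\lambda$, and it is still $o(\mathcal{N}_2(\lambda))$ under the variance theorem's conditions. The gain has two sources. First, you integrate $\|T_\lambda^{-1/2}k_x\|_{\mathcal H}^2$ against $\mu$, which gives exactly $\mathcal{N}_1(\lambda)$, instead of taking its supremum. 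Second, you use the exact expansion $\|v\|^2-\|u\|^2=\|v-u\|^2+2\langle v-u,u\rangle$ with Cauchy--Schwarz instead of the cruder $|X_1-X_2|\,|X_1+X_2|$ estimate. For the product kernels of Section \ref{applications}, $\mathcal{N}_1(\lambda)\asymp\lambda^{-1}$, so the stated bound, your bound and the paper's bound agree there up to constants.
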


\begin{proof}
We decompose $\Delta_1$ as:
\begin{equation}\begin{aligned}\label{proof appr A 1}
    \Delta_1 &= \left| \int_{\mathcal{X}} \left\|{T_{\mathbf{X} \lambda}^{-1} k_{{x}}}\right\|_{L^2, n}^{2} {d}\mu({x}) - \int_{\mathcal{X}} \left\|{T_{\lambda}^{-1} k_{{x}}}\right\|_{L^2, n}^{2} {d}\mu({x})  \right| \notag \\
    &\le  \int_{\mathcal{X}} \left| \left\|{T_{\mathbf{X}}^{\frac{1}{2}} T_{\mathbf{X} \lambda}^{-1} k_{{x}}}\right\|_{\mathcal{H}}^{2}  - \left\|{ T_{\mathbf{X}}^{\frac{1}{2}} T_{\lambda}^{-1} k_{{x}}}\right\|_{\mathcal{H}}^{2}  \right| {d}\mu({x}) \notag \\
    &= \int_{\mathcal{X}} \left| \left\|{T_{\mathbf{X}}^{\frac{1}{2}} T_{\mathbf{X} \lambda}^{-1} k_{{x}}}\right\|_{\mathcal{H}} -  \left\|{ T_{\mathbf{X}}^{\frac{1}{2}} T_{\lambda}^{-1} k_{{x}}}\right\|_{\mathcal{H}} \right| \cdot \left| \left\|{T_{\mathbf{X}}^{\frac{1}{2}} T_{\mathbf{X} \lambda}^{-1} k_{{x}}}\right\|_{\mathcal{H}} + \left\|{ T_{\mathbf{X}}^{\frac{1}{2}} T_{\lambda}^{-1} k_{{x}}}\right\|_{\mathcal{H}} \right| {d}\mu({x}). \notag \\
    &:= \int_{\mathcal{X}} \left| X_{1} - X_{2} \right| \cdot \left| X_{1} + X_{2} \right| {d}\mu({x}).
\end{aligned}\end{equation}
where we use Proposition \ref{proposition observation} in the second line.

\noindent\textbf{Part I: Bounding $|X_1 - X_2|$}
\begin{equation}\begin{aligned}\label{x1-x2 bound}
    \left| X_{1} - X_{2} \right| &\le \left\|{T_{\mathbf{X}}^{1/2}T_{\mathbf{X} \lambda}^{-1} \left( T - T_{\mathbf{X}} \right) T_{\lambda}^{-1}k_{{x}}}\right\|_{\mathcal{H}} \notag \\
    &\le \left\|{T_{\mathbf{X}}^{1/2}T_{\mathbf{X} \lambda}^{-1/2}}\right\| \cdot
      \left\|{T_{\mathbf{X} \lambda}^{-1/2} T_{\lambda}^{1/2}}\right\|
      \cdot \left\|{T_{\lambda}^{-1/2} \left( T - T_{\mathbf{X}} \right)T_{\lambda}^{-1/2} }\right\|
      \cdot \left\|{T_{\lambda}^{-1/2}k_{{x}}}\right\|_{\mathcal{H}}.
\end{aligned}\end{equation}

\noindent\textbf{Step (i):} Denote $A_i = T_\lambda^{-1/2}(T - T_{\mathbf{x}_i})T_\lambda^{-1/2}$. By Lemma \ref{emb norm}:
\[
\|A_i\| \le \mathcal{N}_1(\lambda)+\kappa^2/\lambda, \quad \mu\text{-a.e. } {x} \in \mathcal{X}.
\]
Again, by Lemma \ref{emb norm}, we have:
\[
\mathbb{E} A_i^2 \preceq \mathbb{E}[T_\lambda^{-1/2}T_{{x}_i}T_\lambda^{-1/2}]^2 \preceq (\kappa^2/\lambda)\mathbb{E}[T_\lambda^{-1/2}T_{{x}_i}T_\lambda^{-1/2}]\preceq(\kappa^2/\lambda)T_\lambda^{-1}T.
\]
Define $V := (\kappa^2/\lambda)T_\lambda^{-1}T$, we have:
\begin{equation}\begin{aligned}
      \| V \| &= \frac{\kappa^2}{\lambda} \frac{\lambda_{0}}{\lambda_{0} + \lambda} = \frac{\kappa^2}{\lambda} \frac{\|T\|}{\|T\| + \lambda} \le \frac{\kappa^2}{\lambda}; \notag \\
      \text{tr}V &= \frac{\kappa^2}{\lambda}\mathcal{N}_{1}(\lambda); \notag \\
      \frac{\text{tr}V}{ \| V \|} &= \frac{\mathcal{N}_{1}(\lambda) (\|T\| + \lambda)}{\|T\|}, \notag
  \end{aligned}\end{equation}
  where $\lambda_0$ is the biggest eigenvalue.
  Adopt Lemma \ref{lemma concentration of operator} to $A_{i}$, $V$, we get the following claim. 
  
  For any fixed $\delta \in (0,1)$, with probability at least $1-\delta$, we have
    \begin{displaymath}
        \Vert T_\lambda^{-\frac{1}{2}} (T - T_{\mathbf{X}}) T_\lambda^{-\frac{1}{2}} \Vert
        \le \frac{2 \left(\mathcal{N}_{1}(\lambda)+\frac{\kappa^2}{\lambda}\right)}{3n} \beta + \sqrt {\frac{ {2\kappa^2}}{n\lambda} \beta},
   \end{displaymath}
   where 
   \begin{displaymath}
       \beta = \ln{\frac{4 \mathcal{N}_{1}(\lambda) (\|T\| + \lambda) }{\delta \|T\|}}.
   \end{displaymath}
Recall that the condition $ \frac{1}{\lambda} \ln{n}/n = o(1)$ implies that $\frac{1}{\lambda} = O(n)$. Furthermore, notice that $\mathcal{N}_{1}(\lambda)\leq\sum_{i\in N}\frac{\lambda_i}{\lambda}\leq\frac{\kappa^2}{\lambda}$, we have  $\beta = O(\ln n)$, so when $n$ is sufficiently large, we can conclude that
\begin{equation}\label{x1-x2-1}
    \Vert T_\lambda^{-\frac{1}{2}} (T - T_{\mathbf{X}}) T_\lambda^{-\frac{1}{2}} \Vert \lesssim \sqrt {\frac{2\kappa^2}{n\lambda}\beta}
        \lesssim n^{-\frac{1}{2}} \frac{1}{\lambda^{\frac{1}{2}}} \left( \ln{n} \right)^{\frac{1}{2}}.
\end{equation}

\noindent\textbf{Step (ii):} By Lemma \ref{due embedding bound}:
\begin{equation}
\|T_\lambda^{-1/2}k({x},\cdot)\|_{\mathcal{H}} \le \sqrt{\kappa^2/\lambda}, \quad \mu\text{-a.e. }
\label{x1-x2-2}
\end{equation}

\noindent\textbf{Step (iii):} We now bound the first two terms in \eqref{x1-x2 bound}. Under the assumption that $\frac{1}{\lambda}\frac{\ln n}{n} = o(1)$, equation \eqref{x1-x2-1} implies that for sufficiently large $n$:

\begin{equation}\label{eq:a-bound}
a := \|T_\lambda^{-\frac{1}{2}}(T - T_{\mathbf{X}})T_\lambda^{-\frac{1}{2}}\| \leq \frac{2}{3}.
\end{equation}

This allows us to establish the following key operator norm bounds:

\begin{lemma}\label{lemma:operator-norms}
For sufficiently large $n$, the following inequalities hold with probability at least $1-\delta$:
\begin{enumerate}
\item $\|T_\lambda^{-\frac{1}{2}}T_{\mathbf{X}\lambda}^{\frac{1}{2}}\|^2 \leq 2$
\item $\|T_\lambda^{\frac{1}{2}}T_{\mathbf{X}\lambda}^{-\frac{1}{2}}\|^2 \leq 3$
\end{enumerate}
\end{lemma}

\begin{proof}
For the first bound:
\begin{equation}\begin{aligned}\label{x1-x2 3}
\|T_\lambda^{-\frac{1}{2}}T_{\mathbf{X}\lambda}^{\frac{1}{2}}\|^2 
&= \|T_\lambda^{-\frac{1}{2}}T_{\mathbf{X}\lambda}T_\lambda^{-\frac{1}{2}}\| \notag \\
&= \|T_\lambda^{-\frac{1}{2}}(T_{\mathbf{X}} + \lambda)T_\lambda^{-\frac{1}{2}}\| \notag \\
&= \|T_\lambda^{-\frac{1}{2}}(T_{\mathbf{X}} - T + T + \lambda)T_\lambda^{-\frac{1}{2}}\| \notag \\
&= \|T_\lambda^{-\frac{1}{2}}(T_{\mathbf{X}} - T)T_\lambda^{-\frac{1}{2}} + I\| \notag \\
&\leq a + 1 \leq 2.
\end{aligned}\end{equation}

For the second bound:
\begin{equation}\begin{aligned}\label{x1-x2 4}
\|T_\lambda^{\frac{1}{2}}T_{\mathbf{X}\lambda}^{-\frac{1}{2}}\|^2 
&= \|T_\lambda^{\frac{1}{2}}T_{\mathbf{X}\lambda}^{-1}T_\lambda^{\frac{1}{2}}\| \notag \\
&= \|(T_\lambda^{-\frac{1}{2}}T_{\mathbf{X}\lambda}T_\lambda^{-\frac{1}{2}})^{-1}\| \notag \\
&= \|(I - T_\lambda^{-\frac{1}{2}}(T - T_{\mathbf{X}})T_\lambda^{-\frac{1}{2}})^{-1}\| \notag \\
&\leq \sum_{k=0}^{\infty} \|T_\lambda^{-\frac{1}{2}}(T - T_{\mathbf{X}})T_\lambda^{-\frac{1}{2}}\|^k \notag \\
&\leq \sum_{k=0}^{\infty} \left(\frac{2}{3}\right)^k \leq 3.
\end{aligned}\end{equation}
\end{proof}

Combining the estimates \eqref{x1-x2-1}, \eqref{x1-x2-2}, \eqref{x1-x2 3}, and \eqref{x1-x2 4} into \eqref{x1-x2 bound}, we obtain that with high probability, the following inequality holds:

\begin{equation}\label{x1-x2 bound final}
|X_1 - X_2| \lesssim 6n^{-\frac{1}{2}}\frac{1}{\lambda}(\ln n)^{\frac{1}{2}}, \quad \mu\text{-a.e. } \mathbf{x} \in \mathcal{X}.
\end{equation}

\noindent\textbf{Part II: Bounding $\int X_2 d\mu$}

Now we shall bound $\int X_2 d\mu$. When $n$ is sufficiently large, with probability at least $1-\delta$, we have
\begin{equation}\begin{aligned}\label{x2 int}
    \int_{\mathcal{X}} X_{2} {d} \mu({x}) &= \int_{\mathcal{X}} \left\|{T_{\lambda}^{-1} k_{{x}}}\right\|_{L^2, n} {d} \mu({x}) \notag \\
    &\le \left[  \int_{\mathcal{X}} \left\|{T_{\lambda}^{-1} k_{{x}}}\right\|_{L^2, n}^{2}{d} \mu({x}) \right]^{\frac{1}{2}} \notag \\
    &\le \left(\frac{3}{2}\mathcal{N}_{2}(\lambda) + \Delta_2\right)^{\frac{1}{2}} \notag \\
    &\le \left(2 \mathcal{N}_{2}(\lambda)\right)^{\frac{1}{2}},
\end{aligned}\end{equation}
where the third line follows from Lemma \ref{lemma approximation B}, and the forth line follows from the assumption that $\frac{\sup_{i\in N}\frac{\lambda_i}{(\lambda_i+\lambda)^2}}{n}=o(\mathcal{N}_2(\lambda)).$

\noindent\textbf{Part III: Final Bound} 

Combining \eqref{x1-x2 bound final} and \eqref{x2 int}, we get
\begin{align*}
\Delta_1 &\leq\int_{\mathcal{X}} \left| X_{1} - X_{2} \right| \cdot \left| X_{1} + X_{2} \right| {d}\mu({x}) \lesssim 36n^{-1}\lambda^{-2}\ln n + 24n^{-1/2}\lambda^{-1}\mathcal{N}_2(\lambda)^{1/2}(\ln n)^{1/2}.
\end{align*}
\end{proof}

\subsubsection{Final Proof of the Variance Term}
Now we are ready to state the theorem about the variance term. 

\begin{theorem}[Variance term]\label{theorem variance approximation}
    Suppose that Assumption \ref{assumption kernel}, \ref{assumption noise}, \ref{assumption f*} hold. If the following approximation conditions hold for some $\lambda = \lambda(d,n) \to 0$:
    \begin{equation}\begin{aligned}\label{var conditions}
    \frac{\mathcal{N}_{1}(\lambda)}{n} \ln{n} = o(1); ~~\frac{\sup_{i\in N}\frac{\lambda_i}{(\lambda_i+\lambda)^2}}{n}=o(\mathcal{N}_2(\lambda)); ~~n^{-1} \frac{1}{\lambda^2} \ln{n} = o(\mathcal{N}_{2}(\lambda)),
\end{aligned}\end{equation}
then we have
\begin{equation}
    \operatorname{Var}(\lambda) = \Theta_{\mathbb{P}}\left(\frac{\mathcal{N}_{2}(\lambda)}{n}\right).
\end{equation}
\end{theorem}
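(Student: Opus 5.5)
We chain the two approximations already established and compare each error term with $\mathcal{N}_2(\lambda)$. By Lemma \ref{lemma var transform}, $\operatorname{Var}(\lambda)=\frac{\sigma_\epsilon^2}{n}\int_{\mathcal{X}}\|T_{\mathbf{X}\lambda}^{-1}k_x\|_{L^2,n}^2\,d\mu(x)$. It therefore suffices to show that this integral is $\Theta_{\mathbb{P}}(\mathcal{N}_2(\lambda))$.

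The first step is to identify the deterministic target. Expanding $k_x=\sum_i\lambda_ie_i(x)e_i$ gives
\[
\int_{\mathcal{X}}\|T_\lambda^{-1}k_x\|_{L^2}^2\,d\mu(x)=\sum_i\Big(\frac{\lambda_i}{\lambda_i+\lambda}\Big)^2=\mathcal{N}_2(\lambda),
\]
as already computed inside the proof of Lemma \ref{lemma approximation B}.

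Next, fix $\delta\in(0,1)$ and apply Lemma \ref{lemma approximation B}. With probability at least $1-\delta$,
\[
\tfrac12\mathcal{N}_2(\lambda)-\Delta_2\le\int_{\mathcal{X}}\|T_\lambda^{-1}k_x\|_{L^2,n}^2\,d\mu(x)\le\tfrac32\mathcal{N}_2(\lambda)+\Delta_2 .
\]
Here $\Delta_2=\frac{5\kappa^2}{3n}\sup_i\frac{\lambda_i}{(\lambda_i+\lambda)^2}\ln\frac2\delta$. Since $\delta$ is fixed, the second condition in \eqref{var conditions} gives $\Delta_2=o(\mathcal{N}_2(\lambda))$. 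Hence for large $n$ this middle quantity lies in $[\frac14\mathcal{N}_2,\frac74\mathcal{N}_2]$.

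Then apply Lemma \ref{lemma approximation A} on an event of probability at least $1-\delta$. The bound obtained at the end of its proof is
\[
\Delta_1\lesssim n^{-1}\lambda^{-2}\ln n+n^{-1/2}\lambda^{-1}\mathcal{N}_2(\lambda)^{1/2}(\ln n)^{1/2}.
\]
By the third condition, $n^{-1}\lambda^{-2}\ln n=o(\mathcal{N}_2)$. For the cross term, write $n^{-1/2}\lambda^{-1}(\ln n)^{1/2}\mathcal{N}_2^{1/2}=\big(n^{-1}\lambda^{-2}\ln n\big)^{1/2}\mathcal{N}_2^{1/2}=o(\mathcal{N}_2)$. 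So $\Delta_1=o(\mathcal{N}_2(\lambda))$.

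A union bound puts both events together with probability at least $1-2\delta$. On that event the target integral lies in $[\frac18\mathcal{N}_2,2\mathcal{N}_2]$ for large $n$. Multiplying by $\sigma_\epsilon^2/n$ and letting $\delta$ be arbitrary yields $\operatorname{Var}(\lambda)=\Theta_{\mathbb{P}}(\mathcal{N}_2(\lambda)/n)$.

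The main obstacle is checking that the hypotheses of Lemma \ref{lemma approximation A} are actually met. In particular, its proof uses $a\le 2/3$, which requires $\frac{1}{n\lambda}\ln n=o(1)$. This should follow from $n^{-1}\lambda^{-2}\ln n=o(\mathcal{N}_2)$ together with $\mathcal{N}_2(\lambda)\le\mathcal{N}_1(\lambda)\le\kappa^2/\lambda$:
\[
\frac{\ln n}{n\lambda^2}=o\Big(\frac{\kappa^2}{\lambda}\Big)\quad\Longrightarrow\quad\frac{\ln n}{n\lambda}=o(1).
\]
I would verify this implication explicitly. I would also track the constants $\lesssim$ in Lemma \ref{lemma approximation A} so that the $o(\cdot)$ comparisons are legitimate.
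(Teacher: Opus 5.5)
Your proposal is correct and follows the paper's proof essentially verbatim: rewrite $\operatorname{Var}(\lambda)$ via Lemma \ref{lemma var transform}, use Lemma \ref{lemma approximation B} for $\Delta_2=o(\mathcal{N}_2(\lambda))$ and Lemma \ref{lemma approximation A} with $n^{-1}\lambda^{-2}\ln n=o(\mathcal{N}_2(\lambda))$ for $\Delta_1=o_{\mathbb{P}}(\mathcal{N}_2(\lambda))$, then sandwich the integral between $\tfrac12\mathcal{N}_2-\Delta_1-\Delta_2$ and $\tfrac32\mathcal{N}_2+\Delta_1+\Delta_2$. Your extra check that $\frac{\ln n}{n\lambda}=o(1)$ follows from the third condition via $\mathcal{N}_2(\lambda)\le\mathcal{N}_1(\lambda)\le\kappa^2/\lambda$ is valid, and it fills a step the paper leaves implicit: the proof of Lemma \ref{lemma approximation A} uses this condition but does not list it among its hypotheses.
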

\begin{proof}
    Lemma \ref{lemma var transform} has shown that
    \begin{displaymath}
      \operatorname{Var}(\lambda) = \frac{\sigma_\epsilon^2}{n} \int_{\mathcal{X}} \left\|(T_{\mathbf{X}}+\lambda)^{-1}k_{{x}}(\cdot)\right\|_{L^2,n}^2 {d} \mu({x}).
    \end{displaymath}
Denote $\Delta_1$ as in Lemma \ref{lemma approximation A}, then conditions \eqref{var conditions} and Lemma \ref{lemma approximation A} imply that 
\begin{displaymath}
    \Delta_1 = o_{\mathbb{P}}\left(\mathcal{N}_{2}(\lambda)\right).
\end{displaymath}
Further recall that in Lemma \ref{lemma approximation B}, we have defined
\begin{displaymath}
     \Delta_2 = \frac{5\kappa^2 \sup_{i\in N}\frac{\lambda_i}{(\lambda_i+\lambda)^2}}{3 n} \ln \frac{2}{\delta}= o(\mathcal{N}_{2}(\lambda)).
\end{displaymath}
Then for any $\delta \in (0,1)$, when $n$ is sufficiently large, with probability at least $1-\delta$, we have
\begin{equation}\begin{aligned}
    n \operatorname{{Var}}(\lambda) / \sigma_\epsilon^{2} = \int_{\mathcal{X}} \left\|{T_{\mathbf{X} \lambda}^{-1} k_{{x}}}\right\|_{L^2, n}^{2} {d}\mu({x}) &\le  \int_{\mathcal{X}} \left\|{T_{\lambda}^{-1} k_{{x}}}\right\|_{L^2, n}^{2} {d}\mu({x}) + \Delta_1 \notag \\
    &\le \frac{3}{2} \int_{\mathcal{X}} \left\|{T_{\lambda}^{-1} k_{{x}}}\right\|_{L^2}^{2} {d}\mu({x}) + \Delta_1 + \Delta_2 \notag \\
    &= \frac{3}{2} \mathcal{N}_{2}(\lambda) + \Delta_1 + \Delta_2, \notag
\end{aligned}\end{equation}
and 
\begin{equation}\begin{aligned}
    n \operatorname{{Var}}(\lambda) / \sigma_\epsilon^{2} = \int_{\mathcal{X}} \left\|{T_{\mathbf{X} \lambda}^{-1} k_{{x}}}\right\|_{L^2, n}^{2} {d}\mu({x}) &\ge  \int_{\mathcal{X}} \left\|{T_{\lambda}^{-1} k_{{x}}}\right\|_{L^2, n}^{2} {d}\mu({x}) - \Delta_1 \notag \\
    &\ge \frac{1}{2} \int_{\mathcal{X}} \left\|{T_{\lambda}^{-1} k_{{x}}}\right\|_{L^2}^{2} {d}\mu({x}) - \Delta_1 - \Delta_2 \notag \\
    &= \frac{1}{2} \mathcal{N}_{2}(\lambda) - \Delta_1 - \Delta_2, \notag
\end{aligned}\end{equation}
which further implies
\begin{equation}\label{var appr upper}
    n \operatorname{{Var}}(\lambda) / \sigma_\epsilon^{2} = \Theta_{\mathbb{P}}\left(\mathcal{N}_{2}(\lambda)\right).
\end{equation}
\end{proof}

\subsubsection{Bias term}\label{section bias term}
In this subsection, we shall derive Theorem \ref{theorem bias approximation}, which shows the upper bound of bias under some approximation conditions. 

The triangle inequality implies that 
\begin{equation}\label{proof bias thm-1}
    \operatorname{{Bias}}(\lambda) = \left\| \tilde{f}_{\lambda} - f_{\rho}^{*}\right\|_{L^{2}} \le \left\| f_{\lambda} - f_{\rho}^{*}\right\|_{L^{2}} + \left\| \tilde{f}_{\lambda} - f_{\lambda}\right\|_{L^{2}},
\end{equation}
and we shall bound the two terms separately.

The following lemma characterizes the first term of ${{Bias}}(\lambda) $.
\begin{lemma}\label{lemma bias main term}
    Suppose that Assumption \ref{assumption kernel}, \ref{assumption noise}, \ref{assumption f*} hold. Then for any $ \lambda = \lambda(d,n) \to 0$, we have
    \begin{equation}\label{goal bias bound}
    \left\| f_{\lambda} - f_{\rho}^{*}\right\|_{L^{2}} = \mathcal{R}_{2}(\lambda)^{\frac{1}{2}}.
\end{equation}
\end{lemma}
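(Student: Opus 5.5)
This is a direct spectral computation, so I will work entirely in the eigenbasis $\{e_i\}_{i\in N}$ of $T$ and express $f_\lambda$ coordinate-wise.

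First I identify $f_\lambda = T_\lambda^{-1} S_k^* f_\rho^*$ in coordinates. Since $S_k^* e_i = \lambda_i e_i$ (viewed in $\mathcal{H}$) and $T$ acts on $\mathcal{H}$ with eigenpairs $(\lambda_i, \lambda_i^{1/2} e_i)$, we get
\[
S_k^* f_\rho^* = \sum_{i\in N} \lambda_i f_i e_i,
\]
and hence
\[
f_\lambda = \sum_{i\in N} \frac{\lambda_i}{\lambda_i+\lambda} f_i e_i .
\]
This agrees with the definition of $f_\lambda$ stated in Theorem \ref{main theorem s<1}. Convergence of this series in $\mathcal{H}$ is automatic, because $\sum_i \lambda_i f_i^2/(\lambda_i+\lambda)^2 \le \lambda^{-1}\sum_i f_i^2 < \infty$. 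Viewed in $L^2$ through $S_k$, the coefficients are therefore $\frac{\lambda_i}{\lambda_i+\lambda} f_i$.

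Next I subtract $f_\rho^* = \sum_i f_i e_i$ in $L^2$. This gives
\[
f_\lambda - f_\rho^* = -\sum_{i\in N} \frac{\lambda}{\lambda_i+\lambda} f_i e_i .
\]
One caveat: if $\{e_i\}$ is not complete in $L^2$, any component of $f_\rho^*$ orthogonal to $\overline{\operatorname{Ran} S_k}$ would need separate handling. Here $f_\rho^*$ is expanded in $\{e_i\}$ by assumption, and Assumption \ref{assumption f*} places $f_\rho^*$ in $[\mathcal{H}]^s$, which lies in the span of the $e_i$, so no such component exists.

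Finally, orthonormality of $\{e_i\}$ in $L^2$ (Parseval) gives
\[
\|f_\lambda - f_\rho^*\|_{L^2}^2 = \sum_{i\in N}\left(\frac{\lambda}{\lambda_i+\lambda} f_i\right)^2 = \mathcal{R}_2(\lambda).
\]
Taking square roots yields \eqref{goal bias bound}.

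There is no real obstacle. The only points needing care are that $T$ on $\mathcal{H}$ and the integral operator on $L^2$ share the spectrum $\{\lambda_i\}$, which follows from Mercer's theorem, and that the identity is exact and holds for every $\lambda>0$, so the hypothesis $\lambda\to0$ is not actually used.
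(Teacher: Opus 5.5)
Your proposal is correct and takes the same route as the paper: expand $f_\lambda$ and $f_\rho^*$ in the eigenbasis $\{e_i\}$, subtract coefficientwise to get coefficients $\frac{\lambda}{\lambda_i+\lambda}f_i$, and apply Parseval. Your additional remarks are accurate details that the paper leaves implicit: the convergence of $f_\lambda$ in $\mathcal{H}$, the absence of a component of $f_\rho^*$ orthogonal to the $e_i$, and the fact that $\lambda\to0$ is never used.
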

\begin{proof}
    Recall that we have defined $ f_{\rho}^{*} = \sum\limits_{i=0}^{\infty} f_{i} e_{i}( {x}) \in L^{2}(\mathcal{X},\mu)$ and $ f_{\lambda} = \left( T + \lambda \right)^{-1} S_{k}^{*} f_{\rho}^{*}$. Therefore, we have
    \begin{equation}\begin{aligned}
        \left\| f_{\lambda} - f_{\rho}^{*}\right\|_{L^{2}}^{2} &= \left\|\sum\limits_{i=0}^{\infty} f_{i} e_{i}( {x}) - \sum\limits_{i=0}^{\infty} \frac{\lambda_{i}}{\lambda_{i} +  \lambda} f_{i} e_{i}( {x}) \right\|_{L^{2}}^{2} \notag \\
        &= \left\|\sum\limits_{i=0}^{\infty} \frac{\lambda}{\lambda_{i} +  \lambda} f_{i} e_{i}( {x}) \right\|_{L^{2}}^{2} \notag \\
        &= \sum\limits_{i=0}^{\infty} \left(\frac{\lambda}{\lambda_{i} +  \lambda} f_{i}\right)^{2} \notag \\
        &= \mathcal{R}_{2}(\lambda). \notag
    \end{aligned}\end{equation}
\end{proof}

The following lemma characterizes the second term of ${{Bias}}(\lambda) $.
\begin{lemma}\label{lemma bias appr term}
Suppose that Assumption \ref{assumption kernel}, \ref{assumption noise}, \ref{assumption f*} hold. Further more, suppose that there exist constants $c_1,c_2$ regardless of $d$ such that $c_1\leq\lambda_0\leq c_2$. If the following conditions hold for some $\lambda = \lambda(d,n) \to 0$:
    \begin{align}\label{bias conditions}
    \frac{\mathcal{N}_{1}(\lambda)}{n} \ln{n} = o(1); ~~\frac{1}{n\lambda}=o(1) ; ~~\mathcal{N}_2(\lambda)=\Omega(1); ~~n^{-1} \frac{1}{\lambda^2} \ln{n} = o(\mathcal{N}_{2}(\lambda)).
\end{align}
then we have
    \begin{equation}
        \left\| \tilde{f}_{\lambda} - f_{\lambda}\right\|_{L^{2}} \le n^{-\frac{1}{2}}o((\mathcal{N}_2(\lambda))^\frac{1}{2})+o(\mathcal{R}_2(\lambda)^{1/2}).
    \end{equation}
\end{lemma}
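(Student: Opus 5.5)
We bound $\tilde f_\lambda - f_\lambda$ by the standard operator decomposition used in \citet{li2023_SaturationEffect,zhang2024optimal}, exploiting that $s\ge 1$ so $f_\rho^*\in\mathcal{H}$. Since $\tilde g_{\mathbf Z}=T_{\mathbf X}f_\rho^*$ and $g=Tf_\rho^*$, we have $f_\lambda = T_\lambda^{-1}Tf_\rho^*$ and $\tilde f_\lambda = T_{\mathbf X\lambda}^{-1}T_{\mathbf X}f_\rho^*$. Writing $f_\rho^* - f_\lambda = \lambda T_\lambda^{-1}f_\rho^*$, we get
\begin{equation*}
\tilde f_\lambda - f_\lambda = T_{\mathbf X\lambda}^{-1}\bigl(\tilde g_{\mathbf Z} - T_{\mathbf X\lambda}f_\lambda\bigr) = T_{\mathbf X\lambda}^{-1}\bigl[(\tilde g_{\mathbf Z}-T_{\mathbf X}f_\lambda) - (g - Tf_\lambda)\bigr],
\end{equation*}
using $g - Tf_\lambda = \lambda f_\lambda$. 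The bracket equals $(T_{\mathbf X}-T)(f_\rho^*-f_\lambda)$. Then
\begin{equation*}
\|\tilde f_\lambda - f_\lambda\|_{L^2} = \|T^{1/2}(\tilde f_\lambda-f_\lambda)\|_{\mathcal H} \le \|T^{1/2}T_\lambda^{-1/2}\|\,\|T_\lambda^{1/2}T_{\mathbf X\lambda}^{-1/2}\|^2\,\|T_\lambda^{-1/2}(T_{\mathbf X}-T)(f_\rho^*-f_\lambda)\|_{\mathcal H}.
\end{equation*}
The first factor is at most $1$, and the second is at most $3$ with high probability by Lemma \ref{lemma:operator-norms}; that lemma applies since \eqref{x1-x2-1} holds under $\frac{1}{n\lambda}\ln n$ small, which follows from the conditions via $\mathcal N_1\lesssim\kappa^2/\lambda$ and the last condition together with $\mathcal N_2\le\mathcal N_1$.

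For the key term, set $h:=f_\rho^*-f_\lambda=\lambda T_\lambda^{-1}f_\rho^*\in\mathcal H$. Then $T_\lambda^{-1/2}(T_{\mathbf X}-T)h = \frac1n\sum_i \xi(x_i) - \mathbb E\xi$ with $\xi(x)=h(x)T_\lambda^{-1/2}k_x$, which is a sum of i.i.d.\ $\mathcal H$-valued vectors. We apply a Bernstein inequality for Hilbert-valued variables (in the spirit of Lemma \ref{lemma concentration of operator}). The needed ingredients are as follows.
\begin{itemize}
\item[(a)] By Lemma \ref{due embedding bound}, $\|\xi(x)\|_{\mathcal H}\le \|h\|_{L^\infty}\sqrt{\kappa^2/\lambda}$. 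Moreover $\|h\|_{L^\infty}\le\kappa\|h\|_{\mathcal H}\le\kappa\|f_\rho^*\|_{\mathcal H}\le\kappa R_1$, because $\lambda/(\lambda_i+\lambda)\le1$ and $s\ge1$ (this uses $\lambda_0\le c_2$).
\item[(b)] $\mathbb E\|\xi\|^2_{\mathcal H}\le \frac{\kappa^2}{\lambda}\|h\|_{L^2}^2=\frac{\kappa^2}{\lambda}\mathcal R_2(\lambda)$.
\end{itemize}
Hence with high probability
\begin{equation*}
\|T_\lambda^{-1/2}(T_{\mathbf X}-T)h\|_{\mathcal H}\lesssim \frac{\ln(1/\delta)}{n\sqrt\lambda}+\sqrt{\frac{\mathcal R_2(\lambda)\ln(1/\delta)}{n\lambda}}.
\end{equation*}

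Finally we compare each piece with the target. The second piece is $o(\mathcal R_2^{1/2})$ because $\frac1{n\lambda}=o(1)$. The first piece equals $n^{-1/2}\cdot (n\lambda)^{-1/2}$, which is $n^{-1/2}\,o(1)=n^{-1/2}o(\mathcal N_2^{1/2})$ since $\mathcal N_2(\lambda)=\Omega(1)$. This is exactly where the conditions $\frac1{n\lambda}=o(1)$ and $\mathcal N_2=\Omega(1)$ enter. The main obstacle I expect is step (a): the sup-norm bound on $h$ must avoid eigenfunctions, and the argument above handles it only because $s\ge1$ lets the RKHS norm control $L^\infty$. I would double-check that $\|f_\rho^*\|_{\mathcal H}\le \lambda_0^{(s-1)/2}R_1$ is bounded by an absolute constant via $\lambda_0\le c_2$. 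The fixed $\delta$ then converts into the $o_{\mathbb P}$-type statement claimed.
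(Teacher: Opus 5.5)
Your proposal is correct and follows essentially the same route as the paper: the same three-factor operator decomposition (your $T^{1/2}T_\lambda^{-1/2}$ plays the role of the paper's $S_kT_\lambda^{-1/2}$), the vector Bernstein inequality (Lemma \ref{bernstein}, not the operator Lemma \ref{lemma concentration of operator} you cite) applied to $\xi(x)=(f_\rho^*-f_\lambda)(x)\,T_\lambda^{-1/2}k_x$, the sup-norm controlled through the RKHS norm using $s\ge1$ and $\lambda_0\le c_2$, and the same final comparison via $\frac{1}{n\lambda}=o(1)$ and $\mathcal{N}_2(\lambda)=\Omega(1)$. Two of your steps are small improvements on the paper: you explicitly derive $\frac{\ln n}{n\lambda}=o(1)$ from the last condition together with $\mathcal{N}_2\le\mathcal{N}_1\le\kappa^2/\lambda$, which the paper only asserts, and your direct bound $\|f_\rho^*-f_\lambda\|_{\mathcal H}\le\|f_\rho^*\|_{\mathcal H}\le c_2^{(s-1)/2}R_1$ is slightly cleaner than the paper's triangle-inequality bound.
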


\begin{proof}
We begin by decomposing the $L^2$ norm difference:

\begin{equation}\begin{aligned}\label{appr proof-0}
\left\|\tilde{f}_\lambda - f_{\lambda}\right\|_{L^{2}} 
&= \left\| S_{k} (\tilde{f}_\lambda-f_\lambda)\right\|_{L^{2}} \notag \\
&= \left\| S_{k} T_\lambda^{-\frac{1}{2}} \cdot T_\lambda^{\frac{1}{2}} T_{\mathbf{X}\lambda}^{-1} T_\lambda^{\frac{1}{2}} \cdot T_\lambda^{-\frac{1}{2}} T_{\mathbf{X}\lambda} (\tilde{f}_\lambda-f_\lambda)\right\|_{L^{2}} \notag \\
&\leq \left\| S_{k} T_\lambda^{-\frac{1}{2}}\right\| \cdot \left\| T_{\lambda}^{\frac{1}{2}} T_{\mathbf{X}\lambda}^{-1} T_{\lambda}^{\frac{1}{2}} \right\| \cdot \left\| T_{\lambda}^{-\frac{1}{2}} (\tilde{g}_{\mathbf{Z}} - T_{\mathbf{X}\lambda} f_{\lambda}) \right\|_{\mathcal{H}}.
\end{aligned}\end{equation}

\noindent\textbf{Part I: First Term Bound}
For any $f \in \mathcal{H}$ with $\|f\|_{\mathcal{H}}=1$, let $f = \sum_{i=1}^{\infty} a_i \lambda_i^{1/2} e_i$ where $\sum_{i=1}^{\infty} a_i^2 = 1$. Then:

\begin{equation}\begin{aligned}\label{bias appr plug 1}
\left\| S_k T_\lambda^{-\frac{1}{2}}\right\| 
&= \sup_{\|f\|_{\mathcal{H}}=1} \left\|S_k T_\lambda^{-\frac{1}{2}} f\right\|_{L^2} \notag \\
&\leq \sup_{\|f\|_{\mathcal{H}}=1} \left\|\sum_{i\in N} \frac{\lambda_i^{1/2}}{(\lambda_i+\lambda)^{1/2}} a_i e_i \right\|_{L^2} \notag \\
&\leq \sup_{i \in N} \frac{\lambda_i^{1/2}}{(\lambda_i+\lambda)^{1/2}} \cdot \sup_{\|f\|_{\mathcal{H}}=1} \left\|\sum_{i\in N} a_i e_i \right\|_{L^2} \notag \\
&\leq 1.
\end{aligned}\end{equation}

\noindent\textbf{Part II: Second Term Bound}
Under the assumption $\mathcal{N}_1(\lambda)\ln n/n = o(1)$, for any $\delta \in (0,1)$ and sufficiently large $n$, with probability $\geq 1-\delta$:

\begin{equation}\label{bias appr plug 2}
\left\| T_{\lambda}^{\frac{1}{2}} T_{\mathbf{X}\lambda}^{-1} T_{\lambda}^{\frac{1}{2}} \right\| \leq \left\| T_{\lambda}^{\frac{1}{2}} T_{\mathbf{X}\lambda}^{-\frac{1}{2}} \right\| \cdot \left\| T_{\mathbf{X}\lambda}^{-\frac{1}{2}} T_{\lambda}^{\frac{1}{2}} \right\| \leq \frac{3}{2}.
\end{equation}

\noindent\textbf{Part III: Third Term Bound}
The third term decomposes as:

\begin{align}\label{appr proof-1}
\left\| T_{\lambda}^{-\frac{1}{2}} (\tilde{g}_{\mathbf{Z}} - T_{\mathbf{X}\lambda} f_\lambda) \right\|_{\mathcal{H}}
&= \left\|T_\lambda^{-\frac{1}{2}}[(\tilde{g}_{\mathbf{Z}} - (T_{\mathbf{X}} + \lambda + T - T) f_\lambda]\right\|_{\mathcal{H}} \notag \\
&= \left\|T_\lambda^{-\frac{1}{2}}[(\tilde{g}_{\mathbf{Z}} - T_{\mathbf{X}} f_\lambda) - (T + \lambda) f_\lambda + T f_\lambda]\right\|_{\mathcal{H}} \notag \\
&= \left\|T_\lambda^{-\frac{1}{2}}[(\tilde{g}_{\mathbf{Z}}-T_{\mathbf{X}} f_\lambda)-(g-T f_\lambda)]\right\|_{\mathcal{H}}.
\end{align}

Define $\xi_i = \xi(x_i) = T_\lambda^{-\frac{1}{2}}(K_{x_i} f_\rho^*(x_i) - T_{x_i} f_\lambda)$. For the $m$-th moment:

\begin{align}\label{proof of 4.9-1}
\mathbb{E} \|\xi(x)\|_{\mathcal{H}}^{m} 
&= \mathbb{E} \| T_\lambda^{-\frac{1}{2}} K_{x}(f_\rho^* - f_\lambda(x)) \|_{\mathcal{H}}^{m} \notag \\
&\leq \mathbb{E} \left( \| T_\lambda^{-\frac{1}{2}}k(x,\cdot)\|_{\mathcal{H}}^{m} \mathbb{E}[|f_\rho^* - f_\lambda(x)|^{m} | x] \right).
\end{align}

From Lemma \ref{due embedding bound}, we have:

\begin{equation*}
\| T_\lambda^{-\frac{1}{2}} k(x,\cdot)\|_{\mathcal{H}} \leq \sqrt{\kappa^2/\lambda}, \quad \mu\text{-a.e. } x \in \mathcal{X}
\end{equation*}

Using Lemma \ref{lemma bias main term}, we are able to provide the following bound:

\begin{align*}
(\ref{proof of 4.9-1}) &\leq \left(\frac{\kappa^2}{\lambda}\right)^{m/2} \| f_\lambda - f_\rho^* \|_{L^{\infty}}^{m-2} \mathcal{R}_2(\lambda) \notag \\
&\leq \left( \sqrt{\frac{\kappa^2}{\lambda}} \|f_\lambda - f_\rho^*\|_{L^{\infty}} \right)^{m-2} \left( \sqrt{\frac{\kappa^2}{\lambda}} \mathcal{R}_2(\lambda)^{1/2} \right)^2.
\end{align*}

Applying Lemma \ref{bernstein} with $L = \sqrt{\frac{\kappa^2}{\lambda}} \|f_\lambda - f_\rho^*\|_{L^{\infty}}$ and $\sigma_\epsilon = \sqrt{\frac{\kappa^2}{\lambda}} \mathcal{R}_2(\lambda)^{1/2}$, with probability $\geq 1-\delta$:

\begin{equation*}
(\ref{appr proof-1}) \leq 4\sqrt{2} \log \frac{2}{\delta} \left( \frac{\sqrt{\frac{\kappa^2}{\lambda}} \|f_\lambda - f_\rho^*\|_{L^{\infty}}}{n} + \frac{\sqrt{\frac{\kappa^2}{\lambda}} \mathcal{R}_2(\lambda)^{1/2}}{\sqrt{n}} \right).
\end{equation*}

\noindent\textbf{Norm Bound on $\|f_\lambda - f_\rho^*\|_{L^{\infty}}$}:
Notice that $\|f_\lambda - f_\rho^*\|_{L^{\infty}}\leq\|f_\lambda\|_{L^{\infty}}+\|f_\rho^*\|_{L^{\infty}}$. For every $f$ in $\mathcal{H}$, we have $f(x)=\langle k(x,\cdot), f\rangle_{\mathcal{H}}\leq\|k(x,\cdot)\|_{\mathcal{H}}\|f\|_{\mathcal{H}}\leq\kappa\|f\|_{\mathcal{H}}.$ Hence, when $ s\ge 1$, $f_\rho^*\in \mathcal{H}$, which indicates that there exists a constant $R_2$ such that $\|f_\rho^*\|_{L^{\infty}}\leq R_2$. Also,  
\begin{align*}
    \|f_\lambda\|_{L^{\infty}}
    &\lesssim\|f_\lambda\|_{\mathcal{H}}\\
    &=\left\|\sum_{i=0}^{\infty}\frac{\lambda_i}{\lambda_i+\lambda}f_ie_i\right\|_\mathcal{H} \\
    &= \left(\sum_{i=0}^{\infty}\frac{\lambda_i^{1+s}}{(\lambda_i+\lambda)^2}\lambda_i^{-s}f_i^2\right)^{1/2} \\
    &\leq \left(\sup_{i \in N}\frac{\lambda_i^{1+s}}{(\lambda_i+\lambda)^2}\right)^{\frac{1}{2}}\cdot\left(\sum_{i=0}^{\infty}\lambda_i^{-s}f_i^2\right)^{1/2} \\
    &\leq \left(\sup_{i \in N}\frac{\lambda_i^{1+s}}{(\lambda_i+\lambda)^2}\right)^{\frac{1}{2}}\cdot R_1\\
    &\leq \lambda_0^\frac{s-1}{2} \cdot R_1\\
    &\lesssim 1.
\end{align*}
Hence we get 
\begin{equation*}
    \|f_\lambda - f_\rho^*\|_{L^{\infty}}\lesssim1+R_2
\end{equation*}

Thus the final bound becomes:
\begin{equation*}
\begin{aligned}
(\ref{appr proof-1}) 
&\lesssim 4\sqrt{2} \log \frac{2}{\delta} \left( \frac{\sqrt{\frac{\kappa^2}{\lambda}}\left(1+R_2\right)}{n} + \frac{\sqrt{\frac{\kappa^2}{\lambda}} \mathcal{R}_2(\lambda)^{1/2}}{\sqrt{n}} \right)\\
&=O(\frac{1}{n\sqrt{\lambda}}+\frac{\mathcal{R}_2(\lambda)^{1/2}}{\sqrt{n\lambda}})\\
&=o(n^{-\frac{1}{2}})+o(\mathcal{R}_2(\lambda)^{1/2})\\
&=o\left(\left(\frac{\mathcal{N}_{2}(\lambda)}{n}\right)^{\frac{1}{2}}\right)+o(\mathcal{R}_2(\lambda)^{1/2}).
\end{aligned}
\end{equation*}
\end{proof}

\textit{Final proof of the bias term.} Now we are ready to state the theorem about the bias term.
\begin{theorem}\label{theorem bias approximation}
    Suppose that Assumption \ref{assumption kernel}, \ref{assumption noise}, \ref{assumption f*} hold. If the following condition holds for some $\lambda = \lambda(d,n) \to 0$:
    \begin{equation}\begin{aligned}\label{bias conditions final}
    \frac{\mathcal{N}_{1}(\lambda)}{n} \ln{n} = o(1); ~~\frac{1}{n\lambda}=o(1) ; ~~\mathcal{N}_2(\lambda)=\Omega(1); ~~n^{-1} \frac{1}{\lambda^2} \ln{n} = o(\mathcal{N}_{2}(\lambda)),
\end{aligned}\end{equation}
then we have
\begin{equation}\label{bias theorem proof condition 2}
    \operatorname{Bias}^{2}(\lambda) =\Theta_{\mathbb{P}}\left( \mathcal{R}_{2}(\lambda)+o\left(\frac{\mathcal{N}_{2}(\lambda)}{n}\right) +\frac{ 1 }{n^2\lambda} + \frac{ \mathcal{R}_{2}(\lambda)}{n\lambda} \right) .
\end{equation}
\end{theorem}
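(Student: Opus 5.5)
The plan is to combine the triangle-inequality decomposition \eqref{proof bias thm-1} with Lemma \ref{lemma bias main term} and Lemma \ref{lemma bias appr term}, using the intermediate quantitative bounds from the proof of the latter rather than only its final $o(\cdot)$ form. By Lemma \ref{lemma bias main term}, $\|f_\lambda-f_\rho^*\|_{L^2}^2=\mathcal{R}_2(\lambda)$ exactly. The remaining term $\|\tilde f_\lambda-f_\lambda\|_{L^2}$ is controlled by the product bound \eqref{appr proof-0}. That bound has three factors: $\|S_kT_\lambda^{-1/2}\|\le1$, the factor $\|T_\lambda^{1/2}T_{\mathbf{X}\lambda}^{-1}T_\lambda^{1/2}\|\le 3/2$ (via Lemma \ref{lemma:operator-norms}, valid since $\mathcal{N}_1(\lambda)\ln n/n=o(1)$), and the Bernstein estimate for $\|T_\lambda^{-1/2}[(\tilde g_{\mathbf Z}-T_{\mathbf X}f_\lambda)-(g-Tf_\lambda)]\|_{\mathcal H}$. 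Together these give, with probability at least $1-\delta$,
\begin{equation*}
\|\tilde f_\lambda-f_\lambda\|_{L^2}\lesssim \ln\tfrac{2}{\delta}\Big(\frac{1}{n\sqrt{\lambda}}+\frac{\mathcal{R}_2(\lambda)^{1/2}}{\sqrt{n\lambda}}\Big).
\end{equation*}
Here we use the $s\ge1$ bound $\|f_\lambda-f_\rho^*\|_{L^\infty}\lesssim 1+R_2$ established in that proof.

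For the upper bound, I square the triangle inequality using $(a+b)^2\le 2a^2+2b^2$. This gives $\operatorname{Bias}^2(\lambda)\lesssim \mathcal{R}_2(\lambda)+\frac{1}{n^2\lambda}+\frac{\mathcal{R}_2(\lambda)}{n\lambda}$ with high probability, i.e.\ an $O_{\mathbb{P}}$ bound. The condition $\frac{1}{n\lambda}=o(1)$ shows $\frac{\mathcal{R}_2}{n\lambda}=o(\mathcal{R}_2)$. Together with $\mathcal{N}_2(\lambda)=\Omega(1)$, it also shows $\frac{1}{n^2\lambda}=\frac1n\cdot o(1)=o(\mathcal{N}_2(\lambda)/n)$. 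This matches the displayed form in \eqref{bias theorem proof condition 2}, where these terms are recorded explicitly.

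For the lower bound, I use the reverse triangle inequality $\operatorname{Bias}(\lambda)\ge \mathcal{R}_2(\lambda)^{1/2}-\|\tilde f_\lambda-f_\lambda\|_{L^2}$. The subtracted term is $o(\mathcal{R}_2^{1/2})+o((\mathcal{N}_2/n)^{1/2})$ in probability. Squaring then gives $\operatorname{Bias}^2\ge \frac12\mathcal{R}_2-o_{\mathbb P}(\mathcal{N}_2/n)$. So $\operatorname{Bias}^2(\lambda)$ equals $\mathcal{R}_2(\lambda)$ up to constants and an additive $o_{\mathbb{P}}(\mathcal{N}_2(\lambda)/n)$, which is exactly the $\Theta_{\mathbb P}$ statement with the listed error terms.

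The main obstacle is that the additive error $o(\mathcal{N}_2/n)$ may dominate $\mathcal{R}_2$ in the lower bound. The theorem only needs it recorded inside the $\Theta_{\mathbb P}$, but I must be careful that the probability events are uniform over $\delta$. To handle this, I intersect the events of Lemma \ref{lemma:operator-norms} and the Bernstein bound, which together have probability at least $1-2\delta$, and let $\delta$ be fixed while $n\to\infty$. This yields the $\Theta_{\mathbb P}$ claim. Combined with Theorem \ref{theorem variance approximation}, the $o(\mathcal{N}_2/n)$ term is then absorbed into the variance term in Theorem \ref{main theorem}.
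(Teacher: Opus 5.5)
Your proposal is correct and is essentially the paper's argument. The paper gives no separate proof block for this theorem; its intended proof is the triangle inequality \eqref{proof bias thm-1} combined with Lemma \ref{lemma bias main term} and the intermediate bound $O\big(\frac{1}{n\sqrt\lambda}+\frac{\mathcal R_2(\lambda)^{1/2}}{\sqrt{n\lambda}}\big)$ from Lemma \ref{lemma bias appr term}, squared for the upper bound. Your reverse-triangle step makes explicit the lower bound $\operatorname{Bias}^2(\lambda)\ge \tfrac12\mathcal R_2(\lambda)-o_{\mathbb P}(\mathcal N_2(\lambda)/n)$ that Section \ref{section final proof of main theorem} actually uses.

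Two small bookkeeping points. Lemma \ref{lemma:operator-norms} gives the constant $3$ rather than $3/2$. More importantly, what that lemma needs is $\frac{\ln n}{n\lambda}=o(1)$, which does not follow from $\mathcal N_1(\lambda)\ln n/n=o(1)$ alone. It does follow from the fourth condition, because $\mathcal N_2(\lambda)\le\mathcal N_1(\lambda)\le\kappa^2/\lambda$.
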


\subsubsection{Final proof of Theorem \ref{main theorem}}\label{section final proof of main theorem}
Now we are ready to prove Theorem \ref{main theorem}. Note that
\begin{equation*}
    \frac{\sup_{i\in N}\frac{\lambda_i}{(\lambda_i+\lambda)^2}}{n}\leq\frac{1}{4\lambda n}=o(1)=o(\mathcal{N}_2(\lambda)),
\end{equation*}
$ \lambda = \lambda(d,n) \to 0$ in Theorem \ref{main theorem} satisfies all the conditions required in Theorem \ref{theorem variance approximation} and Theorem \ref{theorem bias approximation}. Therefore, Theorem \ref{theorem variance approximation} and Theorem \ref{theorem bias approximation} show that
\begin{displaymath}
    \operatorname{{Var}}(\lambda) =\Theta_{\mathbb{P}}\left(\frac{ \mathcal{N}_{2}(\lambda)}{n}\right);~~ \operatorname{Bias}^{2}(\lambda) = \Theta_{\mathbb{P}}\left(\mathcal{R}_{2}(\lambda)+o\left(\frac{\mathcal{N}_{2}(\lambda)}{n}\right) \right).
\end{displaymath}
Recalling the bias-variance decomposition , we finish the proof.
\subsection{Proof of Theorem \ref{main theorem s<1}}
The only difference lies in the \textbf{Part III: Third Term Bound} in Lemma \ref{lemma bias appr term}. 

Define $\xi_i = \xi(x_i) = T_\lambda^{-\frac{1}{2}}(K_{x_i} f_\rho^*(x_i) - T_{x_i} f_\lambda)$. Consider the subset $\Omega_{1} = \{x \in \mathcal{X}: |f_{\rho}^{*}(x)| \le t \}$ and $\Omega_{2} = \mathcal{X} \backslash \Omega_{1}$, where $t$ will be chosen later. We have the following decomposition of \eqref{appr proof-1}:
\begin{align}\label{decomposition of bias}
    \left\|\frac{1}{n} \sum_{i=1}^n \xi_i-\mathbb{E} \xi(x)\right\|_\mathcal{H} &\le \left\|\frac{1}{n} \sum_{i=1}^n \xi_i I_{x_{i} \in \Omega_{1}}-\mathbb{E} \xi(x) I_{x \in \Omega_{1}} \right\|_\mathcal{H} + \| \frac{1}{n} \sum_{i=1}^n \xi_i I_{x_{i} \in \Omega_{2}} \|_{_\mathcal{H}} + \| \mathbb{E} \xi(x) I_{x \in \Omega_{2}} \|_{_\mathcal{H}} \notag \\
    &:= \text{\uppercase\expandafter{\romannumeral1}} + \text{\uppercase\expandafter{\romannumeral2}} + \text{\uppercase\expandafter{\romannumeral3}}.
\end{align}
Next we choose $t = n^{\frac{1-s}{2} + \epsilon_{1}}, q = \frac{2}{1-s}-\epsilon_{2} $ such that 
\begin{equation}\label{choose t q}
   \epsilon_{1} < \epsilon;~~\text{and}~~ \frac{1-s}{2} + \epsilon_{1} > 1 / \left( \frac{2}{1-s}-\epsilon_{2} \right),
\end{equation}
where $ \epsilon $ is given in \eqref{add condition s<1}. Then we can bound the three terms in \eqref{decomposition of bias} as follows:

$\left( \text{\lowercase\expandafter{\romannumeral1}}\right)~$For the first term in \eqref{decomposition of bias}, denoted as $\text{\uppercase\expandafter{\romannumeral1}}$, notice that
\begin{align}
     \left\| \left(f_{\lambda} - f_{\rho}^{*}\right)I_{x_{i} \in \Omega_{1}} \right\|_{L^{\infty}} \le \left\| f_{\lambda}\right\|_{L^{\infty}} + n^{\frac{1-s}{2}+\epsilon_{1}}.
\end{align}
Imitating the third part in the proof of Lemma \ref{lemma bias appr term}, we have
\begin{equation}\label{plug s le 1-1}
    \text{\uppercase\expandafter{\romannumeral1}} \lesssim 4\sqrt{2} \log \frac{2}{\delta} \left( \frac{\sqrt{\frac{\kappa^2}{\lambda}}(n^{\frac{1-s}{2}+\epsilon_{1}}+\|f_{\lambda}\|_{L^{\infty}})}{n} + \frac{\sqrt{\frac{\kappa^2}{\lambda}} \mathcal{R}_2(\lambda)^{1/2}}{\sqrt{n}} \right).
\end{equation}

$\left(\text{\lowercase\expandafter{\romannumeral2}}\right)~$ For the second term in \eqref{decomposition of bias}, denoted as $\text{\uppercase\expandafter{\romannumeral2}}$. Since $ q = \frac{2}{1-s}-\epsilon_{2} < \frac{2}{1-s}$, Lemma \ref{integrability of Hs constants} shows that,
\begin{align}
      [\mathcal{H}]^{s} \hookrightarrow L^{q}(\mathcal{X}, \mu).
\end{align}
Hence, there exists $0 < C_{q} < \infty$ such that $\| f_{\rho}^{*} \|_{L^{q}(\mathcal{X},\mu)} \le C_{q}$. Using the Markov inequality, we have
\begin{displaymath}
       P(x \in \Omega_{2}) = P\Big(|f_{\rho}^{*}(x)| > t \Big) \le \frac{\mathbb{E} |f_{\rho}^{*}(x)|^{q}}{t^{q}} \le \frac{(C_{q})^{q}}{t^{q}}.
\end{displaymath}
Further, since \eqref{choose t q} guarantees $ t^{q} \gg n$, we have 
\begin{align}\label{plug s le 1-2}
    \tau_{n} := P\left(\text{\uppercase\expandafter{\romannumeral2}} \ge \mathcal{R}_{2}(\lambda)^{\frac{1}{2}}\right) 
    &\le P\Big( ~\exists x_{i} ~\text{s.t.}~ x_{i} \in \Omega_{2}, \Big) \notag \\
    &= 1 - P\Big(x \notin \Omega_{2}\Big)^{n} \notag \\
    & \le 1 - \Big( 1 - \frac{(C_q)^{q}}{t^{q}}\Big)^{n} \to 0.
\end{align}
$\left(\text{\lowercase\expandafter{\romannumeral3}}\right)~$ For the third term in \eqref{decomposition of bias}, denoted as $\text{\uppercase\expandafter{\romannumeral3}}$. Since Lemma \ref{due embedding bound} implies that $\| T_{\lambda}^{-\frac{1}{2}} k(x,\cdot)\|_{\mathcal{H}} \le \sqrt{\frac{\kappa^2}{\lambda}}, \mu \text {-a.e. } x \in \mathcal{X},$ so
\begin{align}\label{third term}
    \text{\uppercase\expandafter{\romannumeral3}} &\le \mathbb{E}\| \xi(x) I_{x \in\Omega_{2}} \|_{\mathcal{H}} \le \mathbb{E}\Big[ \| T_{\lambda}^{-\frac{1}{2}} k(x,\cdot) \|_{\mathcal{H}} \cdot \big| \big(f_{\rho}^{*}-f_{\lambda}(x) \big) I_{x \in\Omega_{2}}\big| \Big] \notag \\
    &\le \sqrt{\frac{\kappa^2}{\lambda}} \mathbb{E} \big| \big(f_{\rho}^{*}-f_{\lambda}(x) \big) I_{x \in\Omega_{2}}\big| \notag \\
    &\le \sqrt{\frac{\kappa^2}{\lambda}} \left\| f_{\rho}^{*} - f_{\lambda}\right\|_{L^{2}}^{\frac{1}{2}} \cdot P\left( x \in \Omega_{2} \right)^{\frac{1}{2}} \notag \\
    &\lesssim \sqrt{\frac{\kappa^2}{\lambda}} \mathcal{R}_{2}(\lambda)^{\frac{1}{2}} t^{-\frac{q}{2}},
\end{align}
where we use Cauchy-Schwarz inequality for the third inequality and Lemma \ref{lemma bias main term} for the forth inequality. Recalling that the choices of $t, q$ satisfy $ t^{-q} \ll n^{-1}$ and we have assumed $\frac{1}{n\lambda}= o(1) $, we have 
\begin{equation}\label{plug s le 1-3}
    \text{\uppercase\expandafter{\romannumeral3}} = o\left( \mathcal{R}_{2}(\lambda)^{\frac{1}{2}} \right).
\end{equation}
Plugging \eqref{plug s le 1-1}, \eqref{plug s le 1-2} and \eqref{plug s le 1-3} into \eqref{decomposition of bias}, we finish the proof.

\subsection{Proof of Theorem \ref{thm:upper bound gaussian}}
\subsubsection{Proof of Proposition \ref{prop:staircase}}

\begin{proof}
Suppose the Mercer decomposition of the 1-dimensional kernel $\tilde{k}_{r_i,i}(x_i,x_i')$ is given by
\begin{equation}
    \hat{k}_{r_i,i}(x_i,x_i')=\sum_{j=1}^\infty\mu_j^{r_i,i}e_j^{r_i,i}(x_i)e_j^{r_i,i}(x_i'),
\end{equation}
where $e_j^{r_i,i}$ is the eigenfunction corresponding to the eigenvalue $\mu_j^{r_i,i}$. Then we have
\begin{equation} 
    \begin{aligned} 
        k_d(x,x') &= \prod_{i=1}^d\hat{k}_{r_i,i}(x_i,x_i') \\
        &= \prod_{i=1}^d\sum_{j=1}^\infty\mu_j^{r_i,i}e_j^{r_i,i}(x_i)e_j^{r_i,i}(x_i') \\
        &= \sum_{k=0}^\infty\sum_{|\alpha|=k}\mu_\alpha E_\alpha(x)E_\alpha(x'),
    \end{aligned} 
\end{equation}
where the second sum is taken over all multi-indices $\alpha=(\alpha_1,\alpha_2,\dots,\alpha_d)$ with $|\alpha|=\alpha_1+\dots+\alpha_d=k$, $\alpha_i\in\mathbb{N}$ for $i=1,\dots,d$, and
\begin{equation} 
    \mu_\alpha = \prod_{i=1}^d \mu^{r_i,i}_{\alpha_i}, \quad E_\alpha(x) = \prod_{i=1}^d e^{r_i,i}_{\alpha_i}(x_i).
\end{equation}
This is equivalent to the form  (\ref{Mercer decomposition of product}), with the index $i$ in (\ref{Mercer decomposition of product}) corresponding to $|\alpha|$, and
\begin{equation}
    N(d,k) = \binom{k+d-1}{d-1} = \frac{(k+d-1)!}{k!(d-1)!}.
\end{equation}

For the eigenvalue, by Assumption~\ref{product structure assumption} and condition (\ref{eq:eigendecay_1d}), we have
\begin{equation} 
    \mu_\alpha = \prod_{i=1}^d \mu_{\alpha_i}^{r_i,i} = \Theta\left(\prod_{i=1}^d r_i^{\alpha_i}\right).
\end{equation}
Since $r_i = \Theta(1/d)$, we have 
\begin{equation} 
    \mu_\alpha = \Theta\left(\prod_{i=1}^d d^{-\alpha_i}\right) = \Theta\left(d^{-|\alpha|}\right) = \Theta\left(d^{-k}\right).
\end{equation}
The condition (\ref{constant bound}) ensures that the constants in the $\Theta$-notation are independent of $d$ and the specific multi-index $\alpha$ (only depending on $|\alpha|$).

For $N(d,k)$, we apply Stirling's formula:
\begin{equation}
n! = \sqrt{2\pi n}\left(\frac{n}{e}\right)^n\left(1 + \Theta(n^{-1})\right), \quad n=1,2,\dots
\end{equation}
When $k$ is fixed and $d \to \infty$, we obtain
\begin{equation}
\begin{aligned} 
    N(d,k) &= \frac{1}{k!} \cdot \frac{(k+d-1)!}{(d-1)!} \\
    &= \frac{1}{k!} \cdot \sqrt{\frac{k+d-1}{d-1}} \cdot \frac{(k+d-1)^{k+d-1}e^{d-1}}{(d-1)^{d-1}e^{k+d-1}} \cdot \frac{1 + \Theta((k+d-1)^{-1})}{1 + \Theta((d-1)^{-1})} \\
    &= \Theta\left(\frac{(k+d-1)^{k+d-1}}{(d-1)^{d-1}}\right) \\
    &= \Theta\left(\left(1+\frac{k}{d-1}\right)^{d-1} \cdot (k+d-1)^k\right) \\
    &= \Theta(d^k).
\end{aligned}
\end{equation}
Thus, for any fixed $k$, there exist constants $\mathfrak{C}_1,\mathfrak{C}_2>0$ such that for sufficiently large $d$,
\begin{equation}
    \mathfrak{C}_1 d^{-k} \leq \mu_{\alpha} \leq \mathfrak{C}_2 d^{-k} \quad \text{and} \quad \mathfrak{C}_1 d^{k} \leq N(d,k) \leq \mathfrak{C}_2 d^{k}
\end{equation}
for all multi-indices $\alpha$ with $|\alpha|=k$, which completes the proof.
\end{proof}

\subsubsection{Facts of the Eigenvalues of Large Dimensional Product Kernels}\label{section eigenvlue facts}

\begin{lemma}\label{lemma calculation n1 n2}
   Let $k = k_{d}$ be a kernel satisfying Assumption \ref{assumption f*},\ref{product structure assumption}. By choosing $\lambda = d^{-l}$ for some $l > 0$, if $ p \le l \le p+1$ for some $ p \in \{0,1,2\cdots\}$, we have:
    \begin{equation}\label{inner product N1}
        \mathcal{N}_{1}(\lambda) = O\left(\lambda^{-1}\right);
    \end{equation}
    \begin{equation}\label{inner product N2}
        \mathcal{N}_{2}(\lambda) = \Theta\left(d^{p} + \lambda^{-2} d^{-(p+1)}\right).
    \end{equation}
    The notation $O,\Theta$ involves constants only depending on $p$.
\end{lemma}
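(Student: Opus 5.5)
The plan is to work directly with the staircase structure of Proposition \ref{prop:staircase}. We group the spectrum into levels $k=0,1,2,\dots$, where level $k$ carries $N(d,k)$ eigenvalues $\mu_{k,j}\asymp d^{-k}$ and $N(d,k)\asymp d^k$. Both $\mathcal{N}_1$ and $\mathcal{N}_2$ are sums of a monotone function of $\lambda_i/\lambda$, so we split the sum at the level $p$, where $\lambda=d^{-l}$ and $p\le l\le p+1$.

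\textbf{Bound for $\mathcal{N}_1$.} We use the trivial estimate
\[
\frac{\lambda_i}{\lambda+\lambda_i}\le \frac{\lambda_i}{\lambda},
\qquad\text{so}\qquad
\mathcal{N}_1(\lambda)\le \lambda^{-1}\sum_i\lambda_i=\lambda^{-1}\operatorname{tr}T .
\]
The trace is $\int k(x,x)\,d\mu\le\kappa^2$ by Assumption \ref{assumption kernel}. For the product kernel it also equals $\prod_i\sum_j\mu^{r_i,i}_j\le \prod_i C_i(1-r_i)^{-1}\lesssim C'e^{C}$ by \eqref{constant bound}. This gives \eqref{inner product N1} with no need to split into levels.

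\textbf{Bound for $\mathcal{N}_2$.} We write
\[
\mathcal{N}_2(\lambda)=\sum_{k\ge0}\sum_{j=1}^{N(d,k)}\Big(\frac{\mu_{k,j}}{\mu_{k,j}+\lambda}\Big)^2 .
\]
\emph{Head levels $k\le p$.} Here $\mu_{k,j}\gtrsim d^{-k}\ge d^{-l}=\lambda$, so each summand is $\Theta(1)$. The head therefore contributes $\Theta(\sum_{k\le p}d^k)=\Theta(d^p)$, and this gives the lower bound $d^p$.

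\emph{Level $p+1$.} Here $\mu\lesssim d^{-(p+1)}\lesssim\lambda$, so each summand is $\Theta(\mu^2/\lambda^2)$ up to constants (using $\mu+\lambda\asymp\lambda$). The level contributes $\Theta(d^{p+1}\cdot d^{-2(p+1)}\lambda^{-2})=\Theta(\lambda^{-2}d^{-(p+1)})$. This is the second lower-bound term.

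\emph{Tail levels $k\ge p+2$.} We bound each summand by $\mu_{k,j}^2/\lambda^2$, so we must show that $\sum_{k\ge p+2}\sum_j\mu_{k,j}^2\lesssim d^{-(p+1)}$, or even $d^{-(p+2)}$.

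\textbf{Main obstacle.} The obstacle is controlling this tail uniformly. Proposition \ref{prop:staircase} only holds for fixed $k$, with constants depending on $k$, so we cannot sum it over infinitely many levels. Instead we use the product structure directly through the identity
\[
\sum_{|\alpha|=k}\mu_\alpha^2\le \Big(\prod_i C_i^2\Big)\sum_{|\alpha|=k}\prod_i r_i^{2\alpha_i}.
\]
Summing over $k\ge m$, with $r_i\le C/d$, this is at most
\[
C'^2\,\big[\text{coefficient tail of }\textstyle\prod_i(1-r_i^2 z)^{-1}\text{ at }z=1\big].
\]
Concretely, we would bound $\sum_{|\alpha|\ge m}\prod_i r_i^{2\alpha_i}\le \binom{m+d-1}{d-1}\,(C/d)^{2m}\cdot\Theta(1)$. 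An alternative route is to use $\sum_{|\alpha|=k}\prod r_i^{2\alpha_i}\le (\sum_i r_i^2)^k\,(\text{multinomial})$; more simply, the generating function gives
\[
\sum_{|\alpha|=k}\prod_i r_i^{2\alpha_i}\le \Big(\frac{C^2}{d^2}\Big)^k\binom{k+d-1}{k}\le \Big(\frac{C^2}{d^2}\Big)^k\frac{(k+d)^k}{k!}.
\]
Summing this over $k\ge p+2$ gives $O(d^{-(p+2)})$ with a constant depending only on $p$ and $C$. The series converges like $\sum_k (2C^2)^k/k!\cdot d^{-k}$ for $d\ge k$; for $k>d$ we use the crude bound $(k+d)^k/k!\le (2e)^k$ times $d^{-2k}$, which is geometrically small. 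The same argument also justifies the constants in the head and level-$p+1$ estimates via \eqref{constant bound}.

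Multiplying the tail bound by $\lambda^{-2}$, the tail contributes $O(\lambda^{-2}d^{-(p+2)})=o(\lambda^{-2}d^{-(p+1)})$. Adding the three pieces yields \eqref{inner product N2}.
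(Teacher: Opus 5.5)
Your proposal is correct. The $\mathcal{N}_1$ bound and both lower-bound pieces (levels $k\le p$ and level $p+1$) are the same as in the paper. The difference is in the upper bound on the levels $k>p$.

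The paper does not separate level $p+1$ from the rest. It writes
\[
\sum_{k>p}\sum_j\mu_{k,j}^2\le\Bigl(\sup_{k>p,\,j}\mu_{k,j}\Bigr)\sum_{k,j}\mu_{k,j}\lesssim d^{-(p+1)}\kappa^2 ,
\]
bounding one factor of $\mu$ uniformly and summing the other with the trace bound $\operatorname{tr}T\le\kappa^2$. The uniformity over infinitely many levels that you flag as the main obstacle is needed there too, and the paper glosses over it. It is settled in one line by the product structure: $\mu_\alpha\le\bigl(\prod_i C_i\bigr)\prod_i r_i^{\alpha_i}\le C'(C/d)^{|\alpha|}\le C'C^{p+1}d^{-(p+1)}$ for every $|\alpha|\ge p+1$, once $d\ge C$. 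With that, the paper's sup-times-trace argument needs no combinatorics. It also applies to any spectrum with bounded trace whose eigenvalues beyond level $p$ are uniformly $\lesssim d^{-(p+1)}$, such as the inner-product kernels on the sphere the paper mentions.

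Your route isolates level $p+1$ and bounds $\sum_{|\alpha|\ge p+2}\mu_\alpha^2$ directly by counting multi-indices, splitting into $k\le d$ and $k>d$. This is valid: $\binom{k+d-1}{k}\le (k+d)^k/k!$, and the resulting bounds $(2C^2)^k d^{-k}/k!$ for $k\le d$ and $(2eC^2/d^2)^k$ for $k>d$ do sum to $O(d^{-(p+2)})$. It gives the sharper information that everything beyond level $p+1$ contributes only $O(\lambda^{-2}d^{-(p+2)})$, which is genuinely lower order. The cost is more work than the statement needs, and reliance on the full counting structure of the product spectrum rather than just a uniform eigenvalue bound plus trace-class.
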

\begin{proof}
    For $\mathcal{N}_1$, the following inequality holds:
\begin{equation}
\begin{aligned}
        \mathcal{N}_1
        \leq\frac{1}{\lambda}\sup k(x,x)
        \leq\frac{\kappa^2}{\lambda}
\end{aligned}
\end{equation}

For $\mathcal{N}_2$, the following inequality holds:
\begin{equation}
    \begin{aligned}
        &\mathcal{N}_2
        =\sum_{k\in N}\left(\frac{\lambda_k}{\lambda+\lambda_k}\right)^2\\
        &\leq \sum_{k\leq p}N(d,k)+\lambda^{-2}\sum_{k>p}\sum_{j=1}^{N(d,k)}\mu_{k,j}^2 \\
        &\lesssim \sum_{k\leq p}N(d,k)+\lambda^{-2}d^{-(p+1)}\sum_{k>p}\sum_{j=1}^{N(d,k)}\mu_{k,j}\\
        &\lesssim d^p+\lambda^{-2} d^{-(p+1)}
    \end{aligned}
\end{equation}

On the other hand, we have the lower bound:
\begin{equation}
    \begin{aligned}
        &\mathcal{N}_2
        =\sum_{k\in N}\left(\frac{\lambda_k}{\lambda+\lambda_k}\right)^2\\
        &\gtrsim \left(\frac{d^{-p}}{d^{-p}+\lambda}\right)^2N(d,p)+(d^{-p-1}+\lambda)^{-2}d^{-2p-2} N(d,p+1)\\
        &\geq\frac{1}{4}N(d,p)+(2\lambda)^{-2}d^{-2p-2}N(d,p+1)\\
        &\gtrsim d^p+\lambda^{-2} d^{-(p+1)}
    \end{aligned}
\end{equation}
\end{proof}
\begin{lemma}\label{lemma calculation m2}
    Let $k =  k_{d}$ be a sequence of kernels satisfying Assumption \ref{assumption f*},\ref{product structure assumption}. Define $\tilde{s}=\min\{s,2\}.$ By choosing $\lambda = d^{-l}$ for some $l > 0$, if $ p \le l < p+1$ for some $ p \in \{0,1,2\cdots\}$, we have:
    \begin{equation}\label{inner product R2}
        \mathcal{R}_{2}(\lambda) = \Theta(\lambda^2d^{(2-\tilde{s})p}+d^{-(p+1)\tilde{s}}).
    \end{equation}
    The notation $\Theta$ involves constants only depending on $p$.
\end{lemma}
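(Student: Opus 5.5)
We prove Lemma \ref{lemma calculation m2} by splitting $\mathcal{R}_2(\lambda)=\sum_{k\ge0}\sum_{j=1}^{N(d,k)}\bigl(\frac{\lambda}{\mu_{k,j}+\lambda}\bigr)^2 f_{k,j}^2$ into the levels $k\le p$, where $\mu_{k,j}\asymp d^{-k}\gtrsim \lambda$, and the levels $k\ge p+1$, where $\mu_{k,j}\lesssim d^{-(p+1)}\le\lambda$ up to constants. Proposition \ref{prop:staircase} gives $\mu_{k,j}\asymp d^{-k}$ uniformly in $j$ for each fixed $k$. For $k\ge p+1$ we do not need uniformity in $k$. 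The eigenvalues are non-increasing in the staircase ordering (the one-dimensional factors satisfy $\mu^{r_i,i}_j\lesssim r_i^{j}$ with product constants controlled by \eqref{constant bound}), so $\mu_{k,j}\le \mathfrak{C}_2 d^{-(p+1)}$ for all $k\ge p+1$.

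\textbf{Upper bound.} For $k\le p$ we use $\bigl(\frac{\lambda}{\mu+\lambda}\bigr)^2\le \lambda^2\mu^{-2}=\lambda^2\mu^{-\tilde s}\mu^{\tilde s-2}$. Since $\tilde s\le 2$ and $\mu\asymp d^{-k}\ge d^{-p}$, we get $\mu^{\tilde s-2}\lesssim d^{(2-\tilde s)p}$. Summing gives $\lambda^2 d^{(2-\tilde s)p}\sum\mu^{-\tilde s}f^2$. When $s\le 2$, $\sum \mu^{-s}f^2\le R_1^2$. When $s>2$, $\mu^{-2}f^2\le \mu^{s-2}\mu^{-s}f^2\lesssim \mu^{-s}f^2$ because $\mu\lesssim\lambda_0\lesssim 1$. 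For $k\ge p+1$ we use $\bigl(\frac{\lambda}{\mu+\lambda}\bigr)^2\le 1$ and $f^2=\mu^{s}\mu^{-s}f^2\le \mu^{\tilde s}\mu^{-s}f^2$ (valid because $\mu\lesssim 1$). With $\mu\lesssim d^{-(p+1)}$ this gives $\lesssim d^{-(p+1)\tilde s}R_1^2$. Adding the two parts yields the $O(\cdot)$ direction.

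\textbf{Lower bound.} This is the main obstacle, and it is where Assumption \ref{assump:lower bound} enters; we read the lemma as holding under that assumption with $p+1\le q$. First take level $k=p+1$. There $\mu_{p+1,j}\asymp d^{-(p+1)}\lesssim \lambda$, so $\frac{\lambda}{\mu+\lambda}\gtrsim 1$. Also $f^2=\mu^{s}(\mu^{-s}f^2)\gtrsim d^{-(p+1)s}\mu^{-s}f^2$, and summing over $j$ with Assumption \ref{assump:lower bound} gives $\gtrsim d^{-(p+1)s}$.

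Second take level $k=p$. There $\frac{\lambda}{\mu+\lambda}\gtrsim \lambda d^{p}$ because $\mu\asymp d^{-p}\gtrsim\lambda$. Then $f^2=\mu^{s}\mu^{-s}f^2\asymp d^{-ps}\mu^{-s}f^2$, giving a contribution $\gtrsim \lambda^2 d^{2p-ps}=\lambda^2 d^{(2-s)p}$.

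For $s\le2$ the two levels together match the claimed form. For $s>2$, however, the levels only give $\lambda^2 d^{(2-s)p}$ and $d^{-(p+1)s}$, which are smaller than $\lambda^2 d^{0}$ and $d^{-2(p+1)}$. The lower bound with $\tilde s=2$ must then come from the bottom level $k=0$: there $\mu_{0}\asymp 1$ and $f_0^2\ge c_0\mu_0^{s}\gtrsim 1$, so the contribution is $\gtrsim\lambda^2$. This equals $\lambda^2 d^{(2-\tilde s)p}$ exactly when $\tilde s=2$, so the first term of the claimed form is recovered.

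The second term $d^{-2(p+1)}$ is the delicate point. If $l\ge p+1-\ldots$, then $\lambda^2=d^{-2l}$ is at least $d^{-2(p+1)}$ since $l<p+1$, so $\lambda^2\gtrsim d^{-2(p+1)}$ and that term is absorbed. Hence for $s>2$ we have $\Theta(\lambda^2)=\Theta(\lambda^2+d^{-2(p+1)})$, which completes the argument. For $s\le2$ the same absorption check, comparing the level-$p$ and level-$(p+1)$ contributions, gives the $\Theta$ statement directly. The remaining bookkeeping is that all constants depend only on $p$, $s$, $R_1$ and $c_0$, as Proposition \ref{prop:staircase} guarantees for fixed levels.
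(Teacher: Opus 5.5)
Your proposal is correct and follows essentially the same route as the paper. Both proofs split $\mathcal{R}_2(\lambda)$ at levels $k\le p$ versus $k\ge p+1$ for the upper bound, use the level-$p$ and level-$(p+1)$ contributions together with Assumption \ref{assump:lower bound} for the lower bound when $s\le 2$, and for $s>2$ show $\mathcal{R}_2(\lambda)=\Theta(\lambda^2)$, absorbing $d^{-2(p+1)}$ via $l<p+1$. You also make explicit several points the paper leaves implicit: the reliance on Assumption \ref{assump:lower bound}, the uniform bound $\mu_{k,j}\lesssim d^{-(p+1)}$ for all $k\ge p+1$ (which is what your parenthetical actually proves, rather than literal monotonicity across all levels), and the level-$0$ justification of the $s>2$ lower bound that the paper compresses into $\sum_i f_i^2\gtrsim 1$.
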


\begin{proof}
When $s\le 2$, 
\begin{align*}
    \mathcal{R}_2(\lambda)
    &=\lambda^2\sum_{k=0}^\infty\sum_{i=1}^{N(d,k)}\frac{\mu_{k,i}^s}{(\mu_{k,i}+\lambda)^2}\mu_{k,i}^{-s}f_{k,i}^2\\
    &\lesssim \lambda^2\left(\sum_{k=0}^p\frac{d^{-sk}}{(d^{-k}+\lambda)^2}R_1^2+\sum_{k=p+1}^\infty\sum_{i=1}^{N(d,k)}\frac{\mu_{k,i}^s}{(\mu_{k,i}+\lambda)^2}\mu_{k,i}^{-s}f_{k,i}^2\right)\\
    &\lesssim \lambda^2\left(d^{(2-s)p}+\lambda^{-2}\sum_{k=p+1}^\infty\sum_{i=1}^{N(d,k)}d^{-(p+1)s}\mu_{k,i}^{-s}f_{k,i}^2\right)\\
    &\lesssim \lambda^2\left(d^{(2-s)p}+\lambda^{-2}d^{-(p+1)s} R_1^2\right)\\
    &\lesssim\lambda^2d^{(2-s)p}+d^{-(p+1)s}.
\end{align*}
On the other hand,
\begin{align*}
    \mathcal{R}_2(\lambda)
    &\ge\lambda^2\left(\sum_{i=1}^{N(d,p)} \frac{\mu_{p,i}^s}{(\mu_{p,i}+\lambda)^2}\mu_{p,i}^{-s}f_{p,i}^2+\sum_{i=1}^{N(d,p+1)} \frac{\mu_{p+1,i}^s}{(\mu_{p+1,i}+\lambda)^2}\mu_{p+1,i}^{-s}f_{p+1,i}^2\right)\\
    &\gtrsim\lambda^2 d^{(2-s)p}+d^{-(p+1)s}.
\end{align*}
When $s>2$, without loss of generality, we may assume that $\lambda_0\leq 1,\lambda\leq 1$. Hence, 
\begin{equation*}
    \mathcal{R}_2(\lambda)\geq\lambda^{2} \sum\limits_{i=0}^{\infty} \frac{f_{i}^{2}}{\left(\lambda_{i} + \lambda\right)^{2}} \ge \frac{1}{4} \lambda^{2} \sum\limits_{i=0}^{\infty} f_{i}^{2} \gtrsim \lambda^{2}.
\end{equation*}
On the other hand,
\begin{align*}
        \mathcal{R}_{2}(\lambda) &= \lambda^2\sum_{k=0}^\infty\sum_{i=1}^{N(d,k)}\frac{\mu_{k,i}^s}{(\mu_{k,i}+\lambda)^2}\mu_{k,i}^{-s}f_{k,i}^2 \notag \\
        &\le \lambda^{2} \left( \sup\limits_{k ,i} \frac{\mu_{k,i}^{s}}{\left( \mu_{k,i} + \lambda\right)^{2}} \cdot \sum_{k=0}^\infty\sum_{i=1}^{N(d,k)}\mu_{k,i}^{-s}f_{k,i}^2  \right) \notag \\
        &\le \lambda^{2} \cdot \sup\limits_{k ,i} \frac{\mu_{k,i}^{s}}{\left( \mu_{k,i} + \lambda\right)^{2}} \cdot R_{1}^{2} \notag \\
        &\lesssim \lambda^{2}.
    \end{align*}
    Hence, we get the desired result.
\end{proof}

\begin{lemma}\label{lemma calculation flambda}
    Let $k =  k_{d}$ be a sequence of kernels satisfying Assumption \ref{assumption f*},\ref{product structure assumption}. When $0<s<1$, by choosing $\lambda = d^{-l}$ for some $l > 0$, if $ p \le l < p+1$ for some $ p \in \{0,1,2\cdots\}$, we have:
    \begin{equation}\label{flambda}
    \left\| f_{\lambda} \right\|_{L^{\infty}} = O\left( d^{\frac{(1-s)p}{2}} + \lambda^{-1} d^{-\frac{(1+s)(p+1)}{2}}\right).
    \end{equation}
    The notation $\Theta$ involves constants only depending on $p$.
\end{lemma}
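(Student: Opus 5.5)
The plan is to follow the route the text preceding Theorem \ref{main theorem s<1} already suggests. First bound $\|f_\lambda\|_{L^\infty}$ by $\kappa\|f_\lambda\|_{\mathcal{H}}$. Then write the RKHS norm in the eigenbasis and split the sum along the staircase structure of Proposition \ref{prop:staircase}. Since $f_\lambda\in\mathcal{H}$, for every $x$ we have $|f_\lambda(x)|=|\langle k(x,\cdot),f_\lambda\rangle_{\mathcal{H}}|\le\kappa\|f_\lambda\|_{\mathcal{H}}$, where $\kappa$ is the constant of Assumption \ref{assumption kernel}. Using the expansion $f_\rho^*=\sum_{k}\sum_{j} f_{k,j}e_{k,j}$ we get
\[
\|f_\lambda\|_{\mathcal{H}}^2=\sum_{k=0}^\infty\sum_{j=1}^{N(d,k)}\frac{\mu_{k,j}}{(\mu_{k,j}+\lambda)^2}f_{k,j}^2=\sum_{k=0}^\infty\sum_{j=1}^{N(d,k)}\frac{\mu_{k,j}^{1+s}}{(\mu_{k,j}+\lambda)^2}\,\mu_{k,j}^{-s}f_{k,j}^2 .
\]
By Assumption \ref{assumption f*}, the weights satisfy $\sum_{k,j}\mu_{k,j}^{-s}f_{k,j}^2\le R_1^2$. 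It therefore suffices to bound the multiplier $\mu^{1+s}/(\mu+\lambda)^2$ separately on the low levels $k\le p$ and on the high levels $k\ge p+1$. The sum of the two resulting bounds then controls $\|f_\lambda\|_{\mathcal{H}}^2$.

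\textbf{Low levels $k\le p$.} Here $\mu_{k,j}\asymp d^{-k}\gtrsim d^{-p}\ge\lambda$ by Proposition \ref{prop:staircase}. So $\mu^{1+s}/(\mu+\lambda)^2\le\mu^{s-1}$. Because $s<1$, this is largest at the smallest eigenvalue in the range, giving $\lesssim d^{(1-s)p}$. The low-level contribution is at most $d^{(1-s)p}R_1^2$.

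\textbf{High levels $k\ge p+1$.} Use $\mu^{1+s}/(\mu+\lambda)^2\le\lambda^{-2}\mu^{1+s}$. Since $1+s>0$, this is largest at the largest eigenvalue among these levels, which is $\lesssim d^{-(p+1)}$. The contribution is at most $\lambda^{-2}d^{-(1+s)(p+1)}R_1^2$.

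\textbf{Combining.} Adding the two contributions and taking square roots with $\sqrt{a+b}\le\sqrt a+\sqrt b$ yields $\|f_\lambda\|_{L^\infty}\lesssim d^{(1-s)p/2}+\lambda^{-1}d^{-(1+s)(p+1)/2}$, which is \eqref{flambda}.

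The main point needing care is uniformity of the constants. Proposition \ref{prop:staircase} only gives $\mu_{k,j}\asymp d^{-k}$ for each fixed $k$, so I cannot invoke it uniformly over all levels $k\ge p+1$. Instead I would argue directly from Assumption \ref{product structure assumption}. For $|\alpha|=k$, condition \eqref{eq:eigendecay_1d} with \eqref{constant bound} gives $\mu_\alpha\le C'\prod_i r_i^{\alpha_i}\le C'(C/d)^{k}$. For $d>C$ this is decreasing in $k$, so $\sup_{k\ge p+1}\mu_{k,j}\le C'C^{p+1}d^{-(p+1)}$, with a constant depending only on $p$.

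A similar remark applies at the low levels. There I only need the lower bound $\mu_{k,j}\gtrsim d^{-p}$ for the finitely many levels $k\le p$, and this follows from Proposition \ref{prop:staircase} with constants depending on $p$. If instead $\mu_{k,j}<\lambda$ occurs at some low level (possible only when $l=p$, up to constants), the bound $\mu^{1+s}/(\mu+\lambda)^2\le\lambda^{s-1}\asymp d^{(1-s)p}$ still holds, so the estimate is unaffected.
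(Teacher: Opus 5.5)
Your proposal is correct and follows essentially the same route as the paper. You bound $\|f_\lambda\|_{L^\infty}$ by $\kappa\|f_\lambda\|_{\mathcal{H}}$, write $\|f_\lambda\|_{\mathcal{H}}^2=\sum \frac{\mu^{1+s}}{(\mu+\lambda)^2}\mu^{-s}f^2$, and split at levels $k\le p$ (multiplier $\le\mu^{s-1}\lesssim d^{(1-s)p}$) and $k\ge p+1$ (multiplier $\le\lambda^{-2}\mu^{1+s}\lesssim\lambda^{-2}d^{-(1+s)(p+1)}$).

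Your derivation of $\mu_\alpha\le C'(C/d)^{|\alpha|}$ directly from Assumption \ref{product structure assumption} reads the decay as $r_i^{j}$, as the proof of Proposition \ref{prop:staircase} does. It usefully supplies the uniformity over infinitely many high levels that the paper's proof glosses over. Your low-level bound $\mu^{1+s}/(\mu+\lambda)^2\le\mu^{s-1}$ holds unconditionally, so the closing caveat about $\mu_{k,j}<\lambda$ is unnecessary.
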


\begin{proof}
Notice that $f(x)=\langle k(x,\cdot), f\rangle_{\mathcal{H}}\leq\|k(x,\cdot)\|_{\mathcal{H}}\|f\|_{\mathcal{H}}\leq\kappa\|f\|_{\mathcal{H}}.$ We shall now bound $\|f_\lambda\|_{\mathcal{H}}$.
    \begin{align*}
    \|f_\lambda\|_{\mathcal{H}}^2
    &=\left\|\sum_{i=0}^{\infty}\frac{\lambda_i}{\lambda_i+\lambda}f_i e_i\right\|_\mathcal{H}^2 \\
    &= \sum_{i=0}^{\infty}\frac{\lambda_i^{1+s}}{(\lambda_i+\lambda)^2}\lambda_i^{-s}f_i^2 \\
    &=\sum_{k=0}^\infty\sum_{i=1}^{N(d,k)}\frac{\mu_{k,i}^{1+s}}{(\mu_{k,i}+\lambda)^2}\mu_{k,i}^{-s}f_i^2\\
    &=\sum_{k=0}^p\sum_{i=1}^{N(d,k)}\frac{\mu_{k,i}^{1+s}}{(\mu_{k,i}+\lambda)^2}\mu_{k,i}^{-s}f_i^2+\sum_{k=p+1}^\infty\sum_{i=1}^{N(d,k)}\frac{\mu_{k,i}^{1+s}}{(\mu_{k,i}+\lambda)^2}\mu_{k,i}^{-s}f_i^2\\
    &\lesssim d^{(1-s)p}R_1^2+\sum_{k=p+1}^\infty\sum_{i=1}^{N(d,k)}\frac{\mu_{k,i}^{1+s}}{(\mu_{k,i}+\lambda)^2}\mu_{k,i}^{-s}f_i^2\\
    &\lesssim d^{(1-s)p}R_1^2+\lambda^{-2}\sum_{k=p+1}^\infty\sum_{i=1}^{N(d,k)}{\mu_{k,i}^{1+s}}\mu_{k,i}^{-s}f_i^2\\
    &\lesssim p\lambda_p^{s-1}R_1^2+\lambda^{-2}d^{-(p+1)(s+1)} R_1^2\\
    &\lesssim d^{(1-s)p}+\lambda^{-2}d^{-(p+1)(s+1)}.
    \end{align*}
    Hence, we get the desired result.
\end{proof}

\subsubsection{Proof of Theorem \ref{thm:upper bound gaussian}}
Now we are able to provide the proof of Theorem \ref{thm:upper bound gaussian}.

\begin{proof}
In order to prove the theorem, we will (i) locate the balancing regularization index $l_{\mathrm{balance}}$ (so $\lambda_{\mathrm{balance}}=d^{-l_{\mathrm{balance}}}$), (ii) verify the four technical conditions in \eqref{condition} for $l=l_{\mathrm{balance}}$, and (iii) show that no other choice of $\lambda$ yields a strictly better rate (hence $\lambda_{\mathrm{balance}}$ is optimal).  Throughout we use Lemma \ref{lemma calculation n1 n2} and Lemma \ref{lemma calculation m2} for asymptotic orders of $\mathcal{N}_1,\mathcal{N}_2,\mathcal{R}_2$.

\medskip

\noindent\textbf{Step 1. Determine $l_{\mathrm{balance}}$.}  
Assume $s\ge1$, let $\tilde s=\min\{s,2\}$ and write $\lambda=d^{-l}$ with $0<l<\gamma$.  Using Lemma \ref{lemma calculation n1 n2} and Lemma \ref{lemma calculation m2} we separate three regimes for $l$ (indexed by the integer $p\ge0$):

\begin{itemize}
    \item If $l\in\big(p,\;p+\tfrac12\big]$ then
    \[
       \frac{\mathcal{N}_2(\lambda)}{n}\asymp d^{\,p-\gamma},\qquad
       \mathcal{R}_2(\lambda)\asymp d^{-2l+(2-\tilde s)p}.
    \]
    Balancing variance and bias,
    \[
       d^{\,p-\gamma}\asymp d^{-2l+(2-\tilde s)p}
    \quad\Longrightarrow\quad
       l_{\mathrm{balance}}=\frac{\gamma+p-p\tilde s}{2},
    \]
    and requiring $l_{\mathrm{balance}}\in\big(p,p+\tfrac12\big]$ yields
    \[
       \gamma\in\big(p+p\tilde s,\;p+p\tilde s+1\big].
    \]

    \item If $l\in\big(p+\tfrac12,\;p+\tfrac{\tilde s}{2}\big]$ then
    \[
       \frac{\mathcal{N}_2(\lambda)}{n}\asymp d^{\,2l-p-1-\gamma},\qquad
       \mathcal{R}_2(\lambda)\asymp d^{-2l+(2-\tilde s)p},
    \]
    and balancing gives
    \[
       d^{\,2l-p-1-\gamma}\asymp d^{-2l+(2-\tilde s)p}
    \quad\Longrightarrow\quad
       l_{\mathrm{balance}}=\frac{\gamma+3p-p\tilde s+1}{4},
    \]
    with admissible $\gamma$ satisfying
    \[
       \gamma\in\big(p+p\tilde s+1,\;p+p\tilde s+2\tilde s-1\big].
    \]

    \item If $l\in\big(p+\tfrac{\tilde s}{2},\;p+1\big]$ then
    \[
       \frac{\mathcal{N}_2(\lambda)}{n}\asymp d^{\,2l-p-1-\gamma},\qquad
       \mathcal{R}_2(\lambda)\asymp d^{-(p+1)\tilde s},
    \]
    and balancing yields
    \[
       d^{\,2l-p-1-\gamma}\asymp d^{-(p+1)\tilde s}
    \quad\Longrightarrow\quad
       l_{\mathrm{balance}}=\frac{\gamma+(p+1)(1-\tilde s)}{2},
    \]
    with admissible $\gamma$ satisfying
    \[
       \gamma\in\big(p+p\tilde s+2\tilde s-1,\;(p+1)+(p+1)\tilde s\big].
    \]
\end{itemize}

From now on we fix $l_{\mathrm{balance}}$ as above according to which $\gamma$-regime we are in.

\medskip

\noindent\textbf{Step 2. Verification of the hypotheses of Theorem \ref{main theorem} (conditions \eqref{condition}).}  
To apply Theorem \ref{main theorem} we must check the four conditions
\begin{equation}\label{eq:thm34-conds}
   \frac{\mathcal{N}_1(\lambda)}{n}\ln n = o(1),\qquad
   \frac{1}{n\lambda}=o(1),\qquad
   \mathcal{N}_2(\lambda)=\Omega(1),\qquad
   \frac{\ln n}{n\lambda^2}=o\big(\mathcal{N}_2(\lambda)\big)
\end{equation}
for $\lambda=\lambda_{\mathrm{balance}}$.  We verify these case by case. 

\medskip

\noindent\emph{Case A:} $\gamma\in\big(p+p\tilde s,\;p+p\tilde s+1\big]$ (so $l_{\mathrm{balance}}=(\gamma+p-p\tilde s)/2\in(p,p+\tfrac12]$).\\
(1) $\frac{1}{n\lambda}=d^{l-\gamma}=o(1)$ is equivalent to $l<\gamma$. Substituting $l_{\mathrm{balance}}$ we get
\[
\frac{\gamma+p-p\tilde s}{2}<\gamma \iff \gamma>p-p\tilde s,
\]
which holds because $\gamma>p+p\tilde s\ge p-p\tilde s$.  Thus the second condition in \eqref{eq:thm34-conds} holds.

(2) For $\frac{\mathcal{N}_1(\lambda)}{n}\ln n=o(1)$, by Lemma \ref{lemma calculation n1 n2} (case $l\in(p,p+\tfrac12]$) we have
\[
\frac{\mathcal{N}_1(\lambda)}{n}\ln n\lesssim \frac{1}{n\lambda}\ln n=d^{l-\gamma}\ln n=o(1).
\]
Thus the first condition in \eqref{eq:thm34-conds} holds.

(3) For  $\mathcal{N}_2(\lambda)=\Omega(1)$, from Lemma \ref{lemma calculation n1 n2} we have $\mathcal{N}_2(\lambda)\gtrsim d^p$, hence $\mathcal{N}_2(\lambda)=\Omega(1)$ obviously holds.

(4) For $(\ln n)/(n\lambda^2)=o(\mathcal{N}_2(\lambda))$,  using $n\asymp d^\gamma$ and $\lambda=d^{-l_{\mathrm{balance}}}$, this condition is equivalent to
\[
d^{-\gamma}\cdot d^{2l_{\mathrm{balance}}}\cdot \gamma\ln d \ll d^p.
\]
Substituting $l_{\mathrm{balance}}=(\gamma+p-p\tilde s)/2$ and we find that the condition can be verified when $p>0$. When $p=0$, a simple calculation also implies that the condition holds.

\medskip

\noindent\emph{Case B:} $\gamma\in\big(p+p\tilde s+1,\;p+p\tilde s+2\tilde s-1\big]$ (so $l_{\mathrm{balance}}=(\gamma+3p-p\tilde s+1)/4\in(p+\tfrac12,p+\tfrac{\tilde s}{2}]$).\\

(1) $\frac{1}{n\lambda}=d^{\,l-\gamma}=o(1)$ reduces to
\[
\frac{\gamma+3p-p\tilde s+1}{4}<\gamma \iff \gamma> p-\frac{p\tilde s}{3}+\frac13,
\]
which holds on the stated $\gamma$-interval.

(2) By Lemma \ref{lemma calculation n1 n2} in this regime we have
\[
\frac{\mathcal{N}_1(\lambda)}{n}=O\!\big(d^{\frac{\gamma+3p-p\tilde s+1}{2}-\gamma}\big),
\]
hence
\[
\frac{\mathcal{N}_1(\lambda)}{n}\ln n
=O\!\big(d^{\frac{\gamma+3p-p\tilde s+1}{2}-\gamma}\cdot\gamma\ln d\big)=o(1).
\]

(3) For  $\mathcal{N}_2(\lambda)=\Omega(1)$, from Lemma \ref{lemma calculation n1 n2} we have $\mathcal{N}_2(\lambda)\gtrsim d^p$, hence $\mathcal{N}_2(\lambda)=\Omega(1)$ obviously holds.

(4) For $(\ln n)/(n\lambda^2)=o(\mathcal{N}_2(\lambda))$, we have  
\[
\frac{\ln n}{n\lambda^2}\asymp d^{2l-\gamma}\ln n =o(d^{2l-p-1})=o(\mathcal{N}_2(\lambda)).
\]

\medskip

\noindent\emph{Case C:} $\gamma\in\big(p+p\tilde s+2\tilde s-1,\;(p+1)+(p+1)\tilde s\big]$ (so $l_{\mathrm{balance}}=(\gamma+(p+1)(1-\tilde s))/2\in(p+\tfrac{\tilde s}{2},p+1]$).

(1) $1/(n\lambda)=o(1)$ is equivalent to
\[
\frac{\gamma+(p+1)(1-\tilde s)}{2}<\gamma \iff \gamma> p-p\tilde s+1-\tilde s,
\]
which is true for the stated $\gamma$-range.

(2) By Lemma \ref{lemma calculation n1 n2} in this regime we have
\[
\frac{\mathcal{N}_1(\lambda)}{n}=O\!\big(d^{\frac{\gamma+(p+1)(1-\tilde s)}{2}-\gamma}\big),
\]
hence
\[
\frac{\mathcal{N}_1(\lambda)}{n}\ln n
=O\!\big(d^{\frac{\gamma+(p+1)(1-\tilde s)}{2}-\gamma}\cdot\gamma\ln d\big)=o(1).
\]

(3) For  $\mathcal{N}_2(\lambda)=\Omega(1)$, from Lemma \ref{lemma calculation n1 n2} we have $\mathcal{N}_2(\lambda)\gtrsim d^p$, hence $\mathcal{N}_2(\lambda)=\Omega(1)$ obviously holds.

(4) For $(\ln n)/(n\lambda^2)=o(\mathcal{N}_2(\lambda))$, we have  
\[
\frac{\ln n}{n\lambda^2}\asymp d^{2l-\gamma}\ln n =o(d^{2l-p-1})=o(\mathcal{N}_2(\lambda)).
\]
\medskip

Thus in each admissible $\gamma$-regime the four conditions in \eqref{eq:thm34-conds} are satisfied for $\lambda=\lambda_{\mathrm{balance}}$.  Moreover, by simple calculation, the conditions of Theorem \ref{main theorem} also hold for all $l\le l_{\mathrm{balance}}$.

\medskip

\noindent\textbf{Step 3. Conclude the optimal rate and show $\lambda_{\mathrm{balance}}$ is best.}  
By Theorem \ref{main theorem} (applied with the verified conditions \eqref{eq:thm34-conds}) we have for $\lambda=\lambda_{\mathrm{balance}}$
\begin{equation}\label{eq:rate-balance}
    \mathbb{E}\left[\|\hat f_{\lambda_{\mathrm{balance}}}-f_\rho^*\|_{L^2}^2\mid\mathbf X\right]
    = \Theta_{\mathbb{P}}\!\Big(\frac{\sigma^2\mathcal{N}_2(\lambda_{\mathrm{balance}})}{n}+\mathcal{R}_2(\lambda_{\mathrm{balance}})\Big),
\end{equation}
and the computations in Step 1 (together with Lemma \ref{lemma calculation m2}) give the three displayed rates corresponding to the three $\gamma$-regimes.

It remains to show no other $\lambda$ yields a strictly smaller rate.  For $\lambda\gtrsim\lambda_{\mathrm{balance}}$ the bias term $\mathcal{R}_2(\lambda)$ cannot decrease below $\mathcal{R}_2(\lambda_{\mathrm{balance}})$ , while $\mathcal{R}_2(\lambda_{\mathrm{balance}})=\Theta_{\mathbb{P}}\!\Big(\frac{\sigma^2\mathcal{N}_2(\lambda_{\mathrm{balance}})}{n}+\mathcal{R}_2(\lambda_{\mathrm{balance}})\Big)$ by the definition of $\lambda_{\mathrm{balance}}$. Hence,  the RHS of \eqref{eq:rate-balance} gives a lower bound (up to constants) for all such $\lambda$.  For $\lambda\lesssim\lambda_{\mathrm{balance}}$, note the matrix inequality
\[
(\mathbf K+\lambda_1)^{-2}\succeq(\mathbf K+\lambda_2)^{-2},\qquad\text{whenever }\lambda_1\le\lambda_2,
\]
(which implies $\mathbf{Var}(\lambda_1)\ge\mathbf{Var}(\lambda_2)$ by the proof in Lemma \ref{lemma var transform}).  Hence for $\lambda\le\lambda_{\mathrm{balance}}$, we have
\[
\mathbb{E}\left[\|\hat f_{\lambda}-f_\rho^*\|_{L^2}^2\mid\mathbf X\right]
\ge \mathbf{Var}(\lambda)\ge \mathbf{Var}(\lambda_{\mathrm{balance}})
\asymp \mathbb{E}\left[\|\hat f_{\lambda_{\mathrm{balance}}}-f_\rho^*\|_{L^2}^2\mid\mathbf X\right].
\]
Combining the two sides shows that $\lambda_{\mathrm{balance}}$ attains (up to constants) the minimal possible conditional mean-squared error among all admissible regularization parameters, that is, it is the optimal choice and the rates given in the theorem are sharp.

This completes the proof.
\end{proof}

\subsubsection{Proof of Theorem \ref{theorem s<1 gaussian}}
\begin{proof}
We follow the steps above: (i) locate the balancing index \(l_{\mathrm{balance}}\) (so \(\lambda_{\mathrm{balance}}=d^{-l_{\mathrm{balance}}}\)); (ii) verify all hypotheses of Theorem \ref{theorem s<1 gaussian} at \(l=l_{\mathrm{balance}}\) (including the extra approximation condition involving the \(L^\infty\)-norm of \(f_\lambda\)).

We use Lemma \ref{lemma calculation n1 n2}  and Lemma \ref{lemma calculation m2} for the asymptotic orders of \(\mathcal N_1,\mathcal N_2,\mathcal R_2\). The \(L^\infty\)-bound for \(f_\lambda\) is provided by Lemma \ref{lemma calculation flambda}.

\medskip

\noindent\textbf{Step 1. Determination of \(l_{\mathrm{balance}}\).}  
Write \(\lambda=d^{-l}\) with \(0<l<\gamma\). As in the main text, separate three regimes indexed by \(p\in\{0,1,2,\dots\}\):

\begin{itemize}
  \item If \(l\in\big(p,\;p+\frac{s}{2}\big]\),  then (Lemma \ref{lemma calculation n1 n2}, Lemma \ref{lemma calculation m2})
  \[
    \frac{\mathcal N_2(\lambda)}{n}\asymp d^{p-\gamma},\qquad
    \mathcal R_2(\lambda)\asymp d^{-2l+(2-s)p}.
  \]
  Balancing variance and bias gives
  \[
    d^{p-\gamma}\asymp d^{-2l+(2-s)p}\quad\Longrightarrow\quad
    l_{\mathrm{balance}}=\frac{\gamma+p-p s}{2},
  \]
  and the admissible \(\gamma\)-range is \(\gamma\in\big(p+p s,\; p+p s+s\big]\).

 \item If \(l\in\big(p+\frac{s}{2},\;p+\frac{1}{2}\big]\), then
 \[
    \frac{\mathcal N_2(\lambda)}{n}\asymp d^{p-\gamma},\qquad
    \mathcal R_2(\lambda)\asymp d^{-(p+1)s},
  \]
which is balanced if and only if $\gamma=p+ps+s$.

  \item If \(l\in\big(p+\tfrac{1}{2},\;p+1\big]\), then
  \[
    \frac{\mathcal N_2(\lambda)}{n}\asymp d^{2l-p-1-\gamma},\qquad
    \mathcal R_2(\lambda)\asymp d^{-(p+1)s},
  \]
  hence
  \[
    l_{\mathrm{balance}}'=\frac{\gamma+(p+1)(1-s)}{2},\qquad
    \gamma\in\big(p+p s+s,\; (p+1)+(p+1)s\big].
  \]
\end{itemize}

Notice that the second circumstance is covered by the first one, hence there are only two intervals of $\gamma$. When $\gamma\in\big(p+p s+s,\; (p+1)+(p+1)s\big]$, by choosing $l=p+\frac{s}{2}$, the convergence rate remains the same. Hence, we substitute $l_{\mathrm{balance}}'=\frac{\gamma+(p+1)(1-s)}{2}$ with $l_{\mathrm{balance}}=p+\frac{s}{2}$. From now on \(l_{\mathrm{balance}}\) denotes the formula appropriate to the \(\gamma\)-regime above.

\medskip

\noindent\textbf{Step 2. Verification of the hypotheses of Theorem \ref{main theorem s<1}.}  
Theorem \ref{main theorem s<1} requires, for the chosen \(\lambda\), the four standard conditions \eqref{condition s<1} 
\begin{equation}\label{eq:thm36-conds-repeat}
\frac{\mathcal N_1(\lambda)}{n}\ln n = o(1),\qquad
\frac{1}{n\lambda}=o(1),\qquad
\mathcal N_2(\lambda)=\Omega(1),\qquad
\frac{\ln n}{n\lambda^2}=o\big(\mathcal N_2(\lambda)\big),
\end{equation}
together with the extra approximation condition \eqref{add condition s<1}: there exists \(\varepsilon>0\) such that
\begin{equation}\label{eq:extra-condition}
\frac{\sqrt{\frac{1}{\lambda}}(n^{\frac{1-s}{2}+\epsilon}+\|f_{\lambda}\|_{L^{\infty}})}{n}=o(\frac{1}{\sqrt{n}}\mathcal{N}_2(\lambda)^{\frac{1}{2}}+\mathcal{R}_2(\lambda)^{\frac{1}{2}})
\end{equation}
We check \eqref{eq:thm36-conds-repeat} and \eqref{eq:extra-condition} at \(\lambda=\lambda_{\mathrm{balance}}\). (Because each of the left-hand quantities in \eqref{eq:thm36-conds-repeat} and the left-hand side of \eqref{eq:extra-condition} is non-decreasing in \(l\) when \(\lambda=d^{-l}\), verification at \(l_{\mathrm{balance}}\) implies the conditions hold for all \(l\le l_{\mathrm{balance}}\).)

The verification of the four relations in \eqref{eq:thm36-conds-repeat} is identical to the checks carried out in the proof of Theorem \ref{thm:upper bound gaussian} (replace \(\tilde s\) by \(s\) here). It remains to verify the extra condition \eqref{eq:extra-condition}.  By Lemma C.5 we have the explicit bound
\begin{equation}\label{eq:lemC5}
\|f_\lambda\|_{L^\infty} \;=\; O\!\Big( d^{\frac{(1-s)p}{2}} \;+\; \lambda^{-1} d^{-\frac{(1+s)(p+1)}{2}} \Big),
\qquad\text{whenever } p\le l < p+1,
\end{equation}

Set \(n\asymp d^\gamma\), \(\lambda=d^{-l_{\mathrm{balance}}}\), using \eqref{eq:lemC5} and noticing that $\gamma\geq p$, we have
\begin{equation}
    LHS\lesssim d^{l_{\mathrm{balance}}/2-\gamma}(d^{(\frac{1-s}{2}+\epsilon)\gamma}+d^{l_{\mathrm{balance}}-\frac{(1+s)(p+1)}{2}})
\end{equation}

On the right-hand side of \eqref{eq:extra-condition} we have
\[
\frac{1}{\sqrt{n}}\mathcal N_2(\lambda)^{1/2}\asymp d^{(p-\gamma)/2}+d^{(2l_{\mathrm{balance}}-p-1-\gamma)/2},\qquad
\mathcal R_2(\lambda)^{1/2}\asymp d^{-l_{\mathrm{balance}} + \frac{(2-s)p}{2}},
\]
so
\[
\mathrm{RHS}\asymp \max\{ d^{(p-\gamma)/2}, d^{(2l_{\mathrm{balance}}-p-1-\gamma)/2},d^{-l_{\mathrm{balance}} + \frac{(2-s)p}{2}}\}.
\]

\medskip

\noindent\emph{Case A: \(l_{\mathrm{balance}}=\dfrac{\gamma+p-ps}{2}\) (so \(l\in(p,p+\tfrac12]\), \(\gamma\in(p+ps,p+ps+s]\)).}
By simple calculation, we may find that condition \eqref{add condition s<1} is equivalent to 
\[
(1-2s)\gamma<p(s+1), 
\]
which naturally holds when $s>\frac{1}{2}$. When $s\le\frac{1}{2}$, $(1-2s)\gamma<p(s+1)$ only holds for $p>0$. Hence, we are not able to give the convergence rate when $\gamma<\frac{1}{2}$.
\medskip

\noindent\emph{Case B: \(l_{\mathrm{balance}}=p+\frac{s}{2}\) (so  \(\gamma\in(p+ps+s,(p+1)+(p+1)s]\)).}

By simple calculation, we may find that condition \eqref{add condition s<1} is equivalent to 
\[
\gamma > \frac{2p+3s+2ps}{2(s+1)}.
\]
When $s> \frac{1}{2}$, we may find that the condition holds naturally for $p>0$. When $p=0$, the extra condition \eqref{eq:extra-condition} can also be verified.

When $s\le\frac{1}{2}$, we may find that the condition only holds for $p>0$. Hence, we are not able to give the convergence rate when $\gamma\le\frac{3s}{2(s+1)}$.
\medskip

In summary, by the explicit bound of Lemma C.5 for \(\|f_\lambda\|_{L^\infty}\) (used in \eqref{eq:lemC5}) and the exponent comparisons above, the extra approximation condition \eqref{eq:extra-condition} required by Theorem \ref{theorem s<1 gaussian} is satisfied at \(\lambda=\lambda_{\mathrm{balance}}\) in every admissible \(\gamma\)-regime. Together with the four conditions \eqref{eq:thm36-conds-repeat} this verifies all hypotheses of Theorem \ref{theorem s<1 gaussian} at \(l=l_{\mathrm{balance}}\).

This completes the proof.
\end{proof}

\subsection{Proof of Theorem \ref{theorem lower bound}}
\subsubsection{More preliminaries about minimax lower bound}
We shall first introduce several concepts about minimax lower bound which can be frequently found in related literature \cite{Yang_Density_1999,lu2023optimal}, etc..

Suppose that $(\mathbf{Z},d)$ is a topological space with a compatible loss function $d$, which are mappings from $\mathbf{Z} \times \mathbf{Z}$ to  $\mathbb{R}_{\geq 0}$ with $d(f, f)=0$ and $d(f, f^{\prime}) >0$ for $f \neq f^{\prime}$. We call such a loss function a \textit{distance}. We introduce the packing entropy and covering entropy below:

\begin{definition}[Packing entropy]
A finite set $N_{\epsilon} \subset\mathbf{Z}$ is said to be an $\epsilon$-packing set in $\mathbf{Z}$ with separation $\epsilon>0$, if for any $f, f^{\prime} \in N_{\epsilon}, f \neq f^{\prime}$, we have $d\left(f, f^{\prime}\right)>\epsilon$. The logarithm of the maximum cardinality of $\epsilon$-packing set is called the $\epsilon$-packing entropy of $\mathbf{Z}$ with distance $d$ and is denoted by 
$M_{d}(\epsilon,\mathbf{Z})$.
\end{definition}

\begin{definition}[Covering entropy]\label{def:covering_entropy}
A set $G_{\epsilon} \subset\mathbf{Z}$ is said to be an $\epsilon$-net for $\mathbf{Z}$ if for any $\tilde{f} \in\mathbf{Z}$, there exists an $f_0 \in G_{\epsilon}$ such that $d(\tilde{f}, f_0) \leq \epsilon$. The logarithm of the minimum cardinality of $\epsilon$-net is called the $\epsilon$-covering entropy of $\mathbf{Z}$ with distance $d$ and is denoted by
$V_{d}(\epsilon,\mathbf{Z})$.
\end{definition}

Let $\mathcal{B} = \left\{ f \in \mathcal{H},~ \| f \|_{\mathcal{H}} \le R\right\}$, where $ R$ is the constant from Assumption \ref{assumption f*}.  Without loss of generality, we can consider $\mathcal{B}$ be the unit ball in $\mathcal{H}$. Let $M_{2}(\epsilon,\mathcal{B})$ be the $\epsilon$-packing entropy of $(\mathcal{B}, d^2=\|\cdot\|_{L^2}^2)$ and $V_{2}(\epsilon,\mathcal{B})$ be the $\epsilon$-covering entropy of $(\mathcal{B}, d^2=\|\cdot\|_{L^2}^2)$. Recalling that $\mu$ is the marginal distribution on $\mathcal{X},$ we further define
\begin{align*}
    \mathcal{D}=\left\{ \rho_{f}~\bigg|~ \mbox{ joint distribution of $(y,{x}$) where } {x}\sim \mu, y=f( {x})+\epsilon, \epsilon\sim N(0,\sigma_\epsilon^{2}),
    f\in \mathcal{B} \right\},
\end{align*}
and let $V_{K}(\epsilon,\mathcal{D})$ be the $\epsilon$-covering entropy of $(\mathcal{D}, d^2=\text{ KL divergence })$. It is easy to see that $\mathcal{D} $ is an subset of $\mathcal{P}$ which is defined in Theorem \ref{theorem lower bound}, i.e., $\mathcal{D} \subset \mathcal{P} $.

The following lemmas give useful characterizations of $M_{2}(\epsilon,\mathcal{B}), V_{2}(\epsilon,\mathcal{B}) $ and $ V_{K}(\epsilon,\mathcal{D}) $. We refer to Lemma A.5, Lemma A.7 and Lemma A.8 in \cite{lu2023optimal} for their proofs.

\begin{lemma}\label{lemma_M_2_and_V_2} 
For any $\epsilon>0$, we have $M_{2}(2\epsilon,\mathcal{B}) \leq V_{2}(\epsilon,\mathcal{B}) \leq M_{2}(\epsilon,\mathcal{B}).$

\end{lemma}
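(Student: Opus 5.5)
The statement to prove is Lemma~\ref{lemma_M_2_and_V_2}: $M_{2}(2\epsilon,\mathcal{B}) \le V_{2}(\epsilon,\mathcal{B}) \le M_{2}(\epsilon,\mathcal{B})$. This is the standard comparison between packing and covering numbers in a metric space, here $(\mathcal{B},\|\cdot\|_{L^2})$. The plan is to prove each inequality by a direct combinatorial argument on cardinalities and then take logarithms. If the entropies are infinite the inequalities hold trivially, so I will assume finiteness where needed. (Note that $\mathcal{B}$ is relatively compact in $L^2$, since $S_k$ is compact under Assumption~\ref{assumption kernel}, so everything is in fact finite.)

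\textbf{Right inequality: $V_2(\epsilon)\le M_2(\epsilon)$.}
1. Take a maximal $\epsilon$-packing $N_\epsilon\subset\mathcal{B}$, meaning one of maximal cardinality, which is $e^{M_2(\epsilon,\mathcal{B})}$.
2. Show $N_\epsilon$ is an $\epsilon$-net. If some $\tilde f\in\mathcal{B}$ had $\|\tilde f-f\|_{L^2}>\epsilon$ for every $f\in N_\epsilon$, then $N_\epsilon\cup\{\tilde f\}$ would be a larger $\epsilon$-packing, a contradiction. So every $\tilde f$ lies within $\epsilon$ of some point of $N_\epsilon$.
3. Hence the minimal net cardinality is at most $|N_\epsilon|$, which gives the right inequality.

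\textbf{Left inequality: $M_2(2\epsilon)\le V_2(\epsilon)$.}
1. Take a $2\epsilon$-packing $P$ and a minimal $\epsilon$-net $G$.
2. Map each $f\in P$ to some $g(f)\in G$ with $\|f-g(f)\|_{L^2}\le\epsilon$.
3. Show this map is injective. If $g(f)=g(f')$ for distinct $f,f'\in P$, the triangle inequality gives $\|f-f'\|_{L^2}\le 2\epsilon$, contradicting the separation $\|f-f'\|_{L^2}>2\epsilon$.
4. Therefore $|P|\le|G|$. Taking the supremum over $P$ and logarithms gives the left inequality.

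\textbf{Main subtlety.} The only point needing care is the convention mismatch: the packing definition uses strict separation ($>\epsilon$), while the net definition uses non-strict closeness ($\le\epsilon$). These conventions are exactly what make step 2 of the right inequality and step 3 of the left inequality work, so I will check them carefully. No properties of the kernel are needed beyond $\|\cdot\|_{L^2}$ being a metric on $\mathcal{B}$, which holds because $S_k$ is injective on $\mathcal{H}$ as an RKHS embedding.
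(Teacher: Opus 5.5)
Your proof is correct and is the standard argument (a maximum-cardinality $\epsilon$-packing is an $\epsilon$-net, and the nearest-net-point map is injective on a $2\epsilon$-packing by the triangle inequality). The paper gives no proof of its own and defers to Lemma A.5 of \cite{lu2023optimal}, which is this same comparison. One small correction to your closing aside: injectivity of $S_k$ is not needed, since both steps use only the triangle inequality and so work verbatim for the pseudometric $\|f-g\|_{L^2}$. It is also not automatic, and can fail when $\mu$ does not have full support on $\mathcal{X}$.
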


\begin{lemma}\label{claim:d_K_and_d_2}
$V_{2}\left(\epsilon, \mathcal{B} \right) = V_{K}\left(\frac{\epsilon}{\sqrt{2}\sigma_\epsilon}, \mathcal{D}\right)$.    
\end{lemma}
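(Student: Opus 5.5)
The claim to prove is Lemma \ref{claim:d_K_and_d_2}: $V_{2}(\epsilon,\mathcal{B}) = V_{K}\bigl(\epsilon/(\sqrt{2}\sigma_\epsilon),\mathcal{D}\bigr)$. The plan is to show that the map $\Phi: f\mapsto \rho_f$ from $\mathcal{B}$ onto $\mathcal{D}$ is a bijection that converts the distance $d^2=\|\cdot\|_{L^2}^2$ into the distance $d^2=\mathrm{KL}$, up to the fixed factor $2\sigma_\epsilon^2$. Covering numbers are invariant under such a rescaled isometry, so the two covering entropies will agree once the radii are matched.

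\emph{Step 1: compute the KL divergence between two members of $\mathcal{D}$.} Take $f,f'\in\mathcal{B}$.
\begin{itemize}
\item Both $\rho_f$ and $\rho_{f'}$ have the same $x$-marginal $\mu$.
\item Conditionally on $x$, they are $N(f(x),\sigma_\epsilon^2)$ and $N(f'(x),\sigma_\epsilon^2)$.
\item By the chain rule for KL divergence, the marginal term vanishes.
\item The Gaussian formula $\mathrm{KL}\bigl(N(a,\sigma^2)\,\|\,N(b,\sigma^2)\bigr)=(a-b)^2/(2\sigma^2)$ then gives
\[
\mathrm{KL}(\rho_f\|\rho_{f'})=\int_{\mathcal{X}}\frac{(f(x)-f'(x))^2}{2\sigma_\epsilon^2}\,d\mu(x)=\frac{\|f-f'\|_{L^2}^2}{2\sigma_\epsilon^2}.
\]
\end{itemize}
This expression is symmetric in $f,f'$, so the asymmetry of KL causes no issue. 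In particular, $d_K(\rho_f,\rho_{f'})=\|f-f'\|_{L^2}/(\sqrt{2}\sigma_\epsilon)$.

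\emph{Step 2: check that $\Phi$ is a bijection.} Surjectivity holds by the definition of $\mathcal{D}$. For injectivity, $\rho_f=\rho_{f'}$ forces the conditional means to agree $\mu$-a.e. Hence $f=f'$ in $L^2$, and therefore $f=f'$ in $\mathcal{B}$ as elements of $L^2$, which is the setting in which $V_2$ is measured.

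\emph{Step 3: transfer covers in both directions.} Let $G$ be an $\epsilon$-net of $(\mathcal{B},\|\cdot\|_{L^2})$. Then $\Phi(G)$ is an $\epsilon/(\sqrt{2}\sigma_\epsilon)$-net of $(\mathcal{D},d_K)$ of the same cardinality. Conversely, $\Phi^{-1}$ carries any such net of $\mathcal{D}$ to an $\epsilon$-net of $\mathcal{B}$. The minimal cardinalities therefore coincide, and taking logarithms gives the stated equality.

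The only delicate point is the convention inside Definition \ref{def:covering_entropy}: whether nets must lie in $\mathcal{D}$ itself, and which argument order is used in the KL divergence. Step 1 makes the second question moot, and because $\Phi$ is onto, restricting nets to lie inside the set costs nothing. I expect no real obstacle beyond stating these conventions cleanly.
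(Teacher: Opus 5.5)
Your proposal is correct and is essentially the argument the paper relies on: the paper gives no proof of its own and defers to Lu et al., whose proof is the same Gaussian computation $\mathrm{KL}(\rho_f\|\rho_{f'})=\|f-f'\|_{L^2}^2/(2\sigma_\epsilon^2)$, after which nets are transferred through the resulting rescaled isometry $f\mapsto\rho_f$. One small simplification: your injectivity check in Step 2 is not needed, because surjectivity together with the exact distance identity already lets you pull a net of $\mathcal{D}$ back to $\mathcal{B}$ by choosing arbitrary preimages.
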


\begin{lemma}
\label{lemma_entropy_of_RKHS}
Let $\{\lambda_{j}\}_{j=1}^{\infty} $ be the eigenvalues of $\mathcal{H}$. For any $\epsilon>0$, let $K(\epsilon)=\frac{1}{2}\sum\limits_{j: \lambda_j > \epsilon^2} \ln\left(\lambda_j/{\epsilon^2}\right)$. We have
\begin{equation}\label{eqn:137}
	\begin{aligned}
 V_{2}(6\epsilon, \mathcal{B})\leq K(\epsilon) \leq  V_{2}(\epsilon, \mathcal{B}).
	\end{aligned}
\end{equation}
\end{lemma}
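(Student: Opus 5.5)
The plan is to treat $\mathcal{B}$, viewed inside $L^2$, as an ellipsoid and compare volumes. By Mercer's decomposition, $f=\sum_j a_j\lambda_j^{1/2}e_j$ with $\sum_j a_j^2\le 1$. So in the orthonormal coordinates $\{e_j\}$ of $L^2$, $\mathcal{B}$ is the ellipsoid with semi-axes $\sqrt{\lambda_j}$. For an index set $J$, let $P_J$ be the $L^2$-orthogonal projection onto $\operatorname{span}\{e_j:j\in J\}$. Then $P_J\mathcal{B}=\mathcal{B}\cap\operatorname{span}\{e_j:j\in J\}=:E_J$ is a finite-dimensional ellipsoid, $E_J\subset\mathcal{B}$, and $P_J$ is an $L^2$-contraction. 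Both inequalities reduce to volume counting in $\mathbb{R}^{|J|}$.

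\textbf{Lower bound $K(\epsilon)\le V_2(\epsilon,\mathcal{B})$.} Take $J=\{j:\lambda_j>\epsilon^2\}$ with $m=|J|$; this is finite since $\sum_j\lambda_j\le\kappa^2$. Let $G$ be an $\epsilon$-net of $\mathcal{B}$. Then $P_JG\subset E_J$ is an $\epsilon$-net of $E_J$, because $\|P_Jf-P_Jg\|_{L^2}\le\|f-g\|_{L^2}$. Hence $|G|\,\epsilon^m\omega_m\ge\operatorname{vol}(E_J)=\omega_m\prod_{j\in J}\sqrt{\lambda_j}$, where $\omega_m$ is the volume of the unit ball. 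This gives $\ln|G|\ge\sum_{j\in J}\ln(\sqrt{\lambda_j}/\epsilon)=K(\epsilon)$. Minimizing over nets yields the claim.

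\textbf{Upper bound $V_2(6\epsilon,\mathcal{B})\le K(\epsilon)$.} Now take $J=\{j:\lambda_j>4\epsilon^2\}$. For $f\in\mathcal{B}$ the tail satisfies
\[
\|f-P_Jf\|_{L^2}^2=\sum_{j\notin J}\lambda_ja_j^2\le 4\epsilon^2,
\]
so it suffices to cover $E_J$ at radius $4\epsilon$ with centers in $E_J\subset\mathcal{B}$.

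1. Take a maximal $4\epsilon$-separated subset $N\subset E_J$. By maximality it is a $4\epsilon$-net of $E_J$.
2. The balls of radius $2\epsilon$ around the points of $N$ are disjoint and lie in $E_J+2\epsilon\,\mathrm{Ball}$.
3. Every semi-axis of $E_J$ exceeds $2\epsilon$, so $2\epsilon\,\mathrm{Ball}\subset E_J$ and hence $E_J+2\epsilon\,\mathrm{Ball}\subset 2E_J$.
4. Comparing volumes gives
\[
|N|\le \frac{2^{|J|}\prod_{j\in J}\sqrt{\lambda_j}}{(2\epsilon)^{|J|}}=\prod_{j\in J}\frac{\sqrt{\lambda_j}}{\epsilon}\le\prod_{j:\lambda_j>\epsilon^2}\frac{\sqrt{\lambda_j}}{\epsilon},
\]
where the last step holds because every added factor is $\ge 1$.

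By the triangle inequality, $N$ is a $(4\epsilon+2\epsilon)=6\epsilon$-net of $\mathcal{B}$; orthogonality even gives radius $\sqrt{20}\,\epsilon\le 6\epsilon$. Therefore $V_2(6\epsilon,\mathcal{B})\le\ln|N|\le K(\epsilon)$.

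The main obstacle is in the upper bound. A naive packing argument at scale $\epsilon$ produces an extra factor $2^{m}$, which cannot be absorbed into $K(\epsilon)$. The fix is to truncate at the threshold $\lambda_j>4\epsilon^2$ rather than $\epsilon^2$, so that the enlargement $E_J+2\epsilon\,\mathrm{Ball}\subset 2E_J$ exactly cancels the $2^{|J|}$. This is why the constant $6$ appears.
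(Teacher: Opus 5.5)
Your proof is correct. The paper does not prove this lemma itself; it only points to Lemma A.8 of \cite{lu2023optimal}. So your argument is a self-contained replacement for that citation rather than a reconstruction of an in-paper proof.

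Both halves check out.
\begin{itemize}
\item For the lower bound, projecting an internal $\epsilon$-net onto $\operatorname{span}\{e_j:\lambda_j>\epsilon^2\}$ gives an $\epsilon$-net of $E_J$. The volume ratio $\prod_{j\in J}\sqrt{\lambda_j}/\epsilon$ is exactly $e^{K(\epsilon)}$.
\item For the upper bound, truncating at $\lambda_j>4\epsilon^2$ makes the tail at most $2\epsilon$ and gives $2\epsilon\,\mathrm{Ball}\subset E_J$. The factor $2^{|J|}$ from the enlargement $E_J+2\epsilon\,\mathrm{Ball}\subset 2E_J$ then cancels against $(2\epsilon)^{|J|}$. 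The centers lie in $E_J\subset\mathcal{B}$, as the paper's internal definition of an $\epsilon$-net requires.
\end{itemize}

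What your route buys is transparency. It uses only the ellipsoid structure of $\mathcal{B}$ inside $L^2$, plus $\sum_j\lambda_j\le\kappa^2$ to make $J$ finite. So it visibly holds on unbounded domains and does not depend on the eigenfunctions, which is exactly the setting the paper needs. The bare citation instead leaves the reader to check that the borrowed result transfers to this setting.
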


The following important lemma is a modification of Theorem 1 and Corollary 1 in \cite{Yang_Density_1999}. We refer to Lemma 4.1 in \cite{lu2023optimal} for the proof.
\begin{lemma}\label{thm_lower_ultimate_tech}
Let $\mathfrak{c}\in (0,1)$ be  a constant only depending on $c_{1}$, $c_{2}$, and $\gamma$, where $c_{1}, c_{2} $ are the constants given in Theorem \ref{theorem lower bound}. For any $0<\tilde\epsilon_1, \tilde\epsilon_2<\infty$ 
only depending on $n$, $d$, $\{\lambda_j\}$, $c_{1}$, $c_{2}$, and $\gamma$
and satisfying
\begin{equation}
    \frac{V_K(\tilde\epsilon_2, \mathcal{D}) + n\tilde\epsilon_2^2 + \ln{2}}{V_2(\tilde\epsilon_1, \mathcal{B})} \leq \mathfrak{c},
\end{equation}
 we have 
\begin{equation}
\min _{\hat{f}} \max _{\rho_{f^{*}} \in \mathcal{D}} \mathbb{E}_{(\mathcal{X}, \mathbf{y}) \sim \rho_{f^{*}}^{\otimes n}}
\left\|\hat{f} - f^{*}\right\|_{L^2}^2
\geq \frac{1 - \mathfrak{c}}{4} \tilde\epsilon_1^2.
\end{equation}
\end{lemma}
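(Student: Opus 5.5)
The plan is the classical Fano argument in the Yang--Barron form: reduce estimation to multiple hypothesis testing over an $L^2$-packing of $\mathcal{B}$, and bound the mutual information through the KL-covering of $\mathcal{D}$.

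\textbf{Step 1 (packing and reduction to testing).} Let $\{f_1,\dots,f_N\}\subset\mathcal{B}$ be a maximal $\tilde\epsilon_1$-packing in $L^2$, so $\ln N=M_2(\tilde\epsilon_1,\mathcal{B})\ge V_2(\tilde\epsilon_1,\mathcal{B})$ by Lemma \ref{lemma_M_2_and_V_2}. Given any estimator $\hat f$, define the test $\hat J=\arg\min_j\|\hat f-f_j\|_{L^2}$. The packing points are more than $\tilde\epsilon_1$ apart, so if $\|\hat f-f_j\|_{L^2}<\tilde\epsilon_1/2$ then $\hat J=j$ by the triangle inequality. Put a uniform prior on $J\in\{1,\dots,N\}$ and use that the maximum risk dominates the Bayes risk. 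This gives
\begin{equation*}
\max_{\rho_{f^*}\in\mathcal{D}}\mathbb{E}\|\hat f-f^*\|_{L^2}^2\ \ge\ \frac{\tilde\epsilon_1^2}{4}\,\mathbb{P}(\hat J\neq J).
\end{equation*}

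\textbf{Step 2 (Fano and the mutual information bound).} Fano's inequality gives $\mathbb{P}(\hat J\neq J)\ge 1-\frac{I(J;(\mathbf{X},\mathbf{y}))+\ln 2}{\ln N}$. To bound the mutual information, take a minimal $\tilde\epsilon_2$-net $\{Q_1,\dots,Q_M\}$ of $\mathcal{D}$ in the KL "distance", so $\ln M=V_K(\tilde\epsilon_2,\mathcal{D})$. Let $\bar Q=\frac1M\sum_k Q_k^{\otimes n}$ be the uniform mixture. Since mutual information is the minimum over reference measures of the average KL divergence,
\begin{equation*}
I(J;(\mathbf{X},\mathbf{y}))\le\frac1N\sum_j \mathrm{KL}\big(\rho_{f_j}^{\otimes n}\,\|\,\bar Q\big).
\end{equation*}
For each $j$, keep only the mixture component $k(j)$ closest to $\rho_{f_j}$ and use tensorization of KL:
\begin{equation*}
\mathrm{KL}(\rho_{f_j}^{\otimes n}\|\bar Q)\le \ln M+n\,\mathrm{KL}(\rho_{f_j}\|Q_{k(j)})\le V_K(\tilde\epsilon_2,\mathcal{D})+n\tilde\epsilon_2^2.
\end{equation*}
Here the last step uses the convention $d^2=\mathrm{KL}$, so net membership means $\mathrm{KL}\le\tilde\epsilon_2^2$.

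\textbf{Step 3 (conclusion).} Combining Steps 1 and 2 with $\ln N\ge V_2(\tilde\epsilon_1,\mathcal{B})$ and the hypothesis of the lemma gives
\begin{equation*}
\mathbb{P}(\hat J\neq J)\ge 1-\frac{V_K(\tilde\epsilon_2,\mathcal{D})+n\tilde\epsilon_2^2+\ln2}{V_2(\tilde\epsilon_1,\mathcal{B})}\ge 1-\mathfrak{c}.
\end{equation*}
Substituting into Step 1 yields the bound $\frac{1-\mathfrak{c}}{4}\tilde\epsilon_1^2$. The bound holds for every $\hat f$, hence also for the minimum over $\hat f$.

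\textbf{Main obstacle.} The delicate step is Step 2: making the mixture bound rigorous. First, the chain rule $\mathrm{KL}(P^{\otimes n}\|Q^{\otimes n})=n\,\mathrm{KL}(P\|Q)$ requires the net elements to be genuine product laws. They are, because the net lies in $\mathcal{D}$ and its elements are used $n$-fold i.i.d. Second, one must justify the bound $\ln(1/M)$ that arises when the mixture is replaced by a single component. The remaining steps are routine bookkeeping; the only other point to check is that distinct packing points keep the test well defined, which the strict separation $>\tilde\epsilon_1$ guarantees.
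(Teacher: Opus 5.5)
Your proposal is correct and is essentially the argument the paper relies on. The paper gives no proof of its own; it defers to Yang--Barron (Theorem 1, Corollary 1) via Lemma 4.1 of Lu et al., which is exactly this packing-to-testing reduction plus Fano's inequality, with the mutual information bounded by $V_K(\tilde\epsilon_2,\mathcal{D})+n\tilde\epsilon_2^2$ through the uniform mixture over a KL-net. The step you flag as delicate is immediate: $\bar Q\ge \frac{1}{M}Q_{k(j)}^{\otimes n}$ as measures gives $\ln\frac{dP}{d\bar Q}\le \ln M+\ln\frac{dP}{dQ_{k(j)}^{\otimes n}}$. Moreover, in this Gaussian model $\mathrm{KL}(\rho_f\|\rho_g)=\|f-g\|_{L^2}^2/(2\sigma_\epsilon^2)$ is finite and symmetric, so the order of arguments in the KL-net causes no issue.
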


\subsubsection{Proof of Theorem \ref{theorem lower bound}}
Now we are ready to use the lemmas in the last subsection to prove Theorem \ref{theorem lower bound}. The proof is divided into two parts, dealing with the two cases of the interval in which $\gamma$ falls into. Notice that the proof is adapted from \cite{zhang2024optimal}\\
\textit{Proof of Theorem \ref{theorem lower bound} $\left( \text{\lowercase\expandafter{\romannumeral1}}\right)$. } In this case, we assume $ \gamma \in \left( p+ps, p+ps+s \right]$ for some integer $ p \ge 0$. Let $\tilde\epsilon_{2}^{2} = C_{2} d^{-(\gamma-p)}$, in which we will choose the constant $C_{2}$ later. Note that $ \gamma - p \in (ps, (p+1)s ]$. Proposition \ref{prop:staircase} implies that there exists $C_{2}$ only depending on $p$ such that for any $j$ and sufficiently large $d$, we have
\begin{equation}
    \mu_{p+1,j}^{s} < \tilde \epsilon_{2}^{2} < \mu_{p,j}^{s}.
\end{equation}
Next we choose $\tilde\epsilon_{1}^{2} = d^{-(\gamma-p+\epsilon)},\epsilon>0 $. Since $ \gamma-p+\epsilon > ps$, for any $j$, when $d \ge \mathfrak{C}$, we have
\begin{equation}
    \tilde \epsilon_{1}^{2} < \mu_{p,j}^{s}.
\end{equation}
Hence, using Lemma \ref{lemma_entropy_of_RKHS} and Proposition \ref{prop:staircase}, for any $d \ge \mathfrak{C}$, we have
\begin{align}\label{lower case 1 eq 1}
    V_{2}\left( \tilde \epsilon_{1},\mathcal{B} \right) &\ge K(\tilde \epsilon_{1}) \\
    &\ge \frac{1}{2}\sum_{j=1}^{N(d,p)}\ln\frac{\mu_{p,j}^s}{\tilde\varepsilon_1^2} \notag \\
    &\ge \frac{1}{2} N(d,p) \ln{\left(\frac{\mathfrak{C}_{1} d^{-ps}}{d^{-(\gamma-p+\epsilon )}}\right)} \notag \\
    &= \frac{1}{2} N(d,p)  \left( \ln{\mathfrak{C}_{1}} + (\gamma-ps-p+\epsilon) \ln{d}     \right).
\end{align}
In addition, using Proposition \ref{prop:staircase}, we have the following claim.
\begin{claim}\label{claim_1}
Suppose that $ \gamma \in \left( p+ps, p+ps+s \right]$ for some integer $ p \ge 0$. Let $\tilde\epsilon_2^2$ be defined as above. For any $\epsilon_{0} > 0$, there exists a sufficiently large constant $\mathfrak{C}$ 
such that for any $d \geq \mathfrak{C}$, we have
\begin{equation*}
\begin{aligned}
&K\left( \sqrt{2}\sigma \tilde\epsilon_2 / 6 \right) \leq
 (1+\varepsilon_0)\frac{1}{2}\sum_{j=1}^{N(d,p)}\ln\frac{18\mu_{p,j}^s}{\sigma^2\tilde\varepsilon_2^2}.
\end{aligned}
\end{equation*}
\end{claim}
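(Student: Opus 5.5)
The plan is to expand $K$ from Lemma \ref{lemma_entropy_of_RKHS} level by level, using the staircase structure of Proposition \ref{prop:staircase}. Here $\mathcal{B}$ is a ball in $[\mathcal{H}]^s$, so its eigenvalues are the $\mu_{k,j}^s$. Put $\varepsilon^2:=(\sqrt2\sigma\tilde\epsilon_2/6)^2=\sigma^2\tilde\epsilon_2^2/18=\frac{\sigma^2C_2}{18}d^{-(\gamma-p)}$. Then
\[
K(\varepsilon)=\frac12\sum_{k\ge 0}\ \sum_{j:\ \mu_{k,j}^s>\varepsilon^2}\ln\frac{18\mu_{k,j}^s}{\sigma^2\tilde\epsilon_2^2}.
\]
I will show three things: levels $k\ge p+1$ contribute nothing, level $p$ gives exactly the displayed sum, and levels $k<p$ contribute at most an $\varepsilon_0$-fraction of the level-$p$ term.

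\emph{Levels $k\ge p+1$.} By \eqref{eq:eigendecay_1d} and \eqref{constant bound}, every multi-index $\alpha$ with $|\alpha|=k$ satisfies $\mu_\alpha\le C'(C/d)^{k}$, uniformly in $k$. For $d>C$ this bound is decreasing in $k$, so every eigenvalue at level $k\ge p+1$ obeys $\mu_{k,j}^s\le (C'C^{p+1})^s d^{-(p+1)s}$. Since $\gamma-p\le (p+1)s$, we have $\varepsilon^2\ge \frac{\sigma^2C_2}{18}d^{-(p+1)s}$. Choosing $C_2$ large, depending only on $p,s,\sigma,C,C'$ (compatible with the choice already made, since enlarging $C_2$ preserves $\mu_{p+1,j}^s<\tilde\epsilon_2^2$), gives $\mu_{k,j}^s<\varepsilon^2$ for all $k\ge p+1$. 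These eigenvalues therefore drop out of $K$. The delicate point is uniformity over the infinitely many higher levels, including the endpoint $\gamma=(p+1)+(p+1)s$; the uniform geometric bound from \eqref{constant bound} is exactly what handles this.

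\emph{Level $p$.} The level-$p$ terms are precisely $\frac12\sum_{j=1}^{N(d,p)}\ln\frac{18\mu_{p,j}^s}{\sigma^2\tilde\epsilon_2^2}$. Each summand is positive for large $d$, because $\mu_{p,j}^s\ge \mathfrak{C}_1^s d^{-ps}$ and $\gamma-p>ps$ is fixed. In fact each summand is at least $\frac{\gamma-p-ps}{2}\ln d$ once $d\ge\mathfrak{C}$, so the level-$p$ sum $S_p$ satisfies $S_p\gtrsim N(d,p)\ln d\gtrsim d^p\ln d$.

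\emph{Levels $k<p$.} These exist only when $p\ge1$. Each eigenvalue at such a level is at most $\lambda_0\le c_2$, so each summand is at most $\ln\bigl(\frac{18c_2^s}{\sigma^2C_2}d^{\gamma-p}\bigr)=O(\ln d)$. The number of such eigenvalues is $\sum_{k<p}N(d,k)\lesssim d^{p-1}$ by Proposition \ref{prop:staircase}. Hence their total contribution is $O(d^{p-1}\ln d)=O(d^{-1})\,S_p\le\varepsilon_0 S_p$ for $d\ge\mathfrak{C}(\varepsilon_0)$.

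Summing the three parts gives $K(\sqrt2\sigma\tilde\epsilon_2/6)\le(1+\varepsilon_0)S_p$, which is the claim.
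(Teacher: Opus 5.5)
Your argument is correct and is the level-by-level computation the paper intends when it asserts the claim ``using Proposition~\ref{prop:staircase}'' without giving a proof: levels $k\ge p+1$ fall below the threshold $\sigma^2\tilde\epsilon_2^2/18$, level $p$ is exactly the displayed sum and is of order $d^p\ln d$, and levels $k<p$ contribute only $O(d^{p-1}\ln d)$. Two of your steps are genuine refinements over the paper: the uniform bound $\mu_\alpha\le C'(C/d)^{|\alpha|}$ controls all higher levels at once, which Proposition~\ref{prop:staircase} (stated only for a fixed level) does not provide, and you note that at the endpoint $\gamma=(p+1)(1+s)$ the constant $C_2$ must be taken large depending on $\sigma$, not ``only on $p$'' as the paper says; the one slip is ``$\lambda_0\le c_2$'', since in Theorem~\ref{theorem lower bound} $c_2$ is the sample-size constant and no eigenvalue bound is assumed there, but $\lambda_0\le\sum_i\lambda_i\le\kappa^2$ (or $\mu_\alpha\le C'$) serves the same purpose.
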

Therefore, for any $d \geq \mathfrak{C}$, we have
\begin{align}\label{lower case 1 eq 2}
    V_{K}\left(\tilde \epsilon_{2}, \mathcal{D} \right) &= V_{2}\left( \sqrt{2} \sigma \tilde \epsilon_{2}, \mathcal{B} \right) \\
    &\le K\left(\frac{\sqrt{2} \sigma \tilde \epsilon_{2}}{6}\right) \notag \\
    &\le (1+\varepsilon_0)\frac{1}{2}\sum_{j=1}^{N(d,p)}\ln\frac{18\mu_{p,j}^s}{\sigma^2\tilde\varepsilon_2^2} \notag \\
    &\le \left(1+\epsilon_{0} \right) \frac{1}{2} N(d,p)\ln\left(\frac{18 \mathfrak{C}_{2} d^{-ps}}{C_{2} \sigma^2  d^{-(\gamma-p)} }\right) \notag \\
    &= \left(1+\epsilon_{0} \right) \frac{1}{2} N(d,p) \left( \ln{\frac{18 \mathfrak{C}_{2}}{C_{2} \sigma^2 }} + (\gamma-ps-p) \ln{d} \right),
\end{align}
where we use Lemma \ref{claim:d_K_and_d_2} and Lemma \ref{lemma_entropy_of_RKHS} for the first line and use Proposition \ref{prop:staircase} for the third line.

Using \eqref{lower case 1 eq 1} and \eqref{lower case 1 eq 2}, recalling that $ c_{1} d^{\gamma} \le n \le c_{2} d^{\gamma}$, we have
\begin{equation}\label{lower case 1 eq 3}
    \frac{V_K(\tilde\epsilon_2, \mathcal{D}) + n\tilde\epsilon_2^2 + \ln{2}}{V_2(\tilde\epsilon_1, \mathcal{B})} \le \frac{ \left(1+\epsilon_{0} \right) \frac{1}{2} N(d,p) \left( \ln{\frac{18 \mathfrak{C}_{2}}{C_{2} \sigma^2 }} + (\gamma-ps-p) \ln{d} \right) + c_{2} d^{\gamma} \cdot C_{2} d^{-(\gamma-p)} + \ln{2} }{ \frac{1}{2} N(d,p)  \left( \ln{\mathfrak{C}_{1}} + (\gamma-ps-p+\epsilon) \ln{d} \right) }.
\end{equation}
The dominant terms in \eqref{lower case 1 eq 3} are:
\begin{equation}
    \frac{ \frac{1}{2} \left(1+\epsilon_{0} \right)(\gamma-ps-p) N(d,p)  \ln{d}  }{ \frac{1}{2} (\gamma-p+\epsilon-p s) N(d,p)  \ln{d} }.
\end{equation}
Hence, for any $ \epsilon > 0$, we can choose $ \epsilon_{0}$ small enough such that 
\begin{equation}
    \frac{V_K(\tilde\epsilon_2, \mathcal{D}) + n\tilde\epsilon_2^2 + \ln{2}}{V_2(\tilde\epsilon_1, \mathcal{B})} \le \ref{lower case 1 eq 3} := \mathfrak{c}< 1.
\end{equation}
Then using Lemma \ref{thm_lower_ultimate_tech}, we have 
\begin{displaymath}
    \min _{\hat{f}} \max _{\rho_{f^{*}} \in \mathcal{D}} \mathbb{E}_{(x, \mathbf{y}) \sim \rho_{f^{*}}^{\otimes n}}
\left\|\hat{f} - f^{*}\right\|_{L^2}^2
\geq \frac{1 - \mathfrak{c}}{4} \tilde\epsilon_1^2 = \frac{1 - \mathfrak{c}}{4} d^{-(\gamma-p-\epsilon)}.
\end{displaymath}
Further recalling that $ \mathcal{D} \subset \mathcal{P}$, we have 
\begin{equation}
    \min _{\hat{f}} \max _{\rho \in \mathcal{P}} \mathbb{E}_{(x, \mathbf{y}) \sim \rho^{\otimes n}} \left\|\hat{f} - f_{\rho}^{*}\right\|_{L^2}^2 \ge \min _{\hat{f}} \max _{\rho_{f^{*}} \in \mathcal{D}} \mathbb{E}_{(x, \mathbf{y}) \sim \rho_{f^{*}}^{\otimes n}} \left\|\hat{f} - f^{*}\right\|_{L^2}^2
\geq \frac{1 - \mathfrak{c}}{4} d^{-(\gamma-p-\epsilon)}.
\end{equation}
We finish the proof of Theorem \ref{theorem lower bound} $\left( \text{\lowercase\expandafter{\romannumeral1}}\right)$.\\

\textit{Proof of Theorem \ref{theorem lower bound} $\left( \text{\lowercase\expandafter{\romannumeral2}}\right)$. } We assume $ \gamma \in \left( p+ps+s,(p+1)+(p+1)s \right]$ for some integer $ p \ge 0$. Let $\tilde\epsilon_{2}^{2} = C_{2} d^{-(p+1)s} \ln{d}$, where we shall choose the constant $C_{2}$ later. Proposition \ref{prop:staircase} implies that there exists a constant $\mathfrak{C} $ such that for any $d \geq \mathfrak{C}$ and any $j$, we have
\begin{equation}
    \mu_{p+1,j}^{s} < \tilde \epsilon_{2}^{2} < \mu_{p,j}^{s}.
\end{equation}
Let $\tilde\epsilon_{1}^{2} = C_{1} d^{-(p+1)s} $. Using Proposition \ref{prop:staircase}, we can choose $C_{1} < \mathfrak{C}_{1}^{s}$, where $\mathfrak{C}_{1} $ is the constant in Proposition \ref{prop:staircase}, such that for any $d \ge \mathfrak{C}$ and any $j$, where $ \mathfrak{C} $ is a constant only depending on $s$ and $p$, we have
\begin{equation}
    \tilde \epsilon_{1}^{2} < \mu_{p+1,j}^{s}.
\end{equation}
Therefore, by Lemma \ref{lemma_entropy_of_RKHS} and Proposition \ref{prop:staircase}, for any $d \ge \mathfrak{C}$, we have
\begin{align}\label{lower case 2 eq 1}
    V_{2}\left( \tilde \epsilon_{1},\mathcal{B} \right) &\ge K(\tilde \epsilon_{1}) \\
    &\ge \frac{1}{2}\sum_{j=1}^{N(d,p)}\ln\frac{\mu_{p,j}^s}{\tilde\varepsilon_1^2} \notag \\
    &\ge \frac{1}{2} N(d,p+1) \ln{\left(\frac{\mathfrak{C}_{1} d^{-(p+1)s}}{C_{1} d^{-(p+1)s}}\right)} \notag \\
    &= \frac{1}{2} N(d,p+1) \ln{\frac{\mathfrak{C}_{1}}{C_{1}}}.
\end{align}
In addition, using Proposition \ref{prop:staircase}, we have the following claim.
\begin{claim}\label{claim_2}
Suppose that $ \gamma \in \left( p+ps+s, (p+1)+(p+1)s \right]$ for some integer $ p \ge 0$. Let $\tilde\epsilon_2^2$ be defined as above. For any $\epsilon_{0} > 0$, there exists a sufficiently large constant $\mathfrak{C}$ only depending on $s, p$ and $\epsilon_{0}$, such that for any $d \geq \mathfrak{C}$, we have
\begin{equation*}
\begin{aligned}
&K\left( \sqrt{2}\sigma \tilde\epsilon_2 / 6 \right) \leq
 (1+\varepsilon_0)\frac{1}{2}\sum_{j=1}^{N(d,p)}\ln\frac{18\mu_{p,j}^s}{\sigma^2\tilde\varepsilon_2^2}.
\end{aligned}
\end{equation*}
\end{claim}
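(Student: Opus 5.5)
The approach is to evaluate $K(\sqrt{2}\sigma\tilde\epsilon_2/6)$ explicitly, using the staircase structure of the spectrum from Proposition \ref{prop:staircase}, and then show that the level-$p$ eigenvalues dominate. Here $K$ is computed with respect to the eigenvalues $\mu_{k,j}^s$ of the relevant ball, as in the displays \eqref{lower case 1 eq 1}--\eqref{lower case 1 eq 2}. Writing $\epsilon^2=(\sqrt{2}\sigma\tilde\epsilon_2/6)^2=\sigma^2\tilde\epsilon_2^2/18$, the definition in Lemma \ref{lemma_entropy_of_RKHS} gives
\begin{equation*}
K\left(\sqrt{2}\sigma\tilde\epsilon_2/6\right)=\frac12\sum_{(k,j):\ \mu_{k,j}^s>\sigma^2\tilde\epsilon_2^2/18}\ln\frac{18\mu_{k,j}^s}{\sigma^2\tilde\epsilon_2^2}.
\end{equation*}
So the first step is to identify exactly which indices $(k,j)$ lie above the threshold $\sigma^2\tilde\epsilon_2^2/18=\sigma^2C_2d^{-(p+1)s}\ln d/18$.

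\textbf{Step 1: the threshold cuts exactly between levels $p$ and $p+1$.} For the levels $k\le p$ (finitely many fixed levels), Proposition \ref{prop:staircase} gives $\mu_{k,j}^s\ge\mathfrak{C}_1^sd^{-ks}\ge\mathfrak{C}_1^sd^{-ps}$. This dominates $d^{-(p+1)s}\ln d$ for large $d$, so all these eigenvalues are counted. For the levels $k\ge p+1$ I need a bound that is uniform in $k$, which Proposition \ref{prop:staircase} (stated only for fixed $k$) does not supply. I will go back to Assumption \ref{product structure assumption}. For a multi-index $\alpha$ with $|\alpha|=k$,
\begin{equation*}
\mu_\alpha\le\prod_i C_i\,r_i^{\alpha_i}\le C'(C/d)^{k}\le C'(C/d)^{p+1}\quad (d>C).
\end{equation*}
Hence $\mu_\alpha^s\lesssim d^{-(p+1)s}=o(d^{-(p+1)s}\ln d)$, and no eigenvalue from a level $k\ge p+1$ is counted once $d\ge\mathfrak{C}$. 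The $\ln d$ factor in $\tilde\epsilon_2^2$ is precisely what makes this work for every choice of constant $C_2$. Therefore $K$ equals $\frac12\sum_{k=0}^{p}\sum_{j=1}^{N(d,k)}\ln\frac{18\mu_{k,j}^s}{\sigma^2\tilde\epsilon_2^2}$, and every summand is positive.

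\textbf{Step 2: the lower levels are negligible.} If $p=0$ there is nothing more to do. For $p\ge1$, I compare the contribution of the levels $k<p$ with that of level $p$.
\begin{itemize}
\item For $k<p$: each summand is at most $\ln\frac{18\mathfrak{C}_2^s d^{-ks}}{\sigma^2C_2d^{-(p+1)s}\ln d}=O(\ln d)$, and $N(d,k)\le\mathfrak{C}_2d^{k}$. So the total over all $k<p$ is $O(d^{p-1}\ln d)$.
\item For $k=p$: each summand is at least $\ln\frac{18\mathfrak{C}_1^sd^{s}}{\sigma^2C_2\ln d}\ge\frac{s}{2}\ln d$ for large $d$, and $N(d,p)\ge\mathfrak{C}_1d^p$. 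So the level-$p$ sum is at least $\frac{s\mathfrak{C}_1}{2}d^p\ln d$.
\end{itemize}
The ratio of the two contributions is $O(d^{-1})$. Hence for any $\epsilon_0>0$ there is $\mathfrak{C}$ (depending only on $s,p,\epsilon_0$ and the constants of Proposition \ref{prop:staircase}) such that the lower levels contribute at most $\epsilon_0$ times the level-$p$ sum. This gives exactly the claimed bound
\begin{equation*}
K\left(\sqrt{2}\sigma\tilde\epsilon_2/6\right)\le(1+\epsilon_0)\,\frac12\sum_{j=1}^{N(d,p)}\ln\frac{18\mu_{p,j}^s}{\sigma^2\tilde\epsilon_2^2}.
\end{equation*}

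The main obstacle is Step 1, namely ensuring that infinitely many higher-level eigenvalues are uniformly excluded. Proposition \ref{prop:staircase} is only stated for fixed $k$, so I will rely on the direct product bound from Assumption \ref{product structure assumption}, in particular condition \eqref{constant bound} together with $r_i\le C/d$. The remaining estimates are routine order comparisons.
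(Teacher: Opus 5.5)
Your proof is correct and takes the same route the paper intends; the paper itself only asserts the claim ``using Proposition~\ref{prop:staircase}''. The argument is that the threshold $\sigma^2\tilde\epsilon_2^2/18$ separates levels $k\le p$ from levels $k\ge p+1$ because of the $\ln d$ factor, and that the levels $k<p$ contribute only $O(d^{p-1}\ln d)$ against the $\Omega(d^p\ln d)$ level-$p$ sum. Your use of the product bound $\mu_\alpha\le C'(C/d)^{|\alpha|}$ from Assumption~\ref{product structure assumption} is a worthwhile tightening. It excludes all infinitely many levels $k\ge p+1$ uniformly, which Proposition~\ref{prop:staircase} cannot do because it only controls each fixed level.
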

Therefore, for any $d \geq \mathfrak{C}$, where $\mathfrak{C}$ is a constant only depending on $s, p$ and $\{a_{j}\}_{j \le p+1}$, we have
\begin{align}\label{lower case 2 eq 2}
    V_{K}\left(\tilde \epsilon_{2}, \mathcal{D} \right) &= V_{2}\left( \sqrt{2} \sigma \tilde \epsilon_{2}, \mathcal{B} \right) \\
    &\le K\left(\frac{\sqrt{2} \sigma \tilde \epsilon_{2}}{6}\right) \notag \\
    &\le (1+\varepsilon_0)\frac{1}{2}\sum_{j=1}^{N(d,p)}\ln\frac{18\mu_{p,j}^s}{\sigma^2\tilde\varepsilon_2^2} \notag \\
    &\le \left(1+\epsilon_{0} \right) \frac{1}{2} N(d,p)\ln\left(\frac{18 \mathfrak{C}_{2} d^{-ps}}{C_{2} \sigma^2  d^{-(p+1)s} }\right) \notag \\
    &= \left(1+\epsilon_{0} \right) \frac{1}{2} N(d,p) \left( \ln{\frac{18 \mathfrak{C}_{2}}{C_{2} \sigma^2 }} + s \ln{d} \right),
\end{align}
where we use Lemma \ref{claim:d_K_and_d_2} and Lemma \ref{lemma_entropy_of_RKHS} for the first line and Proposition \ref{prop:staircase} for the third line.

Using \eqref{lower case 1 eq 1} and \eqref{lower case 1 eq 2}, also recalling that we assume $ c_{1} d^{\gamma} \le n \le c_{2} d^{\gamma}$, we have
\begin{equation}\label{lower case 2 eq 3}
    \frac{V_K(\tilde\epsilon_2, \mathcal{D}) + n\tilde\epsilon_2^2 + \ln{2}}{V_2(\tilde\epsilon_1, \mathcal{B})} \le \frac{ \left(1+\epsilon_{0} \right) \frac{1}{2} N(d,p) \left( \ln{\frac{18 \mathfrak{C}_{2}}{C_{2} \sigma^2 }} + s \ln{d} \right) + c_{2} d^{\gamma} \cdot C_{2} d^{-(p+1)s} + \ln{2} }{ \frac{1}{2} N(d,p+1) \ln{\frac{\mathfrak{C}_{1}}{C_{1}}} }.
\end{equation}
The dominant terms in \eqref{lower case 2 eq 3} are:
\begin{equation}
    \frac{ c_{2} C_{2} d^{\gamma-(p+1)s}}{ \frac{1}{2} \ln{\frac{\mathfrak{C}_{1}}{C_{1}}} N(d,p+1)  \ln{d} }.
\end{equation}
Further noticing that $\gamma-(p+1)s \le p+1$ for any $ \gamma \in (p+ps+s, (p+1)+(p+1)s]$, so we can choose $ C_{2}$ small enough and only depending on $s, \sigma, \gamma, \kappa, c_{1}, c_{2} $, such that 
\begin{equation}
    \frac{V_K(\tilde\epsilon_2, \mathcal{D}) + n\tilde\epsilon_2^2 + \ln{2}}{V_2(\tilde\epsilon_1, \mathcal{B})} \le \ref{lower case 2 eq 3} := \mathfrak{c}< 1.
\end{equation}
Then using Lemma \ref{thm_lower_ultimate_tech} again, we have 
\begin{displaymath}
    \min _{\hat{f}} \max _{\rho_{f^{*}} \in \mathcal{D}} \mathbb{E}_{(x, \mathbf{y}) \sim \rho_{f^{*}}^{\otimes n}}
\left\|\hat{f} - f^{*}\right\|_{L^2}^2
\geq \frac{1 - \mathfrak{c}}{4} \tilde\epsilon_1^2 = \frac{1 - \mathfrak{c}}{4} C_{1} d^{-(p+1)s}.
\end{displaymath}
Further recalling that $ \mathcal{D} \subset \mathcal{P}$, we have 
\begin{equation}
    \min _{\hat{f}} \max _{\rho \in \mathcal{P}} \mathbb{E}_{(x, \mathbf{y}) \sim \rho^{\otimes n}} \left\|\hat{f} - f_{\rho}^{*}\right\|_{L^2}^2 \ge \min _{\hat{f}} \max _{\rho_{f^{*}} \in \mathcal{D}} \mathbb{E}_{(x, \mathbf{y}) \sim \rho_{f^{*}}^{\otimes n}} \left\|\hat{f} - f^{*}\right\|_{L^2}^2
\geq \frac{1 - \mathfrak{c}}{4} C_{1} d^{-(p+1)s}.
\end{equation}
We finish the proof of Theorem \ref{theorem lower bound} $\left( \text{\lowercase\expandafter{\romannumeral2}}\right)$.\\

\section{Experiments of Large Dimensional Gaussian Kernel}\label{sec:experiment}
We consider the distribution as $N(0,I_d)$, $k_d(x,x')=\exp(-\|x-x'\|^2_2/(2\ell^2d))$ for some positive constant $\ell$ and  the following two cases of true functions $f_\rho^*$:
\begin{itemize}
    \item[(i)] $f_\rho^*\in \mathcal{H}$. Following the setting of \cite{lu2023optimal}, we choose $f_\rho^*(x)=k_d(x,x_1)+k_d(x,x_2)+k_d(x,x_3)$, where $x_1,x_2,x_3$ are chosen randomly from $N(0,I_d)$.
    
    \item[(ii)] $f_\rho^* \in [\mathcal{H}]^s$, $s$ can be arbitrarily large. We choose $f_\rho^*(x)$ as the first eigenfunction of the large dimensional Gaussian kernel. We may denote $f_\rho^* \in [\mathcal{H}]^\infty$ without causing ambiguity.
\end{itemize}

We consider the following regression model $y=f_\rho^*(x)+\epsilon$, where $\epsilon\sim N(0,\sigma^2 I_d)$, $\sigma=0.1$. We draw $n=\lceil d^{1.5}\rceil$ samples from the model. The results are as follows.

\begin{figure}[h]
    \centering
    \subfigure[$f_\rho^*\in \mathcal{H}$, theoretical rate $=-1$]{\includegraphics[width=0.45\columnwidth]{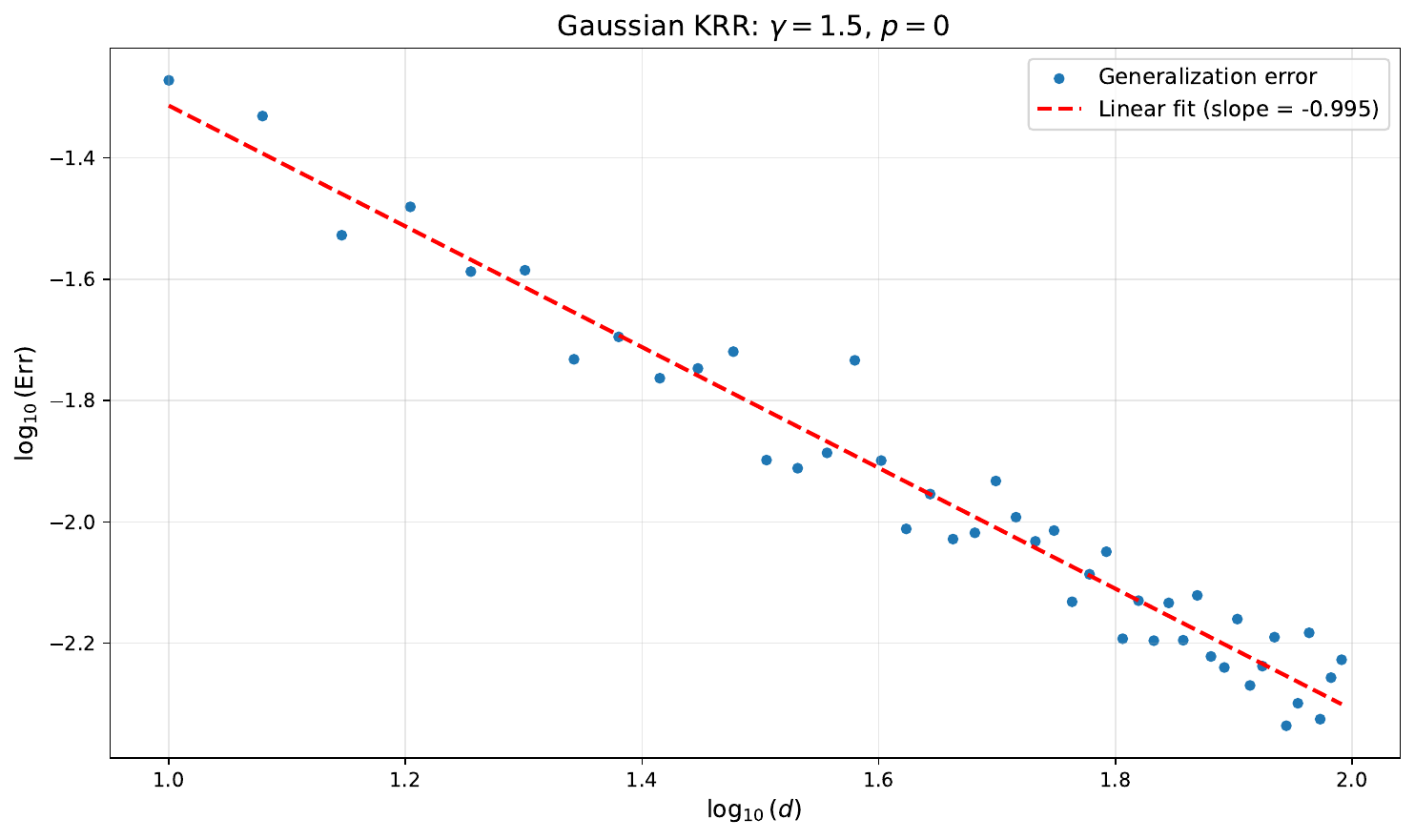}\label{experiment a}}
    \subfigure[$f_\rho^* \in \lbrack \mathcal{H} \rbrack^\infty$, theoretical rate $=-1.25$ ]{\includegraphics[width=0.45\columnwidth]{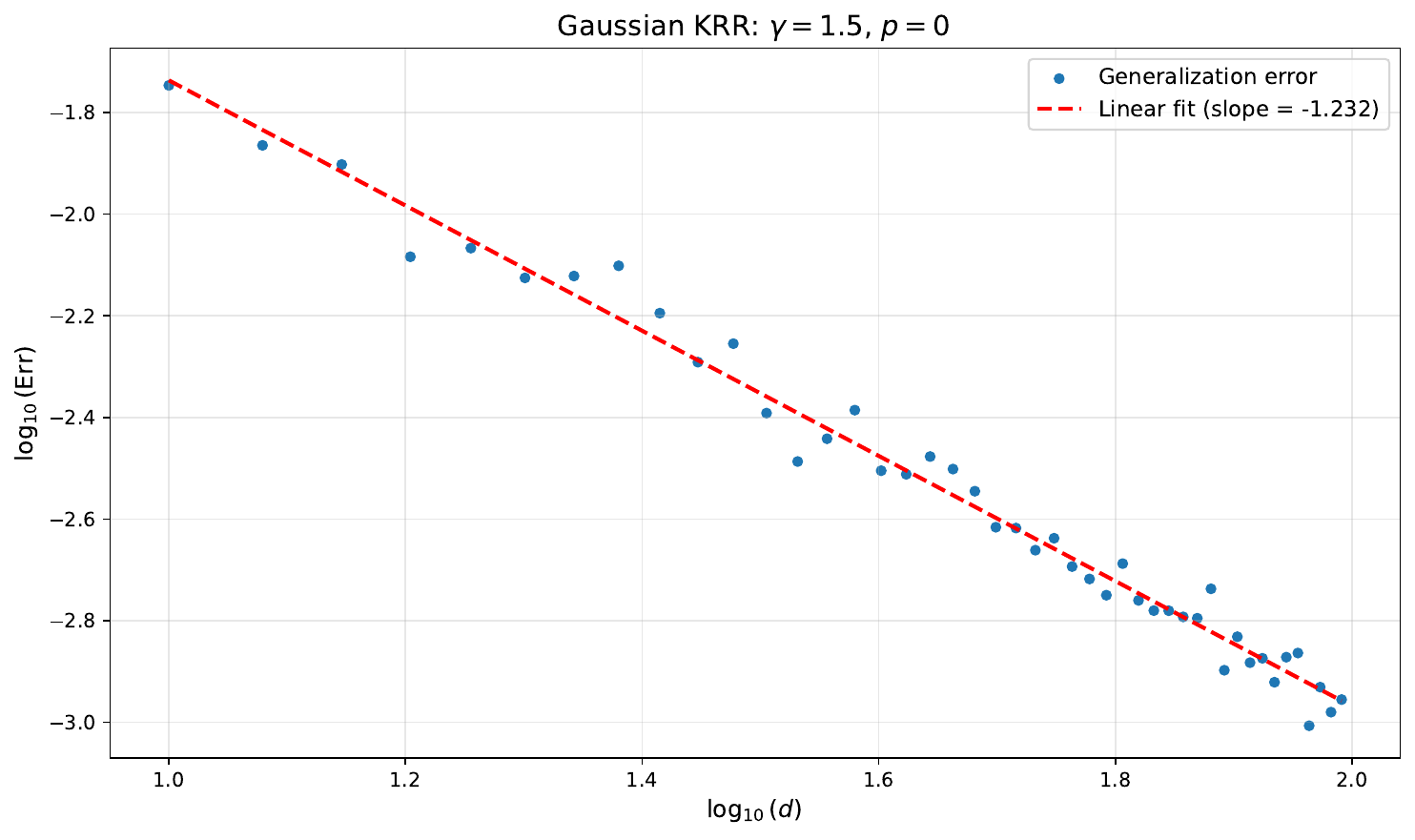}\label{experiment b}}
    \caption{Generalization error of KRR under large dimensional Gaussian kernel. Each data point of generalization error is the average of $30$ trials.}
    \label{experiment}
\end{figure}
As is shown in Figure \ref{experiment}, the empirical convergence rates align closely with the theoretical predictions in both cases. It can be observed that the convergence in Figure \ref{experiment a} is slower than that in Figure \ref{experiment b}, indicating that improved smoothness indeed enhances the rate of convergence. Moreover, the empirical rate in Figure \ref{experiment b} more closely approximates the theoretical exact rate of $-1.25$, rather than the minimax rate of $-1.5$, which suggests the presence of a saturation effect under this setting.

\section{Auxiliary Results}\label{section auxiliary}

The following proposition about estimating the $L^{2}$ norm with empirical norm is from \citet[Proposition C.9]{li2023_SaturationEffect}, which dates back to \cite{caponnetto2010cross}.
\begin{proposition}
    \label{prop:SampleNormEstimation}
    Let $\mu$ be a probability measure on $\mathcal{X}$, $f \in L^2(\mathcal{X},\mu)$ and $\|{f}\|_{L^\infty} \leq M$.
    Suppose we have ${x}_1,\dots,{x}_n$ sampled i.i.d.\ from $\mu$.
    Then for $\delta \in (0,1)$, the following holds with probability at least $1-\delta$:
    \begin{equation}\begin{aligned}
      \frac{1}{2}\|{f}\|_{L^2}^2 - \frac{5M^2}{3n}\ln \frac{2}{\delta} \leq \|{f}\|_{L^2,n}^2 \leq
      \frac{3}{2}\|{f}\|_{L^2}^2 + \frac{5M^2}{3n}\ln \frac{2}{\delta}.
    \end{aligned}\end{equation}
  \end{proposition}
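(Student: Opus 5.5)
The statement is Proposition \ref{prop:SampleNormEstimation}: a two-sided comparison between $\|f\|_{L^2,n}^2$ and $\|f\|_{L^2}^2$ for a bounded $f$. The plan is to apply Bernstein's inequality to the i.i.d. scalar variables $\xi_i = f(x_i)^2$, which satisfy $0\le \xi_i\le M^2$ and $\mathbb{E}\xi_i=\|f\|_{L^2}^2$. Then absorb the square-root (variance) term into a constant multiple of $\|f\|_{L^2}^2$ with the AM--GM inequality, which is where the factors $\tfrac12$ and $\tfrac32$ come from.

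\textbf{Moment bounds.} The centered variables $\xi_i-\mathbb{E}\xi_i$ are bounded in absolute value by $M^2$. Their variance satisfies
\[
\operatorname{Var}(\xi_i)\le \mathbb{E}\xi_i^2\le M^2\,\mathbb{E} f(x)^2 = M^2\|f\|_{L^2}^2 .
\]

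\textbf{Bernstein's inequality.} The two-sided Bernstein inequality gives, with probability at least $1-\delta$,
\[
\Bigl|\|f\|_{L^2,n}^2-\|f\|_{L^2}^2\Bigr|\le \frac{2M^2}{3n}\ln\frac{2}{\delta}+\sqrt{\frac{2M^2\|f\|_{L^2}^2}{n}\ln\frac{2}{\delta}} .
\]

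\textbf{Absorbing the square-root term.} Apply $\sqrt{ab}\le \tfrac{a}{2}+\tfrac{b}{2}$ with $a=\|f\|_{L^2}^2$ and $b=\tfrac{2M^2}{n}\ln\tfrac2\delta$. This gives
\[
\sqrt{\tfrac{2M^2\|f\|_{L^2}^2}{n}\ln\tfrac2\delta}\le \tfrac12\|f\|_{L^2}^2+\tfrac{M^2}{n}\ln\tfrac2\delta .
\]
The total additive remainder is then $\bigl(\tfrac23+1\bigr)\tfrac{M^2}{n}\ln\tfrac2\delta=\tfrac{5M^2}{3n}\ln\tfrac2\delta$, which matches the stated constant exactly. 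Rearranging the upper and lower deviations separately yields
\[
\tfrac12\|f\|_{L^2}^2-\tfrac{5M^2}{3n}\ln\tfrac2\delta\le\|f\|_{L^2,n}^2\le\tfrac32\|f\|_{L^2}^2+\tfrac{5M^2}{3n}\ln\tfrac2\delta .
\]

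\textbf{Main obstacle.} The only delicate point is bookkeeping the constants of the chosen Bernstein form, in particular whether its linear term is $\tfrac{2L}{3n}\ln\tfrac2\delta$ or $\tfrac{L}{3n}\ln\tfrac2\delta$, so that the final remainder lands at $\tfrac53$. If the sharper form is used instead, the same argument gives an even smaller constant, and the stated bound still holds.
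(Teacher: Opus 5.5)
Your proof is correct. Bernstein's inequality for $\xi_i=f(x_i)^2$, using $|\xi_i-\mathbb{E}\xi_i|\le M^2$ and $\operatorname{Var}(\xi_i)\le M^2\|f\|_{L^2}^2$, followed by $\sqrt{ab}\le \frac{a}{2}+\frac{b}{2}$, gives exactly the remainder $\frac{5M^2}{3n}\ln\frac{2}{\delta}$.

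The paper does not prove this statement itself; it imports it from Li et al.\ (2023, Proposition C.9), and your scalar-Bernstein-plus-AM--GM argument is the standard route behind that result. Note that the vector-valued Bernstein lemma stated in the paper carries a $4\sqrt{2}$ constant and would not yield $\frac{5}{3}$, so using the scalar form, as you did, is the right choice.
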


The following concentration inequality about self-adjoint Hilbert-Schmidt operator
valued random variables is frequently used in related literature, e.g., \citet[Theorem 27]{fischer2020_SobolevNorm} and \citet[Lemma 26]{lin2020_OptimalConvergence}.
\begin{lemma}\label{lemma concentration of operator}
   Let $(\Omega, \mathcal{B}, P)$ be a probability space, $\mathcal{H}$ be a separable Hilbert space. Suppose that $ A_{1}, \cdots, A_{n}$ are i.i.d. random variables with values in the set of self-adjoint Hilbert-Schmidt operators. If  $\mathbb{E} A_{i} = 0$, and the operator norm $ \| A_{i} \| \le L, P \text {-a.e.}$, and there exists a self-adjoint positive semi-definite trace class operator $V$ with $\mathbb{E} A_{i}^{2} \preceq V $. Then for $\delta \in (0,1)$, with probability at least $1 - \delta$, we have 
   \begin{equation}\begin{aligned}
        \left\| \frac{1}{n}\sum_{i=1}^n A_i \right\|
        \leq \frac{2L\beta}{3n} + \sqrt {\frac{2 \| V \| \beta}{n}},\quad
        \beta = \ln \frac{4 \rm{tr} V}{\delta \| V \|}. \notag
   \end{aligned}\end{equation}
\end{lemma}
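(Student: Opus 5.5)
The plan is to derive the statement from a dimension-free (intrinsic-dimension) version of the matrix Bernstein inequality, in the spirit of Tropp and Minsker, and then invert the tail bound. Set $S_n=\sum_{i=1}^n A_i$. The $A_i$ are independent, centered and self-adjoint, with $\|A_i\|\le L$ and $\mathbb{E}S_n^2=\sum_i\mathbb{E}A_i^2\preceq nV$. The aim is a tail bound of the form
\begin{equation*}
P\left(\|S_n\|\ge t\right)\le 4\,\frac{\operatorname{tr}V}{\|V\|}\exp\left(-\frac{t^2/2}{n\|V\|+Lt/3}\right),
\end{equation*}
valid for $t$ above a threshold of order $\sqrt{n\|V\|}+L/3$. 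Below that threshold the final bound is trivially satisfied once $\beta\ge 1$, and $\beta\ge\ln 4>1$ because $\operatorname{tr}V\ge\|V\|$.

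\textbf{Reduction and tail bound.} First I reduce to finite dimensions. Since the $A_i$ are Hilbert--Schmidt and $\mathcal{H}$ is separable, I project onto an increasing sequence of finite-dimensional subspaces $P_m\mathcal{H}$. The compressions $P_mA_iP_m$ keep the almost sure bound $L$, and they satisfy $\mathbb{E}(P_mA_iP_m)^2\preceq P_mVP_m$, so both $\operatorname{tr}$ and $\|\cdot\|$ of the variance proxy are controlled by those of $V$. Letting $m\to\infty$ by monotone convergence of the norms then gives the infinite-dimensional statement. In finite dimensions I bound the upper tail of $\lambda_{\max}(S_n)$, and by symmetry that of $\lambda_{\max}(-S_n)$, using the function $\psi(x)=e^{\theta x}-\theta x-1$. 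This $\psi$ is nonnegative, vanishes at $0$ and is convex, so Markov's inequality gives
\begin{equation*}
P(\lambda_{\max}(S_n)\ge t)\le \frac{\mathbb{E}\operatorname{tr}\psi(S_n)}{\psi(t)}.
\end{equation*}
Lieb's concavity theorem together with the scalar Bernstein moment bound $\mathbb{E}e^{\theta A_i}\preceq \exp\bigl(g(\theta)\mathbb{E}A_i^2\bigr)$, where $g(\theta)=\frac{\theta^2/2}{1-\theta L/3}$, yields
\begin{equation*}
\mathbb{E}\operatorname{tr}\psi(S_n)\le \operatorname{tr}\bigl(e^{g(\theta)nV}-I\bigr).
\end{equation*}
This is the step where the ambient dimension is replaced by $\operatorname{tr}V/\|V\|$.

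\textbf{Main obstacle.} The hardest part is this intrinsic-dimension step: bounding $\operatorname{tr}(e^{M}-I)\le \frac{\operatorname{tr}M}{\|M\|}\,e^{\|M\|}$ for $M=g(\theta)nV\succeq0$, which follows from convexity of $x\mapsto e^x-1$ on $[0,\|M\|]$. One then has to handle the extra factor $\psi(t)^{-1}e^{\theta t}\le 1+3/(\theta t)^2$ that $\psi$ introduces, and tune the constants so that the prefactor becomes exactly $4\operatorname{tr}V/\|V\|$ after combining both tails. I would choose $\theta=t/(n\|V\|+Lt/3)$, and this correction factor is bounded by $2$ precisely in the range of $t$ where the bound is nontrivial.

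\textbf{Inversion.} Finally I set the right-hand side equal to $\delta$. This requires solving the quadratic $t^2/2=\beta(n\|V\|+Lt/3)$ with $\beta=\ln\frac{4\operatorname{tr}V}{\delta\|V\|}$, whose positive root satisfies $t\le \frac{2L\beta}{3}+\sqrt{2n\|V\|\beta}$. Dividing by $n$ gives the stated bound on $\|\frac1n\sum_iA_i\|$.
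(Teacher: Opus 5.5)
Your proposal is correct and is essentially the argument behind this lemma, which the paper does not prove but imports from Fischer--Steinwart (Thm.~27) and Lin--Cevher (Lemma~26); those results rest on Tropp and Minsker's intrinsic-dimension matrix Bernstein inequality, extended to separable Hilbert spaces, and your two-sided prefactor $4$ is justified because at the inverted $t$ one has $\theta t = 2\beta \ge 2\ln 4 > \sqrt{3}$, so $1+3/(\theta t)^2 \le 2$. One small caveat: $\operatorname{tr}(P_mVP_m)/\|P_mVP_m\|$ need not be bounded by $\operatorname{tr}V/\|V\|$ for each $m$, but it converges to $\operatorname{tr}V/\|V\|$, so your limiting argument still closes (using Fatou and $P_mS_nP_m\to S_n$ in operator norm, which holds because $S_n$ is compact).
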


The following Bernstein inequality about vector-valued random variables is frequently used, e.g., \citet[Proposition 2]{caponnetto2007optimal} and \citet[Theorem 26]{fischer2020_SobolevNorm}.
\begin{lemma}[Bernstein inequality]\label{bernstein}
   Let $(\Omega,\mathcal{B},P)$ be a probability space, $H$ be a separable Hilbert space, and $\xi: \Omega \to H$ be a random variable with 
   \begin{displaymath}
     \mathbb{E}\|\xi\|_H^m \leq \frac{1}{2} m ! \sigma^2 L^{m-2},
   \end{displaymath}
   for all $m>2$. Then for $\delta \in (0,1)$, $\xi_{i}$ are i.i.d. random variables, with probability at least $1 - \delta$, we have
   \begin{displaymath}
       \left\|\frac{1}{n} \sum_{i=1}^n \xi_{i} - \mathbb{E} \xi\right\|_{H} \le 4\sqrt{2} \ln{\frac{2}{\delta}} \left(\frac{L}{n} + \frac{\sigma}{\sqrt{n}}\right).
   \end{displaymath}
\end{lemma}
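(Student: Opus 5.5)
We would prove Lemma~\ref{bernstein} by reducing it to the classical Bernstein-type tail bound for sums of independent, centered random vectors in a separable Hilbert space, due to Yurinskii and Pinelis--Sakhanenko. That bound states: if $\eta_1,\dots,\eta_n$ are i.i.d., $\mathbb{E}\eta_i=0$, and
\[
\mathbb{E}\|\eta_i\|_H^m\le \tfrac12 m!\,\tilde\sigma^2\tilde L^{m-2}\quad\text{for all } m\ge 2,
\]
then for every $t>0$,
\[
P\Big(\Big\|\sum_{i=1}^n\eta_i\Big\|_H\ge t\Big)\le 2\exp\Big(-\frac{t^2}{2(n\tilde\sigma^2+\tilde L t)}\Big).
\]

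\textbf{Step 1 (centering).} Set $\eta_i=\xi_i-\mathbb{E}\xi$. By convexity and Jensen, $\|\mathbb{E}\xi\|_H^m\le\mathbb{E}\|\xi\|_H^m$, so
\[
\mathbb{E}\|\eta\|_H^m\le 2^{m-1}\bigl(\mathbb{E}\|\xi\|_H^m+\|\mathbb{E}\xi\|_H^m\bigr)\le 2^m\,\mathbb{E}\|\xi\|_H^m\le \tfrac12 m!\,(2\sigma)^2(2L)^{m-2}.
\]
Hence the centered moment condition holds with $\tilde\sigma=2\sigma$ and $\tilde L=2L$. The case $m=2$ is handled the same way, or directly from $\mathbb{E}\|\eta\|^2\le\mathbb{E}\|\xi\|^2$, interpreting the hypothesis as including $m=2$, as is standard.

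\textbf{Step 2 (inverting the tail).} Set the right-hand side of the tail bound equal to $\delta$ and write $\ell=\ln(2/\delta)$. Solving the quadratic $t^2=2\ell(n\tilde\sigma^2+\tilde L t)$ gives
\[
t=\tilde L\ell+\sqrt{\tilde L^2\ell^2+2n\tilde\sigma^2\ell}\;\le\;2\tilde L\ell+\sqrt{2n\ell}\,\tilde\sigma.
\]
Dividing by $n$, with probability at least $1-\delta$,
\[
\Big\|\frac1n\sum_{i=1}^n\xi_i-\mathbb{E}\xi\Big\|_H\le\frac{4L\ell}{n}+\frac{2\sqrt2\,\sigma\sqrt{\ell}}{\sqrt n}.
\]
Since $\delta<1$, we have $\ell>\ln 2$, so $\sqrt{\ell}\le \ell/\sqrt{\ln 2}\le 1.21\,\ell$. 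Therefore $2\sqrt2\sqrt{\ell}\le 4\sqrt2\,\ell$ and $4\ell\le 4\sqrt2\,\ell$, which yields the stated bound $4\sqrt2\ln\frac2\delta\bigl(\frac Ln+\frac{\sigma}{\sqrt n}\bigr)$.

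\textbf{Main obstacle.} The substantive step is the Hilbert-space tail bound itself; everything else is bookkeeping. If it cannot simply be cited (\cite{caponnetto2007optimal}, Proposition 2), we would prove it with the Pinelis martingale approach. Let $S_k=\sum_{i\le k}\eta_i$ and $\theta>0$, and control $\mathbb{E}\cosh(\theta\|S_n\|)$. The key inequality uses the $2$-smoothness of the Hilbert norm:
\[
\mathbb{E}\bigl[\cosh(\theta\|S_{k-1}+\eta_k\|)\,\big|\,S_{k-1}\bigr]\le \cosh(\theta\|S_{k-1}\|)\Bigl(1+\mathbb{E}\bigl[e^{\theta\|\eta_k\|}-1-\theta\|\eta_k\|\bigr]\Bigr).
\]
It follows from expanding in the direction of $\eta_k$ and using $\mathbb{E}\eta_k=0$ to kill the first-order term. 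Summing the moment series with the hypothesis gives
\[
\mathbb{E}\bigl[e^{\theta\|\eta\|}-1-\theta\|\eta\|\bigr]\le\frac{\theta^2\tilde\sigma^2}{2(1-\theta\tilde L)}.
\]
Iterating over $k$ and applying Markov's inequality with the optimal choice $\theta=t/(n\tilde\sigma^2+\tilde Lt)$ then gives the exponential tail with the factor $2$ coming from $\cosh$. The delicate point is verifying the smoothness inequality for $\cosh\|\cdot\|$, where the convexity of $\cosh$ and the Hilbert (parallelogram) structure are essential.
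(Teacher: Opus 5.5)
Your argument is correct and is essentially the paper's route. The paper gives no proof of its own; it imports the lemma from \cite{caponnetto2007optimal} (Proposition 2) and \cite{fischer2020_SobolevNorm} (Theorem 26), which rest on this same centering step, the Pinelis--Sakhanenko tail bound, and inversion of the quadratic in $t$, and your bookkeeping even yields the constant $4$ in place of $4\sqrt{2}$.

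You are also right to read the moment hypothesis as holding for all $m\ge 2$: with only $m\ge 3$, nothing ties $\mathbb{E}\|\xi\|_H^2$ to $\sigma^2$, and the stated bound can fail.
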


\begin{lemma}\label{due embedding bound}
We have
   \begin{equation}\begin{aligned}\label{bound of Tk}
      \|T_{\lambda}^{-\frac{1}{2}} k({x},\cdot) \|_{\mathcal{H}}^{2} \le \frac{\kappa^2}{\lambda},~~ \mu \text {-a.e. } {x} \in \mathcal{X}.
  \end{aligned}\end{equation}
\end{lemma}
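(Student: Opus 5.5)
Since $T_\lambda^{-1/2}$ acts diagonally on the Mercer basis, the plan is to expand $k(x,\cdot)$ in that basis and compute the $\mathcal{H}$-norm term by term, as in Lemma \ref{lemma bias appr term}. By Mercer's theorem (valid under Assumption \ref{assumption kernel}) we have $k(x,\cdot)=\sum_{i\in N}\lambda_i e_i(x)e_i(\cdot)$, with convergence in $\mathcal{H}$ for $\mu$-a.e. $x$. The family $\{\lambda_i^{1/2}e_i\}_{i\in N}$ is an orthonormal basis of the closure of $\operatorname{Ran} S_k^*$ in $\mathcal{H}$, and $T$ acts on it by $T(\lambda_i^{1/2}e_i)=\lambda_i\cdot\lambda_i^{1/2}e_i$. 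Rewriting the expansion in this basis gives
\[
k(x,\cdot)=\sum_{i\in N}\lambda_i^{1/2}e_i(x)\,\bigl(\lambda_i^{1/2}e_i\bigr),
\qquad
T_\lambda^{-1/2}k(x,\cdot)=\sum_{i\in N}\frac{\lambda_i^{1/2}e_i(x)}{(\lambda_i+\lambda)^{1/2}}\,\bigl(\lambda_i^{1/2}e_i\bigr).
\]
Parseval's identity then yields
\[
\|T_\lambda^{-1/2}k(x,\cdot)\|_{\mathcal{H}}^2=\sum_{i\in N}\frac{\lambda_i}{\lambda_i+\lambda}e_i(x)^2 .
\]

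Next, the elementary bound $\frac{\lambda_i}{\lambda_i+\lambda}\le \frac{\lambda_i}{\lambda}$ gives
\[
\|T_\lambda^{-1/2}k(x,\cdot)\|_{\mathcal{H}}^2\le \frac{1}{\lambda}\sum_{i\in N}\lambda_i e_i(x)^2=\frac{k(x,x)}{\lambda}\le\frac{\kappa^2}{\lambda},
\]
where the equality is Mercer's identity on the diagonal and the last step is Assumption \ref{assumption kernel}. This proves the lemma.

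An alternative route needs no expansion at all. One has $\|T_\lambda^{-1/2}k_x\|_{\mathcal{H}}\le\|T_\lambda^{-1/2}\|\,\|k_x\|_{\mathcal{H}}\le\lambda^{-1/2}\kappa$, because $\|T_\lambda^{-1/2}\|\le\lambda^{-1/2}$ (as $T\succeq 0$) and $\|k_x\|_{\mathcal{H}}^2=k(x,x)\le\kappa^2$ by the reproducing property. This argument even holds for every $x$, not only $\mu$-a.e.

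The only potential obstacle is justifying the pointwise Mercer identity $\sum_i\lambda_ie_i(x)^2=k(x,x)$ on a possibly unbounded domain. The expansion route therefore requires the $\mu$-a.e. qualifier, since Mercer's theorem on general domains \citep{steinwart2012_MercerTheorem} gives this identity only almost everywhere in general. I would present the operator-norm argument as the main proof because it sidesteps this issue entirely, and mention the spectral computation as a remark.
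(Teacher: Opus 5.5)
Your proposal is correct. Your spectral computation is exactly the paper's proof: expand $T_\lambda^{-1/2}k(x,\cdot)$ in the basis $\{\lambda_i^{1/2}e_i\}$, apply Parseval, bound $\frac{\lambda_i}{\lambda_i+\lambda}\le\frac{\lambda_i}{\lambda}$, and use $\sum_{i\in N}\lambda_ie_i(x)^2=k(x,x)\le\kappa^2$.

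The argument you would lead with, $\|T_\lambda^{-1/2}k(x,\cdot)\|_{\mathcal{H}}\le\|(T+\lambda)^{-1/2}\|\,\|k(x,\cdot)\|_{\mathcal{H}}\le\lambda^{-1/2}\kappa$, is a genuinely different and more elementary route. It uses only $T\succeq 0$ and the reproducing property $\|k(x,\cdot)\|_{\mathcal{H}}^2=k(x,x)$. It holds for every $x$ rather than $\mu$-a.e., and it needs neither Mercer's theorem on an unbounded domain nor the fact that $k(x,\cdot)\in\overline{\operatorname{Ran}S_k^*}$.

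That membership is exactly what the diagonal identity encodes. The quantity $k(x,x)-\sum_i\lambda_ie_i(x)^2$ is the squared norm of the component of $k(x,\cdot)$ in $\ker S_k$. Its $\mu$-integral is $\operatorname{tr}T-\sum_i\lambda_i=0$, which is why the paper's expansion, and its qualifier, hold only $\mu$-a.e.

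What the paper's route buys is the exact identity $\|T_\lambda^{-1/2}k(x,\cdot)\|_{\mathcal{H}}^2=\sum_{i\in N}\frac{\lambda_i}{\lambda_i+\lambda}e_i(x)^2$. Its $\mu$-integral is $\mathcal{N}_1(\lambda)$, and it is the form you would need if you wanted to exploit pointwise eigenfunction information. This paper deliberately avoids eigenfunction assumptions, and downstream (Lemma \ref{emb norm}, Lemma \ref{lemma approximation A}, and the bias bounds) it only ever uses the crude bound $\kappa^2/\lambda$. So your operator-norm proof loses nothing here and is the cleaner choice.
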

\begin{proof}
   \begin{equation}\begin{aligned}
      \|T_{\lambda}^{-\frac{1}{2}} k({x},\cdot) \|_{\mathcal{H}}^{2} &= \Big\| \sum\limits_{i\in N} ( \frac{1}{\lambda_{i} + \lambda})^{\frac{1}{2}} \lambda_{i} e_{i}({x}) e_{i}(\cdot)  \Big\|_{\mathcal{H}}^{2} \notag \\
      &=  \sum_{i \in N}  \frac{\lambda_{i}}{\lambda_{i} + \lambda} e_{i}^{2}({x}) \notag \\
      &\leq \sum_{i\in N}\frac{1}{\lambda}\lambda_ie_i^2(x)\\
      & \le \frac{\kappa^2}{\lambda}, \quad \mu \text {-a.e. } {x} \in \mathcal{X}. \notag  
  \end{aligned}\end{equation}
  The last inequality is due to the fact that $\sum_{i \in N}\lambda_ie_i^2(x)=k(x,x)\leq \sup_{x\in \mathcal{X}}k(x,x)\leq \kappa^2.$
\end{proof}

Lemma \ref{due embedding bound} has a direct corollary.
\begin{lemma}\label{emb norm}
Given the definition of $\mathcal{N}_{1}(\lambda)$ as in \eqref{n1 n2 m1 m2}. We have
\begin{displaymath}
    \| T_{\lambda}^{-\frac{1}{2}} T_{{x}} T_{\lambda}^{-\frac{1}{2}}\| \le \frac{\kappa^2}{\lambda}, \quad \mu \text {-a.e. } {x} \in \mathcal{X}.
\end{displaymath}
\end{lemma}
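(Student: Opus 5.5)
The plan is to bound $\|T_\lambda^{-1/2}T_xT_\lambda^{-1/2}\|$ directly through Lemma \ref{due embedding bound}. The operator $T_x=K_xK_x^*$ is rank one: it acts by $f\mapsto f(x)k(x,\cdot)=\langle k(x,\cdot),f\rangle_{\mathcal{H}}\,k(x,\cdot)$. Conjugating by the self-adjoint operator $T_\lambda^{-1/2}$ therefore gives
\[
T_\lambda^{-\frac12}T_xT_\lambda^{-\frac12} f=\bigl\langle T_\lambda^{-\frac12}k(x,\cdot),f\bigr\rangle_{\mathcal{H}}\,T_\lambda^{-\frac12}k(x,\cdot),
\]
so the conjugated operator is again rank one, of the form $g\otimes g$ with $g=T_\lambda^{-1/2}k(x,\cdot)\in\mathcal{H}$.

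The key step is the norm identity for such operators. A rank-one positive operator $g\otimes g$ has operator norm equal to $\|g\|_{\mathcal{H}}^2$. One direction follows from Cauchy--Schwarz, $\|\langle g,f\rangle g\|_{\mathcal{H}}\le\|g\|_{\mathcal{H}}^2\|f\|_{\mathcal{H}}$, and equality is attained at $f=g/\|g\|_{\mathcal{H}}$. Hence
\[
\|T_\lambda^{-\frac12}T_xT_\lambda^{-\frac12}\|=\|T_\lambda^{-\frac12}k(x,\cdot)\|_{\mathcal{H}}^2\le\frac{\kappa^2}{\lambda}
\]
for $\mu$-a.e.\ $x$, where the inequality is exactly Lemma \ref{due embedding bound}.

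The only point needing care is to justify that $T_\lambda^{-1/2}$ commutes with taking the inner product. This holds because $T_\lambda^{-1/2}$ is a bounded self-adjoint operator on $\mathcal{H}$, since $T$ is positive and $\lambda>0$. There is no real obstacle here. As an aside, the same argument yields $\mathrm{tr}$-bounds, and it gives the bound $\|A_i\|\le\mathcal{N}_1(\lambda)+\kappa^2/\lambda$ used earlier via $\|T_\lambda^{-1/2}TT_\lambda^{-1/2}\|\le1$, but only the operator-norm statement is needed here.
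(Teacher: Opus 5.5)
Your proof is correct and follows the paper's argument: you write the conjugated operator as the rank-one map $f\mapsto\langle g,f\rangle_{\mathcal{H}}\,g$ with $g=T_\lambda^{-1/2}k(x,\cdot)$, identify its norm as $\|g\|_{\mathcal{H}}^2$, and invoke Lemma \ref{due embedding bound}. Your closing aside is not needed and is slightly off: $\|T_\lambda^{-1/2}TT_\lambda^{-1/2}\|\le 1$ gives $1+\kappa^2/\lambda$, whereas $\mathcal{N}_1(\lambda)+\kappa^2/\lambda$ comes from bounding that operator norm by its trace, $\operatorname{tr}\bigl(T_\lambda^{-1/2}TT_\lambda^{-1/2}\bigr)=\mathcal{N}_1(\lambda)$.
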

\begin{proof}
    Note that for any $f \in \mathcal{H}$,
  \begin{equation}\begin{aligned}
      T_{\lambda}^{-\frac{1}{2}} T_{{x}} T_{\lambda}^{-\frac{1}{2}} f &= T_{\lambda}^{-\frac{1}{2}} K_{{x}} K_{{x}}^{*}  T_{\lambda}^{-\frac{1}{2}} f \notag \\
      &= T_{\lambda}^{-\frac{1}{2}} K_{{x}} \langle k({x},\cdot), T_{\lambda}^{-\frac{1}{2}} f \rangle_{\mathcal{H}} \notag \\
      &= T_{\lambda}^{-\frac{1}{2}} K_{{x}} \langle T_{\lambda}^{-\frac{1}{2}} k({x},\cdot),  f \rangle_{\mathcal{H}} \notag \\
      &=  \langle T_{\lambda}^{-\frac{1}{2}} k({x},\cdot),  f \rangle_{\mathcal{H}} \cdot T_{\lambda}^{-\frac{1}{2}} k({x},\cdot). \notag
  \end{aligned}\end{equation}
  So $\| T_{\lambda}^{-\frac{1}{2}} T_{{x}} T_{\lambda}^{-\frac{1}{2}} \| = \sup\limits_{\| f\|_{\mathcal{H}}=1} \| T_{\lambda}^{-\frac{1}{2}} T_{{x}} T_{\lambda}^{-\frac{1}{2}} f\|_{\mathcal{H}} = \sup\limits_{\| f\|_{\mathcal{H}}=1} \langle T_{\lambda}^{-\frac{1}{2}} k({x},\cdot),  f \rangle_{\mathcal{H}} \cdot \|T_{\lambda}^{-\frac{1}{2}} k({x},\cdot) \|_{\mathcal{H}} = \|T_{\lambda}^{-\frac{1}{2}} k({x},\cdot) \|_{\mathcal{H}}^{2}$. 
  Using Lemma \ref{due embedding bound}, we finish the proof.
\end{proof}

The following theorem is borrowed from \citet[Theorem 42]{zhang2024optimal}. While they provided the proof of the theorem on compact set $\mathcal{X}$, we check their proof carefully and find that their proof can be easily extended to unbounded domains.
\begin{theorem}[$L^{q}$-embedding property]\label{integrability of Hs constants}
  Suppose that $\mathcal{H}$ is the RKHS associated with a continuous, positive-definite and symmetric kernel $k$ on $\mathcal{X}\subset \mathbb{R}^{d}$ and the probability distribution on $\mathcal{X}$ is $\mu$. Further suppose that $ \sup\limits_{x \in \mathcal{X}} |k(x,x)| \le \kappa^{2}$, where $ \kappa$ is an absolute constant. Then for any $0 < s < 1$, we have 
  \begin{align}
      [\mathcal{H}]^{s} \hookrightarrow L^{q_{s}}(\mathcal{X}, \mu),\quad \forall q_{s} < \frac{2}{1 - s}, 
  \end{align}
  and there exists a constant $C_{s,\kappa}$ only depending on $s$ and $\kappa$, such that the operator norm of the embedding operator satisfies
  \begin{equation}
      \left\| [\mathcal{H}]^{s} \hookrightarrow L^{q_{s}}(\mathcal{X}, \mu) \right\| \le C_{s,\kappa}.
  \end{equation}
\end{theorem}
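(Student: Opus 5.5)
The plan is to bypass abstract interpolation theory and prove the embedding directly, via a weak-type estimate obtained by splitting the spectrum at a level-dependent threshold. Fix $0<s<1$ and set $q^*:=\frac{2}{1-s}$. Take $f\in[\mathcal{H}]^s$ with $\|f\|_{[\mathcal{H}]^s}\le 1$, so $f=\sum_{i\in N}a_i\lambda_i^{s/2}e_i$ with $\sum_i a_i^2\le 1$; the series converges in $L^2$, and $\|f\|_{L^2}^2=\sum_i a_i^2\lambda_i^{s}\le \lambda_0^s\le\kappa^{2s}$. Boundedness of the domain $\mathcal{X}$ plays no role in this argument. Only two facts are used: $\mu$ is a probability measure, and every $g\in\mathcal{H}$ satisfies $|g(x)|\le\kappa\|g\|_{\mathcal{H}}$ for all $x$.

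\textbf{Step 1 (spectral splitting).} For a threshold $\tau>0$, write $f=f_{>\tau}+f_{\le\tau}$, where $f_{>\tau}$ collects the indices with $\lambda_i>\tau$. Since $\sum_i\lambda_i\le\int k(x,x)\,d\mu\le\kappa^2$, there are at most $\kappa^2/\tau$ such indices, so $f_{>\tau}$ is a finite sum and lies in $\mathcal{H}$. Its norm satisfies
\[
\|f_{>\tau}\|_{\mathcal{H}}^2=\sum_{\lambda_i>\tau}a_i^2\lambda_i^{s-1}\le\tau^{s-1},
\]
because $s-1<0$. Hence $\|f_{>\tau}\|_{L^\infty}\le\kappa\,\tau^{-(1-s)/2}$. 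For the tail, $\|f_{\le\tau}\|_{L^2}^2=\sum_{\lambda_i\le\tau}a_i^2\lambda_i^s\le\tau^s$.

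\textbf{Step 2 (weak-type bound).} Given a level $u>0$, choose $\tau$ so that $\kappa\tau^{-(1-s)/2}=u$, i.e.\ $\tau=(u/\kappa)^{-2/(1-s)}$. Then $|f_{>\tau}|\le u$ everywhere, so $\{|f|>2u\}\subseteq\{|f_{\le\tau}|>u\}$ up to a null set. Chebyshev's inequality gives
\[
\mu(|f|>2u)\le \frac{\tau^s}{u^2}=\kappa^{\frac{2s}{1-s}}\,u^{-2-\frac{2s}{1-s}}=\kappa^{\frac{2s}{1-s}}\,u^{-q^*}.
\]
Thus $f$ lies in weak $L^{q^*}$, with a quasi-norm controlled by a constant depending only on $s$ and $\kappa$.

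\textbf{Step 3 (strong $L^{q}$ for $q<q^*$).} Use the layer-cake formula $\int|f|^q\,d\mu=\int_0^\infty q u^{q-1}\mu(|f|>u)\,du$. Bound $\mu(\cdot)\le1$ on $[0,1]$, and use Step 2 on $[1,\infty)$; the resulting integral $\int_1^\infty u^{q-1-q^*}\,du$ converges precisely because $q<q^*$. This yields $\|f\|_{L^q}\le C_{s,\kappa,q}$, and the embedding operator is bounded with norm at most that constant, uniformly in $d$ and in the underlying measure. It depends on $q_s$ as well, which is fixed in the statement.

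\textbf{Main obstacle.} The step needing the most care is the pointwise bookkeeping. The series for $f$ converges only in $L^2$, so $f$ must be identified with a $\mu$-a.e.\ representative. The containment of level sets in Step 2 then holds $\mu$-a.e., which is all Chebyshev's inequality requires. The sup-norm bound on $f_{>\tau}$ itself holds everywhere, since $f_{>\tau}\in\mathcal{H}$ and $|g(x)|\le\kappa\|g\|_{\mathcal{H}}$ uses only $\sup_x k(x,x)\le\kappa^2$. Compactness of $\mathcal{X}$ is never invoked, which is exactly why the result extends from the compact setting of \cite{zhang2024optimal} to unbounded domains.
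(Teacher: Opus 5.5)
Your argument is correct, and it takes a genuinely different route from the one the paper relies on. The paper gives no proof of its own. It imports Theorem 42 of \cite{zhang2024optimal}, stated there for compact $\mathcal{X}$, and only asserts that the argument extends to unbounded domains. The standard proof of this embedding is by real interpolation: identify $[\mathcal{H}]^s$ with $(L^2,\mathcal{H})_{s,2}$, use $\mathcal{H}\hookrightarrow L^\infty$ with norm at most $\kappa$, and conclude $[\mathcal{H}]^s\hookrightarrow(L^2,L^\infty)_{s,2}=L^{q^*,2}\hookrightarrow L^{q^*}$.

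Your spectral splitting is a hand-made version of the $K$-functional decomposition behind that argument. What it buys is that every ingredient is visibly independent of $d$, $\mathcal{X}$ and $\mu$. Those ingredients are Bessel's inequality $\sum_i\lambda_i e_i(x)^2\le k(x,x)$, $L^2$-orthonormality of the $e_i$, and $|g(x)|\le\kappa\|g\|_{\mathcal{H}}$. So your proof actually supplies the justification for unbounded domains that the paper only asserts.

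What it loses is the endpoint. Passing from weak $L^{q^*}$ to $L^q$ via the layer-cake formula gives a constant that blows up like $(q^*-q)^{-1/q}$ as $q\uparrow q^*$. So ``$C_{s,\kappa}$ depending only on $s,\kappa$'' holds only for a fixed $q_s$, as you note. That suffices for the paper's single use, $q=\frac{2}{1-s}-\epsilon_2$.

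You can remove this caveat cheaply. Do not replace $\|f_{\le\tau(u)}\|_{L^2}^2$ by $\tau^s$; keep it as $\sum_{\lambda_i\le\tau(u)}a_i^2\lambda_i^s$. Observe that $\lambda_i\le\tau(u)$ iff $u\le\kappa\lambda_i^{-(1-s)/2}$, and apply Tonelli in the layer-cake integral at $q=q^*$. The inner integral converges at $0$ because $q^*>2$, and this gives
\[
\int|f|^{q^*}\,d\mu\le q^*2^{q^*}\sum_i a_i^2\lambda_i^s\int_0^{\kappa\lambda_i^{-(1-s)/2}}u^{q^*-3}\,du=\frac{2^{q^*}\kappa^{q^*-2}}{s}\sum_i a_i^2 .
\]
Hence $\|f\|_{L^{q^*}}\le 2\kappa^{s}s^{-(1-s)/2}$ whenever $\|f\|_{[\mathcal{H}]^s}\le1$. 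Since $\mu$ is a probability measure, this also bounds $\|f\|_{L^q}$ uniformly for all $q\le q^*$. That is the same conclusion as the interpolation proof, still with no compactness used anywhere.
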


\end{document}